\documentclass{article}

\PassOptionsToPackage{numbers, compress}{natbib}
 \usepackage[preprint]{neurips_2026}

\usepackage[utf8]{inputenc} 
\usepackage[T1]{fontenc}    
\usepackage{hyperref}       
\usepackage{url}            
\usepackage{booktabs}       
\usepackage{amsfonts}       
\usepackage{nicefrac}       
\usepackage{microtype}      
\usepackage{xcolor}         
\usepackage{amsmath}
\usepackage{amsthm}
\usepackage{amssymb}
\usepackage{graphicx}
\usepackage{subfigure}
\usepackage{wrapfig}
\usepackage{multirow}
\usepackage{algorithm}
\usepackage{algorithmic}
\usepackage{caption}

\newtheorem{theorem}{Theorem}[section]
\newtheorem{proposition}[theorem]{Proposition}

\title{Diffusion Subgoal Planning for Long-Horizon Offline Goal-Conditioned
Reinforcement Learning}

\author{%
  Hengrui Zhang\textsuperscript{\rm 1}~~Yuhu Cheng\textsuperscript{\rm 1}~~C. L. Philip Chen\textsuperscript{\rm 2}~~Xuesong Wang\textsuperscript{\rm 1}\thanks{Corresponding author.} 
  \vspace{0.2cm}
  \\
  $^1$School of Information and Control Engineering, China University of Mining and Technology 
  \\ 
  $^2$School of Computer Science and Engineering, South China University of Technology \vspace{0.1cm}
  \\
  \texttt{\{hengruizhang, chengyuhu, wangxuesong\}@cumt.edu.cn} 
  \\
  \texttt{\{philip.chen\}@ieee.org}
  }

\begin{document}

\maketitle

\begin{abstract}
Offline goal-conditioned reinforcement learning (GCRL) learns goal-directed policies from reward-free data, but in long-horizon tasks, goal-conditioned value functions often provide unstable guidance due to sparse rewards and discounting. Hierarchical methods partially mitigate this issue via subgoal decomposition; however, high-level decision-making still relies on noise-sensitive value estimates, leading to unstable behavior in complex environments. We address this limitation by proposing \textbf{D}iffusion \textbf{S}ubgoal \textbf{P}lanning (\textbf{DSP}), a diffusion-based framework for high-level subgoal generation. DSP casts high-level planning as guided generative inference over goal-conditioned subgoals and learns both conditional and unconditional flows, enabling classifier-free guidance to introduce a goal-directed bias at inference time. By removing explicit value-based guidance from high-level planning, DSP generates reachable and goal-directed subgoals through a generative model while retaining hierarchical execution. Experiments on offline GCRL benchmarks demonstrate that DSP outperforms prior methods on a range of navigation and manipulation tasks, with particularly strong performance in maze environments that require multi-step subgoal planning.
\end{abstract}

\begin{center}
\vspace{-0.2cm}
{
\textbf{Project page:} \href{https://henry0132.github.io/DSP/}{\texttt{https://henry0132.github.io/DSP}} \\
\vspace{0.2cm}
\textbf{Repository:} 
\href{https://github.com/Henry0132/DSP}
{\texttt{https://github.com/Henry0132/DSP}}
}
\end{center}

\section{Introduction}
\label{sec:1}

Goal-conditioned reinforcement learning (GCRL) formulates decision-making problems in terms of desired outcomes, reducing the need to manually design reward functions in high-dimensional spaces and enabling task generalization \cite{GCSL, GoalTutorial, GCPO}. Offline GCRL \cite{OGBench, SMORE} further extends this setting to learning solely from reward-free trajectories, resembling self-supervised learning from past experience. However, without explicit rewards, desired goals are often temporally distant, and value function estimation can suffer from limited data coverage and accumulated errors \cite{HIQL, Pi-HIQL, ProQ}, resulting in noisy and unreliable learning signals in long-horizon tasks.

Hierarchical Implicit Q-Learning (HIQL) \cite{HIQL} addresses long-horizon challenges by introducing a two-level policy, where a high-level policy proposes intermediate subgoals and a low-level policy executes actions conditioned on them. While this hierarchy improves stability over flat policies, large-scale offline GCRL evaluations show that in tasks involving complex navigation, spatial search, or high-dimensional manipulation, HIQL can struggle to obtain reliable guidance from noisy value functions, leading to degraded performance \cite{OGBench, OTA, TTGS}. Subsequent analyses attribute these failures in part to imprecise value guidance and the resulting errors in high-level planning \cite{SHARSA, OTA}. Since hierarchical performance depends critically on subgoal quality, several methods seek to improve value estimation itself, such as Pi-HIQL \cite{Pi-HIQL}, which enforces geometric structure via Eikonal regularization, and OTA \cite{OTA}, which promotes hierarchical consistency through option-aware mechanisms.

However, these approaches share a common assumption that high-level planning should be driven by value function estimates. When subgoals are selected through noisy values, long-horizon planning can remain unstable. This naturally raises a central question:

\begin{quote}
    \emph{Can hierarchical planning be retained without value-based high-level guidance?}
\end{quote}

Diffusion models \cite{DDPM, Score_SDE} provide a promising alternative to value-based planning by generating reachable intermediate states that reflect the structure of the offline data distribution \cite{Diffuser, DD}. Moreover, goal-directed preferences can be incorporated into the sampling process via guidance mechanisms such as classifier-free guidance (CFG) \cite{CFG}, enabling controllable subgoal generation without explicit high-level value estimation.

Motivated by these properties, we propose \textbf{D}iffusion \textbf{S}ubgoal \textbf{P}lanning (\textbf{DSP}), a diffusion-based framework for high-level subgoal generation in offline GCRL. DSP formulates high-level planning as guided generative inference over goal-conditioned subgoals, enabling high-level decision-making without explicit value-based guidance while preserving the benefits of hierarchical execution. Concretely, DSP learns both conditional and unconditional velocity fields, and uses classifier-free guidance at inference time to bias subgoal generation toward the desired goal. This guidance mechanism further admits an implicit advantage-weighted interpretation at the subgoal level. Experiments on offline GCRL benchmarks demonstrate that DSP outperforms prior methods on a range of navigation and manipulation tasks, with particularly strong gains in long-horizon maze environments that require multi-step subgoal planning.

\section{Related Works}
\label{sec:2}

\textbf{Offline GCRL.} Offline goal-conditioned reinforcement learning aims to learn a universal policy that reaches arbitrary target states from arbitrary initial states using a fixed dataset \citep{UVFA, GCSL, GoFresh, OGBench}. Most approaches treat goals as future states and adopt self-supervised training schemes, including hindsight relabeling \citep{HER, CHER}, state occupancy matching \citep{SMORE, CRL}, and related goal relabeling methods \citep{GoFAR, Goplan}. Recent work further incorporates hierarchical structures to decompose complex tasks into multi-step plans via recursive subgoal inference \citep{HIQL, Pi-HIQL, OTA, CGCIVL, HGC-ORL}. Despite differences in formulation, these methods share a common assumption that high-level subgoals are optimized using learning signals derived from goal-conditioned value functions, making noisy value estimates a central bottleneck for reliable long-horizon planning in offline settings.

\textbf{Hierarchical RL.} Hierarchical reinforcement learning introduces decision-making at multiple temporal scales to address long-horizon planning and exploration \citep{LSP-HRL, RIS, Lisa}. Early approaches rely on graph-based subgoal planning \citep{HRAC, HIGL}, while later methods generate intermediate waypoints using temporal distance \citep{SAGAS}, latent abstractions \citep{CoGHP, Lisa}, or value-based criteria \citep{OTA, Guider}. Although these approaches aim to simplify planning by defining reachable high-level subgoals, graph-based methods are often computationally expensive and difficult to scale \citep{World_Graph, DHRL, MBP+Aggregation}, while waypoint prediction methods that depend on value functions or distance estimates are highly sensitive to estimation accuracy, particularly under sparse data coverage \citep{CRL}. As a result, existing hierarchical RL methods can remain sensitive to value-function noise, particularly in offline settings \citep{OGBench}.

\textbf{Diffusion Policies for RL.} Diffusion models have been adopted in reinforcement learning as expressive generative models \citep{DDPM, DDIM, NCSN, Score_SDE, FM, RF}, with applications to action generation \citep{DiffusionQL, QGPO, SRPO, IDQL}, trajectory modeling \citep{DD, SkillDiffuser}, and goal-conditioned or planning-oriented generation \citep{Merlin, PG, MODULI}. Recent methods further use guidance or flow matching to improve action-level policy learning and controllability \citep{CFGRL, FQL}. Other works explore hierarchical diffusion-based planning or subgoal generation \citep{HDMI, HD, SIHD}, where diffusion models are typically used as trajectory or milestone planners \citep{SubgoalDiffuser, DTAMP}, often combined with MPC, return guidance, or multi-step planning procedures. In contrast, DSP uses guided generation as a high-level subgoal decision mechanism in hierarchical offline GCRL, replacing explicit value-based subgoal selection while retaining a separately trained low-level executor.

\section{Preliminaries}
\label{sec:3}

\textbf{Problem Setting.} Offline GCRL is formulated as a finite-horizon discounted Markov decision process $\mathcal{M} = (\mathcal{S}, \mathcal{A}, \mathcal{G}, \mathcal{P}, r, \mathcal{P}_g, d_0, \gamma)$, where $\mathcal{S}$, $\mathcal{A}$, and $\mathcal{G}$ denote the state, action, and goal spaces, respectively. The transition dynamics are given by $\mathcal{P}(s' \mid s, a)$, and the reward function is $r(s, g)$. The initial state distribution $d_0$ and goal distribution $\mathcal{P}_g$ specify how episodes are initialized. The discount factor is $\gamma \in [0, 1)$. Following standard practice, we assume $\mathcal{G} = \mathcal{S}$. At the beginning of each episode, a goal $g \sim \mathcal{P}_g$ is sampled, and the agent aims to reach $g$ by maximizing the expected discounted return $J(\pi)=\mathbb{E}_{\tau \sim \pi(\cdot \mid g)}\left[\sum_{t=0}^{T} \gamma^t r(s_t, g)\right]$, where $\tau = (s_0, a_0, s_1, a_1, \ldots, s_T)$ is a trajectory induced by a goal-conditioned policy $\pi(a \mid s, g)$. The corresponding goal-conditioned value function is $V^\pi(s, g)=\mathbb{E}_{\tau \sim \pi(\cdot \mid g)}\left[\sum_{t=0}^{T} \gamma^t r(s_t, g)\;\middle|\;s_0 = s\right].$ In the offline setting, the policy is learned solely from a fixed dataset $\mathcal{D}$ of trajectories, without further interaction with the environment.

\textbf{Hierarchical Implicit Q-Learning.} In GCRL, accurately estimating value functions for distant goals is particularly challenging in long-horizon tasks. HIQL \cite{HIQL} addresses this issue by introducing a hierarchical policy structure built on implicit Q-learning \cite{IQL}, which directly learns value estimates from offline data. Specifically, HIQL learns a goal-conditioned value function by minimizing the following expectile regression objective:
\begin{align}
\label{equ:hiql_value}
    \mathcal{L}_V
    =
    \mathbb{E}_{(s_t,s_{t+1}) \sim \mathcal{D},\, g \sim p^{\mathcal{D}}}
    \left[
        L_2^\varepsilon
        \left(
            r(s_t,g)
            + \gamma \bar{V}(s_{t+1},g)
            - V(s_t,g)
        \right)
    \right],
\end{align}
where $p^{\mathcal{D}}$ denotes the goal sampling distribution, $\bar V$ is a target value network, and $L_2^\varepsilon$ is the expectile loss 
\begin{align}
\label{equ:expectile}
    L_2^\varepsilon(u)
    =
    \left|\varepsilon - \mathbf{1}(u < 0)\right| u^2,
\end{align}
with expectile parameter $\varepsilon$. Following prior work \cite{WGCSL, GoFAR, GOAT}, the reward is defined as
$r(s_t, g) = - \mathbf{1}(s_t \neq g)$. HIQL decomposes the policy into two levels. The high-level policy $\pi^h(s_{t+k} \mid s_t, g)$ predicts a subgoal $k$ steps ahead, while the low-level policy $\pi^l(a_t \mid s_t, s_{t+k})$ executes actions to reach the proposed subgoal. Both policies are trained using advantage-weighted regression (AWR):
\begin{align}
\label{equ:hiql_high}
    \mathcal{J}(\pi^h)
    =
    \mathbb{E}_{(s_t,s_{t+k}) \sim \mathcal{D},\, g \sim p^{\mathcal{D}}}
    \left[
        \exp\left(\beta^h A^h(s_t,s_{t+k},g)\right)
        \log \pi^h(s_{t+k} \mid s_t,g)
    \right],
\end{align}
\begin{align}
\label{equ:hiql_low}
    \mathcal{J}(\pi^l)
    =
    \mathbb{E}_{(s_t,a_t,s_{t+1},s_{t+k}) \sim \mathcal{D}}
    \left[
        \exp\left(\beta^l A^l(s_t,s_{t+1},s_{t+k})\right)
        \log \pi^l(a_t \mid s_t,s_{t+k})
    \right],
\end{align}
where $\beta^h$ and $\beta^l$ are inverse temperature parameters, and the advantages are defined as $A^h(s_t, s_{t+k}, g) = V(s_{t+k}, g) - V(s_t, g)$ and $A^l(s_t, s_{t+1}, s_{t+k}) = V(s_{t+1}, s_{t+k}) - V(s_t, s_{t+k})$.

\section{High-Level Planning with Reduced Horizon-Dependent Value Noise}
\label{sec:4}

This section analyzes a limitation of HIQL in long-horizon settings and motivates our approach. We first use a one-dimensional illustrative example to show how hierarchical structures improve the signal-to-noise ratio (§\ref{sec:4.1}). We then provide a theoretical analysis showing that DSP removes one horizon-dependent error path caused by noisy high-level value estimates while preserving the benefits of hierarchy (§\ref{sec:4.2}). Finally, we empirically validate these insights on representative offline GCRL benchmarks (§\ref{sec:4.3}).

\subsection{Why Value-Function Noise Limits Long-Horizon Subgoal Planning}
\label{sec:4.1}

To analyze how value-function noise constrains long-horizon subgoal planning, we revisit the one-dimensional toy example introduced in \citep{HIQL}. As shown in Figure~\ref{fig:toy_env}, the environment consists of a one-dimensional state space, where the agent can move left or right at each time step toward a goal located at the rightmost position. The reward is sparse: $r(s_t=g)=0$ and $-1$ otherwise. 
\begin{wrapfigure}{r}{0.4\textwidth}
    \vspace{-0.8em}
    \centering
    \includegraphics[width=0.99\linewidth]{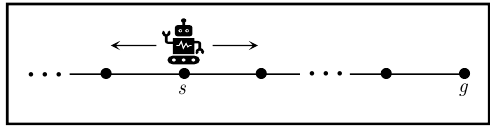}
    \caption{1-D toy environment.}
    \label{fig:toy_env}
    \vspace{-1.0em}
\end{wrapfigure}
Assuming unit discount, the optimal goal-conditioned value function is $V^*(s,g)=-|s-g|$. We model the learned value function as a noisy approximation of the optimal one, $\hat V(s,g) = V^*(s,g) + \sigma z_{s,g} V^*(s,g)$, where $z_{s,g}\sim \mathcal{N}(0,I)$ and $\sigma$ controls the noise scale. This formulation captures a common empirical challenge in offline GCRL: value estimation errors tend to grow with the distance to the goal.

Under this setting, flat policies incur a higher probability of incorrect actions than hierarchical policies, as formalized in Proposition~\ref{prop:4.1}.

\begin{proposition}[Hierarchical Policy Can Reduce Policy Error \citep{HIQL}]
\label{prop:4.1}
In the toy environment, the probability of the flat policy $\pi$ selecting an incorrect action is
\begin{align}
    \mathcal{E}(\pi)
    =
    \Phi\left(
        -\frac{\sqrt{2}}{\sigma\sqrt{T^2+1}}
    \right),
\end{align}
and the probability of the hierarchical policy $\pi^l \circ \pi^h$ selecting an incorrect action is bounded by
\begin{align}
    \mathcal{E}(\pi^l \circ \pi^h)
    \le
    \Phi\left(
        -\frac{\sqrt{2}}{\sigma\sqrt{(T/k)^2+1}}
    \right)
    +
    \Phi\left(
        -\frac{\sqrt{2}}{\sigma\sqrt{k^2+1}}
    \right),
\end{align}
where $\Phi$ denotes the cumulative distribution function of the standard normal distribution,
$\Phi(x)=\mathbb{P}[z\le x]=\frac{1}{\sqrt{2\pi}}\int_{-\infty}^{x} e^{-t^2/2}dt$.
\end{proposition}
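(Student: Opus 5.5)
The plan is to reduce each error event to a sign event for a difference of two independent Gaussian-perturbed values. Take the agent at distance $T$ from the goal, at $s$ with $g=s+T$. The flat policy compares $\hat V(s+1,g)$ with $\hat V(s-1,g)$ and errs when the left neighbor looks at least as good. Since $V^*(s\pm1,g)=-(T\mp1)$, we have
\begin{align}
\hat V(s+1,g)-\hat V(s-1,g) = 2 - \sigma\big((T-1)z_1 - (T+1)z_2\big),
\end{align}
with $z_1,z_2$ independent standard normals. The noise term is Gaussian with variance $\sigma^2\big((T-1)^2+(T+1)^2\big)=2\sigma^2(T^2+1)$. The error probability is therefore $\Phi\!\left(-2/(\sigma\sqrt{2(T^2+1)})\right)=\Phi\!\left(-\sqrt2/(\sigma\sqrt{T^2+1})\right)$, which is the first claim. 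The one point to state explicitly is the modelling convention that distinct $(s,g)$ pairs receive independent noise, because this is what makes the two values independent.

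For the hierarchical policy I split the error event into two parts by a union bound. In the first part the high level selects a wrong subgoal. In the second part the low level selects a wrong action toward a correct subgoal.

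\textbf{High level.} Following the toy setting of HIQL, I treat the high level as choosing between the two candidate subgoals $s\pm k$, that is, moving $k$ steps at a time. This is exactly the flat problem rescaled by $k$, with horizon $T/k$ measured in units of $k$ and unit steps. Under the same scaling convention for the noise, the previous computation gives $\Phi\!\left(-\sqrt2/(\sigma\sqrt{(T/k)^2+1})\right)$. The honest version of this step needs the noise to scale with the coarse-grained distance, so I will state that convention explicitly, matching HIQL's toy model.

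\textbf{Low level.} Given the correct subgoal $s+k$, the low level compares $\hat V(s\pm1,s+k)$. This is the flat computation with $T$ replaced by $k$, giving $\Phi\!\left(-\sqrt2/(\sigma\sqrt{k^2+1})\right)$.

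\textbf{Combining.} The hierarchical agent can produce a wrong action only if the subgoal is wrong or the low-level choice toward a correct subgoal is wrong. The union bound then yields the stated inequality, with no independence between the two stages needed.

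I expect the main obstacle to be justifying the high-level rescaling, i.e.\ that the subgoal comparison behaves like a unit-step problem of horizon $T/k$. To handle this, I will model the high-level choice as a binary decision between $s\pm k$, with values measured in $k$-step units. The rest is routine Gaussian algebra.
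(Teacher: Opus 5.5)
Your proposal is correct and follows the paper's proof essentially step for step. It uses the same Gaussian-difference computation for the flat policy, then a union bound over the high-level comparison of $s\pm k$ and the low-level comparison toward $s+k$. The high-level ``rescaling'' you flag as the main obstacle needs no additional convention: the multiplicative noise model already gives $\hat V(s\pm k,g)=-(T\mp k)(1+\sigma z_\pm)$, i.e.\ a mean gap of $2k$ against noise variance $2\sigma^2(T^2+k^2)$. This yields $\Phi\bigl(-\sqrt{2}/(\sigma\sqrt{(T/k)^2+1})\bigr)$ directly, without needing $T/k$ to be an integer.
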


Proposition~\ref{prop:4.1} shows that hierarchy improves robustness by reducing the effective decision horizon from $T$ to $T/k$. However, high-level planning in HIQL still depends on value estimates $V(s,g)$, causing estimation errors to propagate through subgoal selection with a $T/k$ scaling. Consequently, hierarchy attenuates but does not eliminate horizon-dependent value noise, which can become a bottleneck in long-horizon tasks.

\subsection{Diffusion Subgoal Planning Reduces Horizon-Dependent Value Noise}
\label{sec:4.2}

We analyze the error structure of DSP in the same toy environment to
contrast its behavior with HIQL. DSP adopts a two-level hierarchical
policy: the high-level policy $\pi_{\mathrm{DSP}}^h$ generates a
subgoal, and the low-level policy $\pi_{\mathrm{DSP}}^\ell$ produces an
action conditioned on that subgoal. The overall policy is therefore
composed as
$\pi_{\mathrm{DSP}}^\ell \circ \pi_{\mathrm{DSP}}^h$.

Appendix~\ref{apdx:proposition2_proof} first derives the exact one-step
decision error for an arbitrary generated subgoal distribution, without
assuming Gaussianity or unimodality. To obtain a transparent closed-form
comparison with HIQL, Proposition~\ref{prop:4.2} considers a
matched-distance setting in which both methods select subgoals at the
same distance $k$. This controls for low-level execution difficulty and
isolates the difference between generative and value-based high-level
direction selection.

\begin{proposition}[Reduced High-Level Value-Noise Amplification]
\label{prop:4.2}
Consider the one-dimensional environment illustrated in
Figure~\ref{fig:toy_env}, with current state $s$, final goal $g=s+T$,
and $1 \le k < T$. Suppose that both DSP and HIQL select one of two
subgoals: $s+k$ in the correct direction or $s-k$ in the incorrect
direction. Let $\epsilon$ denote the probability that DSP selects
$s-k$, and define
\begin{align}
    p_\ell(k)
    &:=
    \Phi\left(
        -\frac{\sqrt{2}}
        {\sigma\sqrt{k^2+1}}
    \right), \quad
    p_h(T,k) :=
    \Phi\left(
        -\frac{\sqrt{2}}
        {\sigma\sqrt{(T/k)^2+1}}
    \right).
    \label{eq:matched_error_terms}
\end{align}
Here, $p_\ell(k)$ is the probability that the low-level policy moves
away from its selected subgoal at distance $k$, while $p_h(T,k)$ is
the probability that HIQL selects the wrong-direction subgoal.

Under the independent Gaussian value-noise model of
Proposition~\ref{prop:4.1}, and assuming that DSP's generated direction
is independent of the low-level value noise, the exact one-step decision
errors are
\begin{align}
    \mathcal{E}\left(
        \pi_{\mathrm{DSP}}^\ell
        \circ
        \pi_{\mathrm{DSP}}^h
    \right)
    &=
    p_\ell(k)
    +
    \left(1-2p_\ell(k)\right)\epsilon,
    \label{eq:dsp_exact_error}\\
    \mathcal{E}\left(
        \pi^\ell
        \circ
        \pi^h
    \right)
    &=
    p_\ell(k)
    +
    \left(1-2p_\ell(k)\right)p_h(T,k).
    \label{eq:hiql_exact_error}
\end{align}
Consequently,
\begin{align}
    \mathcal{E}\left(
        \pi_{\mathrm{DSP}}^\ell
        \circ
        \pi_{\mathrm{DSP}}^h
    \right)
    <
    \mathcal{E}\left(
        \pi^\ell
        \circ
        \pi^h
    \right)
    \quad\Longleftrightarrow\quad
    \epsilon < p_h(T,k).
    \label{eq:dsp_hiql_condition}
\end{align}

For the additional closed-form special case in which
$\sigma_{\mathrm{data}}$ is independent of $T$ and the wrong-direction
probability is parameterized as
\begin{align}
    \epsilon
    =
    \Phi\left(
        -\frac{k}{\sigma_{\mathrm{data}}}
    \right),
    \label{eq:gaussian_direction_error}
\end{align}
condition~\eqref{eq:dsp_hiql_condition} is equivalent to
\begin{align}
    T >
    k\sqrt{
        \left(
            \frac{\sqrt{2}\sigma_{\mathrm{data}}}
            {\sigma k}
        \right)^2
        -1
    }
    \label{eq:dsp_horizon_condition}
\end{align}
when
$\frac{\sqrt{2}\sigma_{\mathrm{data}}}{\sigma k}>1$.
When
$\frac{\sqrt{2}\sigma_{\mathrm{data}}}{\sigma k}\le1$,
condition~\eqref{eq:dsp_hiql_condition} holds for every $T>0$.
\end{proposition}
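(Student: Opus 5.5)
We condition on the direction of the selected subgoal and use the law of total probability, treating the high-level choice and the low-level action as two independent binary events. The low-level error probability $p_\ell(k)$ comes directly from the second term of Proposition~\ref{prop:4.1}. The low-level policy compares $\hat V(s+1,w)$ with $\hat V(s-1,w)$ for a subgoal $w$ at distance $k$, so the true values are $-(k-1)$ and $-(k+1)$. The difference of the two independent noisy estimates is Gaussian with mean $-2$ and variance $\sigma^2((k-1)^2+(k+1)^2)=2\sigma^2(k^2+1)$. The probability that its sign flips is therefore $\Phi(-2/(\sigma\sqrt{2(k^2+1)}))=p_\ell(k)$. By the same computation applied to the two candidate subgoals $s\pm k$, HIQL's high-level error is $p_h(T,k)$. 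Here the value of a subgoal relative to $g$ is $-(T\mp k)$, and the toy model of Proposition~\ref{prop:4.1} rescales the horizon to $T/k$ in units of $k$-step chunks; I will take this as given from that proposition.

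Next I define the event of a wrong action at state $s$ with respect to $g$, namely moving to $s-1$. Suppose the subgoal is $s+k$, which happens with probability $1-q$. The action is wrong exactly when the low level errs, with probability $p_\ell(k)$. Suppose instead the subgoal is $s-k$, with probability $q$. The low level then moves left when it is correct, so the action is wrong with probability $1-p_\ell(k)$. This step needs independence between the high-level choice and the low-level noise. For DSP this is the stated assumption. For HIQL it holds because the high-level values $\hat V(s\pm k,g)$ and the low-level values $\hat V(s\pm1,s\pm k)$ involve disjoint state-goal pairs, and the $z_{s,g}$ are independent. Summing the two cases gives
\[
(1-q)p_\ell+q(1-p_\ell)=p_\ell+(1-2p_\ell)q.
\]
Setting $q=\epsilon$ yields \eqref{eq:dsp_exact_error}, and setting $q=p_h(T,k)$ yields \eqref{eq:hiql_exact_error}. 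Subtracting the two errors gives the difference $(1-2p_\ell)(\epsilon-p_h)$. Since $p_\ell(k)=\Phi(\text{negative})<1/2$, the factor $1-2p_\ell$ is positive, which proves the equivalence \eqref{eq:dsp_hiql_condition}.

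For the closed form, I use that $\Phi$ is strictly increasing. Under \eqref{eq:gaussian_direction_error}, the condition $\epsilon<p_h$ becomes
\[
\frac{k}{\sigma_{\mathrm{data}}}>\frac{\sqrt2}{\sigma\sqrt{(T/k)^2+1}},
\]
which rearranges to $\sqrt{(T/k)^2+1}>\sqrt2\sigma_{\mathrm{data}}/(\sigma k)=:c$. If $c\le1$, this holds for every $T>0$, since the left side is already greater than 1. If $c>1$, squaring both positive sides gives $(T/k)^2>c^2-1$. Taking the positive root, which is valid since $T>0$, gives \eqref{eq:dsp_horizon_condition}.

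The main obstacle is being precise about the independence and the HIQL high-level error in the first step. HIQL's actual comparison involves $\hat V(s\pm k,g)$ with distances $T\mp k$, so the literal variance is $\sigma^2((T-k)^2+(T+k)^2)$, not the rescaled $T/k$ form. I would state explicitly that, following Proposition~\ref{prop:4.1}, the high-level is modeled as a chain whose unit step is $k$. Under that convention the error is exactly $p_h(T,k)$. The remaining steps are elementary algebra.
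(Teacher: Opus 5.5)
Your proposal is correct and follows the paper's proof essentially step for step: condition on the selected direction, use independence of the high-level choice from the low-level noise to get $p_\ell(k)+(1-2p_\ell(k))q$, invoke $p_\ell(k)<1/2$, and finish the Gaussian case by monotonicity of $\Phi$. One correction: the obstacle you flag in your last paragraph does not exist, because the literal HIQL comparison of $\hat V(s+k,g)$ and $\hat V(s-k,g)$ has mean gap $2k$ and variance $2\sigma^2(T^2+k^2)$, so its error is $\Phi\bigl(-2k/(\sigma\sqrt{2(T^2+k^2)})\bigr)=p_h(T,k)$ exactly (the multiplicative noise model is scale-invariant), and you do not need a ``chain with unit step $k$'' convention.
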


\textbf{Remark.}
The matched-distance setting assigns DSP and HIQL the same local
subgoal distance and therefore the same low-level error $p_\ell(k)$.
Their ordering is determined solely by whether DSP's wrong-direction
probability $\epsilon$ is smaller than HIQL's high-level value-comparison
error $p_h(T,k)$. As $T/k$ increases, $p_h(T,k)\rightarrow\frac{1}{2}$,
and Equation~\eqref{eq:hiql_exact_error} consequently approaches
$\frac{1}{2}$, reflecting the degradation of long-horizon value-based
direction selection under the noisy-value model.

The Gaussian expression in
Equation~\eqref{eq:gaussian_direction_error} is only a closed-form
special case for parameterizing $\epsilon$. The general derivation in
Appendix~\ref{apdx:proposition2_proof} allows the generated subgoal
distribution to be non-Gaussian, multimodal, and dependent on $T$.
Accordingly, Proposition~\ref{prop:4.2} does not imply
horizon-independent DSP error: the generated subgoal distribution may
deteriorate as the task horizon increases or offline data coverage
decreases. The result instead isolates the removal of the explicit
$T/k$-scaled high-level value-noise path present in HIQL-style subgoal
selection.

The proof of Proposition~\ref{prop:4.2} is provided in
Appendix~\ref{apdx:proposition2_proof}, and empirical validation is
presented in Section~\ref{sec:4.3}.

\subsection{Empirical Evidence: DSP Generates More Reliable Subgoals}
\label{sec:4.3}

\begin{figure*}[t]
    \centering
    \subfigure[HIQL$^\mathrm{w/o}$]{
        \includegraphics[width=0.48\linewidth]{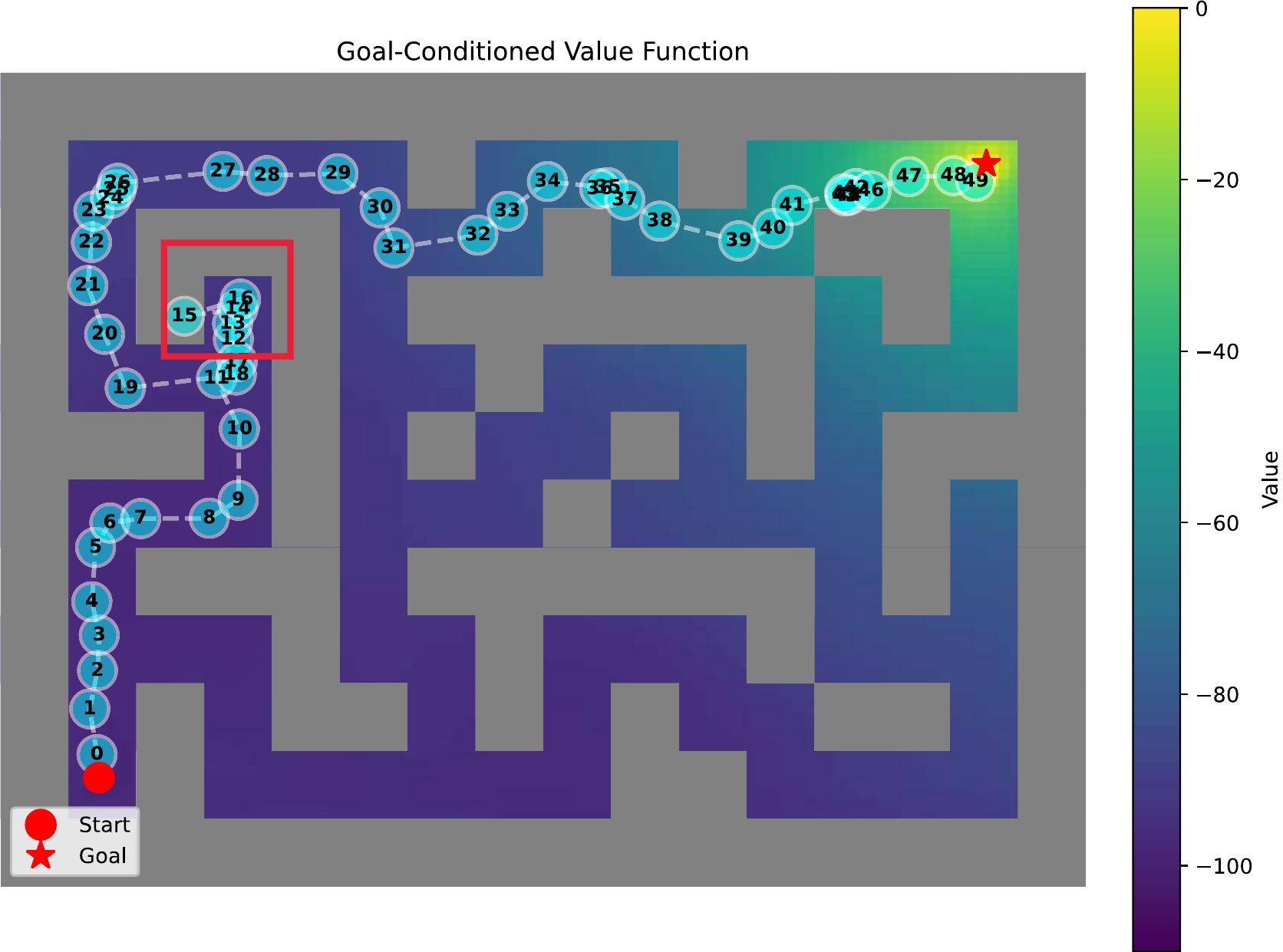}
        \label{fig:hiql_sub}
    }
    \subfigure[DSP]{
        \includegraphics[width=0.48\linewidth]{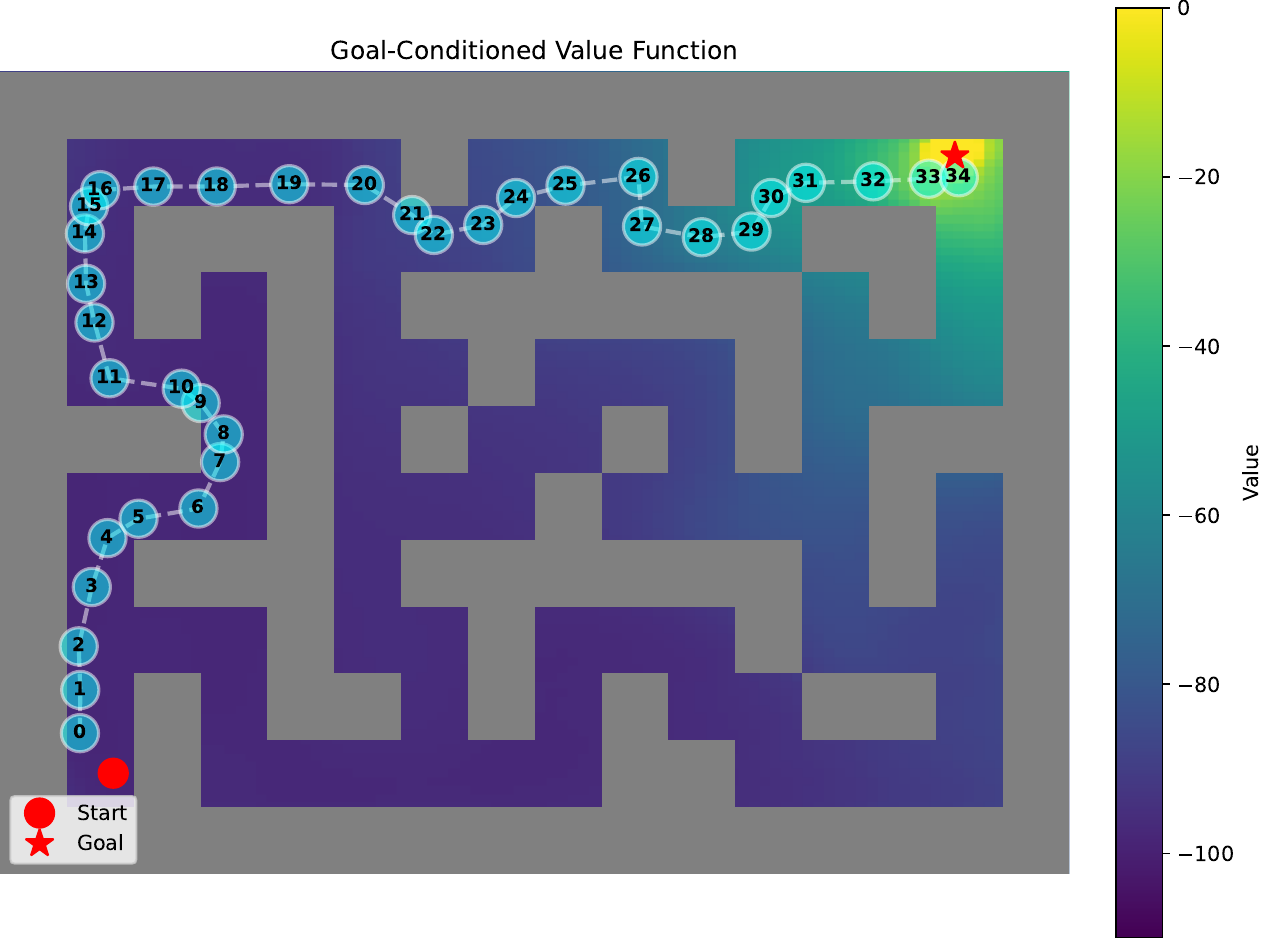}
        \label{fig:dsp_sub}
    }
    \caption{Comparative visualization of high-level planning in the \texttt{antmaze-giant-navigate-v0} environment. The state space is projected onto the 2D $(x,y)$ plane, where the background heatmap illustrates the landscape of the goal-conditioned value function; brighter regions correspond to higher values. HIQL$^\mathrm{w/o}$ selects subgoals through value-based guidance, making its subgoal trajectories sensitive to value-function noise. In contrast, DSP generates smoother and topologically connected paths without explicit value-based high-level guidance.}
    \label{fig:empirical_evidence}
\end{figure*}

We qualitatively compare the high-level subgoals generated by HIQL$^\mathrm{w/o}$\footnote{HIQL$^\mathrm{w/o}$ removes the subgoal representation bottleneck in HIQL and operates directly on state-space subgoals. As shown in Table~\ref{tab:main_result}, HIQL and HIQL$^\mathrm{w/o}$ achieve comparable performance on \texttt{antmaze-giant-navigate-v0}, enabling a fair qualitative comparison.} and DSP on \texttt{antmaze-giant-navigate-v0}, a representative high-dimensional offline GCRL task requiring long-distance navigation through a complex maze under sparse rewards. This environment provides a challenging test for the stability of high-level planning over long horizons.

Figure~\ref{fig:empirical_evidence} overlays the subgoal trajectories produced by HIQL$^\mathrm{w/o}$ and DSP on the same value landscape. As shown in Figure~\ref{fig:hiql_sub}, HIQL$^\mathrm{w/o}$ often proposes subgoals that become trapped in local optima or intersect obstacles, resulting in unstable plans. These behaviors are consistent with our analysis in Sections~\ref{sec:4.1} and~\ref{sec:4.2}, where high-level planning based on noisy value estimates exhibits horizon-dependent error accumulation.

In contrast, DSP produces smoother subgoal sequences that respect the maze topology and maintain progress toward the goal. By generating subgoals through a diffusion-based process rather than value comparison, DSP reduces sensitivity to local errors in the value landscape. As a result, the agent more reliably follows feasible corridors in this long-horizon navigation task. Additional visualizations on mazes of varying sizes are provided in Appendix~\ref{apdx:visualization}.

\section{Diffusion Subgoal Planning}
\label{sec:5}

This section introduces Diffusion Subgoal Planning (DSP). DSP aims to retain the execution efficiency and local guidance benefits of hierarchical structures while removing explicit value-based guidance from high-level subgoal generation. Instead, subgoals are generated through a controllable diffusion-based process grounded in the data distribution. Section~\ref{sec:5.1} formulates subgoal planning as probabilistic inference, Section~\ref{sec:5.2} describes the learning of goal-conditioned velocity fields, and Section~\ref{sec:5.3} presents inference-time control via classifier-free guidance and its connection to implicit advantage-weighted subgoal selection. Section~\ref{sec:5.4} summarizes the complete algorithm.

\textbf{Notation:} Unless otherwise specified, superscripts index diffusion time steps $i \in [0,1]$, while subscripts index trajectory time steps $t \in \{0,\ldots,T-1\}$.

\subsection{Subgoal Generation as Probabilistic Inference}
\label{sec:5.1}

Hierarchical goal-conditioned reinforcement learning can be formulated as a probabilistic inference problem. At the trajectory level, we introduce a binary optimality variable $\mathcal{O}$ as an analytical device indicating whether a trajectory reaches the final goal with high return. Under this formulation, high-level planning corresponds to sampling subgoals from the posterior
\begin{align}
    w \sim p(w \mid s,\mathcal{O} = 1).
\end{align}
This posterior characterizes subgoals that are likely to support goal-reaching behavior from the current state $s$.

Existing approaches typically approximate this posterior by first learning a goal-conditioned value function $V(s,g)$ and then deriving the high-level policy via value-based optimization \cite{HIQL, RIS, OTA}. As analyzed in Section~\ref{sec:4.1}, this approximation couples subgoal selection to value estimation errors, leading to horizon-dependent noise amplification and brittle long-horizon planning.

We instead adopt a generative planning perspective. By Bayes' rule, the posterior can be decomposed as
\begin{align}
    \log p(w \mid s, \mathcal{O}= 1) \propto \log p(w \mid s) + \log p(\mathcal{O} = 1 \mid w,s).
\end{align}
Here, $\log p(w \mid s)$ defines a data-induced prior over reachable and feasible subgoals, while $\log p(\mathcal{O}=1 \mid w,s)$ captures a goal-directed preference. In DSP, this preference is operationalized through the goal condition $g$: the goal-conditioned branch captures how the subgoal distribution changes when the desired goal is specified. Rather than explicitly regressing this preference term through value functions, DSP learns unconditional and goal-conditioned generative velocity fields. Their difference implicitly encodes goal preference at inference time, enabling subgoal sampling that is biased toward goal-relevant waypoints without value-based high-level planning and its associated horizon-dependent noise.

\subsection{Learning the Goal-Directed Velocity Field}
\label{sec:5.2}

We model subgoal generation as a deterministic flow that transforms a simple prior noise variable $w^0 \sim \mathcal{N}(0,I)$ into a goal-conditioned subgoal distribution $p(w \mid s,g)$ via a parameterized velocity field. Rather than predicting noise or scores, we adopt a flow matching objective that directly learns the velocity field defining the generative ordinary differential equation, enabling stable and efficient training \cite{CDC, DFM}.

Given a training subgoal $w^1$, constructed as a $k$-step future state from the offline dataset, i.e., $w^1=s_{t+k}$, we define a linear interpolation path
\begin{align}
\label{equ:flow_interpolation}
    w^i = (1-i)w^0 + iw^1, \quad i \in [0,1],
\end{align}
and supervise the model to match the corresponding velocity along this path, $\frac{d w^i}{d i}=w^1-w^0$. This objective encourages the learned flow to transport samples from the prior toward the empirical subgoal distribution.

To jointly learn the unconditional distribution $p(w \mid s)$ and the goal-conditioned distribution $p(w \mid s,g)$ within a single model, we employ CFG through conditional dropout during training. The resulting objective is
\begin{align}
\label{equ:dsp_high_train}
\mathcal{L}_{\mathrm{DSP}}(\theta)
=
\mathbb{E}_{\substack{
i \sim \mathcal{U}(0,1),\; w^0 \sim \mathcal{N}(0,I),\\
(s_t,w^1) \sim \mathcal{D},\; g \sim p^{\mathcal{D}}
}}
\left[
\left\|
v_\theta(w^i,i,s_t,\tilde g) - (w^1-w^0)
\right\|^2
\right],
\end{align}
where the conditioning variable $\tilde g$ equals $g$ with probability $1-p_{\mathrm{unc}}$ and is set to $\emptyset$ otherwise. This design enables a single network to represent both $v_\theta(\cdot \mid s,g)$ and $v_\theta(\cdot \mid s)$, laying the foundation for controllable guided sampling at inference time.

\subsection{Guided Sampling as Implicit Policy Extraction}
\label{sec:5.3}

Although flow matching learns the empirical goal-conditioned subgoal
distribution during training, DSP further exploits the relationship
between the unconditional and conditional distributions at inference
time. Specifically, classifier-free guidance performs a density-ratio
reweighting of the behavior-induced subgoal distribution.

Let $v_\theta(w^i,i,s,\emptyset)$ and
$v_\theta(w^i,i,s,g)$ denote the unconditional and goal-conditioned
velocity fields. DSP constructs
\begin{align}
    \hat v_\theta(w^i,i)
    =
    v_\theta(w^i,i,s,\emptyset)
    +
    \omega
    \left(
        v_\theta(w^i,i,s,g)
        -
        v_\theta(w^i,i,s,\emptyset)
    \right),
\end{align}
where $\omega$ controls the strength of goal-directed guidance. Under the score approximation interpretation of CFG, the conditional-unconditional difference corresponds to $\nabla_w \log \frac{p(w|s,g)}{p(w|s)}$.

The following proposition establishes the connection between this
density ratio and the RL advantage of the induced high-level behavior
policy.

\begin{proposition}
[CFG as Implicit Relative-Advantage-Weighted Subgoal Extraction]
\label{prop:5.1}
Let $\pi_\beta(w|s)=p(w|s)$ denote the behavior-induced subgoal distribution. Assume that the goal-conditioned subgoal distribution is obtained by future-state relabeling under $\pi_\beta$. Then,
\begin{align}
    \frac{
        p(w|s,g)
    }{
        p(w|s)
    }
    =
    \frac{
        Q_\beta(s,w,g)
    }{
        V_\beta(s,g)
    },
\end{align}

where $Q_\beta(s,w,g)$ is the goal-conditioned value of selecting
subgoal $w$ under $\pi_\beta$, and $V_\beta(s,g) = \mathbb{E}_{w\sim\pi_\beta(\cdot|s)}[Q_\beta(s,w,g)].$

Equivalently,
\begin{align}
    \log
    \frac{
        p(w|s,g)
    }{
        p(w|s)
    }
    =
    \log
    \left(
        1+
        \frac{
            A_\beta(s,w,g)
        }{
            V_\beta(s,g)
        }
    \right),
\end{align}
where $A_\beta(s,w,g) = Q_\beta(s,w,g)-V_\beta(s,g)$ is the behavior-policy RL advantage.

Therefore, CFG sampling induces the tilted distribution
\begin{align}
    p_\omega(w\mid s,g)
    &\propto
    \pi_\beta(w\mid s)
    \left(
        \frac{
            Q_\beta(s,w,g)
        }{
            V_\beta(s,g)
        }
    \right)^\omega
    \propto
    \pi_\beta(w\mid s)
    Q_\beta(s,w,g)^\omega.
\end{align}
Equivalently,
\begin{align}
    p_\omega(w\mid s,g)
    \propto
    \pi_\beta(w\mid s)
    \exp
    \left(
        \omega
        \widetilde A_\beta(s,w,g)
    \right),
\end{align}
where $\widetilde A_\beta(s,w,g) =\log \frac{Q_\beta(s,w,g)}{V_\beta(s,g)}$.
This corresponds to implicit relative-advantage-weighted subgoal
extraction.
\end{proposition}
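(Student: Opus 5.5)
The plan is to make precise what ``future-state relabeling under $\pi_\beta$'' means, and to read $Q_\beta$ and $V_\beta$ as likelihoods of hitting $g$. Under hindsight relabeling, the goal $g$ attached to the pair $(s,w)$ is a future state drawn from the discounted (or geometric) future-state distribution of the trajectory that continues from $w$ under the behavior policy. So I will define
\[
Q_\beta(s,w,g) := p_\beta(g \mid s, w),
\]
the relabeling density of $g$ given the current state and the chosen subgoal. This is a nonnegative, occupancy-style value, the same object as the successor-measure interpretation of goal-conditioned values in contrastive RL. Averaging over subgoals gives
\[
V_\beta(s,g) = \mathbb{E}_{w\sim\pi_\beta(\cdot\mid s)}\bigl[Q_\beta(s,w,g)\bigr] = p_\beta(g\mid s)
\]
by the law of total probability. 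I will state explicitly that these are the value notions being used. This is consistent with the optimality-variable view of Section~\ref{sec:5.1}, where $p(\mathcal{O}=1\mid w,s)$ plays the role of $Q_\beta$.

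The main identity then follows from Bayes' rule applied to the joint relabeled distribution $p(w,g\mid s)=\pi_\beta(w\mid s)\,p_\beta(g\mid s,w)$:
\[
\frac{p(w\mid s,g)}{p(w\mid s)} = \frac{p_\beta(g\mid s,w)}{p_\beta(g\mid s)} = \frac{Q_\beta(s,w,g)}{V_\beta(s,g)}.
\]
Writing $Q_\beta = V_\beta + A_\beta$ gives the log form $\log\bigl(1 + A_\beta/V_\beta\bigr)$. This step requires $V_\beta(s,g)>0$, which I will assume by restricting attention to goals with positive data support.

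For the tilted distribution, I will use the standard score interpretation of CFG that the paper already invokes. The guided velocity corresponds to the score
\[
\nabla_w \log p(w\mid s) + \omega\,\nabla_w \log\frac{p(w\mid s,g)}{p(w\mid s)},
\]
which is the score of
\[
p_\omega(w\mid s,g)\propto p(w\mid s)\left(\frac{p(w\mid s,g)}{p(w\mid s)}\right)^{\omega}.
\]
Substituting the ratio identity gives $p_\omega \propto \pi_\beta\,(Q_\beta/V_\beta)^\omega$. Because $V_\beta(s,g)$ does not depend on $w$, it is absorbed into the normalizer, which yields $p_\omega \propto \pi_\beta\,Q_\beta^\omega$. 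Rewriting $(Q_\beta/V_\beta)^\omega = \exp\bigl(\omega\,\widetilde A_\beta\bigr)$ gives the exponentiated relative-advantage form. This matches the AWR structure of Eq.~\eqref{equ:hiql_high} with $\omega$ in the role of $\beta^h$.

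The main obstacle is interpretive rather than computational. The identification of $Q_\beta$ as a relabeling density clashes with the paper's reward $r=-\mathbf{1}(s\neq g)$, under which values are negative and the ratio would not be a density ratio. I will handle this by stating that the proposition uses the probabilistic, occupancy-based value (equivalently, the success likelihood under the $\mathcal{O}$ variable), which is monotonically related to the return-based value.

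A second caveat concerns the CFG score interpretation. Following the tilted density exactly along the flow is only approximate for intermediate diffusion times $i<1$, because guidance does not commute with the noising path. I will state the result at the level of the target density, under the same score-approximation assumption the paper adopts, rather than claim exactness of the ODE sampler.
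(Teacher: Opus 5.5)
Your proposal is correct and follows essentially the same route as the paper's proof. Both identify $Q_\beta(s,w,g)$ with the relabeling density $p(g\mid s,w)$, obtain $V_\beta(s,g)=p(g\mid s)$ by marginalizing over $\pi_\beta$, apply Bayes' rule, and then use the score/endpoint-density interpretation of CFG with $V_\beta^\omega$ absorbed into the normalizer. The paper leaves three of your caveats implicit: that $Q_\beta$ must be read as an occupancy/likelihood value rather than the negative return under $r=-\mathbf{1}(s\neq g)$, that $V_\beta(s,g)>0$ is needed, and that the tilted density is only an approximation along the sampling flow; stating them makes your version slightly more careful.
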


Proposition~\ref{prop:5.1} clarifies that the implicit quantity amplified
by CFG is the log-relative advantage
$\widetilde A_\beta(s,w,g)=\log \frac{Q_\beta(s,w,g)}{V_\beta(s,g)}$,
which is a strictly increasing transformation of the additive RL
advantage. Therefore, CFG preserves the ordering of subgoals induced by
$A_\beta(s,w,g)$ while avoiding explicit value estimation during
high-level planning. The detailed proof of Proposition~\ref{prop:5.1} is provided in Appendix~\ref{apdx:proposition3_proof}.

\subsection{Practical Algorithm}
\label{sec:5.4}
We combine the diffusion-based high-level planner with the value function and low-level policy learning scheme of HIQL, using the following objectives:
\begin{align}
\label{equ:dsp_value}
\mathcal{L}_V(\phi)
=
\mathbb{E}_{(s_t,s_{t+1})\sim\mathcal{D},\, g\sim p^{\mathcal{D}}}
\left[
L_2^\varepsilon
\left(
r(s_t,g)
+\gamma V_{\bar{\phi}}(s_{t+1},g)
- V_\phi(s_t,g)
\right)
\right].
\end{align}
\begin{align}
\label{equ:dsp_low}
\mathcal{J}_{\pi^\ell}(\vartheta)
=
\mathbb{E}_{(s_t,a_t,s_{t+1},w)\sim\mathcal{D}}
\left[
\exp\left(
\alpha\left(
V_\phi(s_{t+1},w)-V_\phi(s_t,w)
\right)
\right)
\log \pi^\ell_\vartheta(a_t \mid s_t,w)
\right].
\end{align}

Algorithms~\ref{alg:dsp_train} and~\ref{alg:dsp_infer} summarize the training and inference procedures of DSP. The high-level planner generates subgoals via guided diffusion over velocity fields, while the low-level policy executes actions using value-based advantage-weighted updates.

\begin{figure}[h]
\vspace*{-2ex}
\centering
\begin{minipage}[h]{0.42\textwidth}
\begin{algorithm}[H]
\caption{DSP Training}
\label{alg:dsp_train}
\small
\begin{algorithmic}[1]
    \STATE \textbf{Input:} Dataset $\mathcal{D}$; value $V_\phi,V_{\bar\phi}$; flow $v_\theta$; policy $\pi^\ell_\vartheta$.
    \WHILE{not converged}
        \STATE Sample $(s_t,a_t,s_{t+1},w,g)\sim\mathcal{D}$
        \STATE Sample $w^0\sim\mathcal{N}(0,I)$, $i\sim\mathcal{U}(0,1)$
        \STATE Set $w^1\leftarrow w$, $w^i\leftarrow(1-i)w^0+iw^1$
        \STATE Update $V_\phi$ using Eq.~(\ref{equ:dsp_value})
        \STATE $\bar\phi\leftarrow(1-\kappa)\bar\phi+\kappa\phi$
        \STATE Update $v_\theta$ using Eq.~(\ref{equ:dsp_high_train})
        \STATE Update $\pi^\ell_\vartheta$ using Eq.~(\ref{equ:dsp_low})
    \ENDWHILE
    \STATE \textbf{Output:} Trained $V_\phi$, $v_\theta$, and $\pi^\ell_\vartheta$
\end{algorithmic}
\end{algorithm}
\end{minipage}
\hfill
\begin{minipage}[h]{0.57\textwidth}
\begin{algorithm}[H]
\caption{DSP Inference}
\label{alg:dsp_infer}
\small
\begin{algorithmic}[1]
    \STATE \textbf{Input:} Goal $g$; state $s_0$; flow $v_\theta$; policy $\pi^\ell_\vartheta$; guidance $\omega$; steps $N$.
    \STATE \textbf{Define:} $\Delta v_\theta(w,i,s,g)=v_\theta(w,i,s,g)-v_\theta(w,i,s,\emptyset)$
    \STATE Initialize $s_t\leftarrow s_0$
    \WHILE{not done}
        \STATE Sample $w\sim\mathcal{N}(0,I)$
        \FOR{$i\in\{0,\frac{1}{N},\ldots,\frac{N-1}{N}\}$}
            \STATE $w\leftarrow w+\frac{1}{N}\!\left[v_\theta(w,i,s_t,\emptyset)+\omega\Delta v_\theta(w,i,s_t,g)\right]$
        \ENDFOR
        \STATE $a_t\sim\pi^\ell_\vartheta(\cdot\mid s_t,w)$
        \STATE Execute $a_t$ and observe $s_{t+1}$, $s_t\leftarrow s_{t+1}$
    \ENDWHILE
\end{algorithmic}
\end{algorithm}
\end{minipage}
\vspace*{-1ex}
\end{figure}

\section{Experiments}
\label{sec:6}
We evaluate DSP on OGBench, a challenging offline GCRL benchmark designed to test long-horizon reasoning and multi-goal composition. Section~\ref{sec:6.1} describes the experimental setup. Sections~\ref{sec:6.2} and~\ref{sec:6.3} report results on locomotion and manipulation tasks, respectively. Implementation details and hyperparameters are provided in Appendix~\ref{apdx:impl_details} and~\ref{apdx:hyperparameters}. Hyperparameter sensitivity, high-level planner ablations, inference-time analysis, and stitching experiments are reported in Appendices~\ref{apdx:hyper_sensitivity},~\ref{app:controlled_high_level_ablation},~\ref{apdx:infer_time}, and~\ref{apdx:stitch}, respectively.

\subsection{Experimental Setup}
\label{sec:6.1}

We evaluate DSP on a subset of environments and datasets from OGBench, a benchmark designed for offline GCRL with multiple goal-conditioned evaluation protocols. OGBench includes datasets of varying characteristics to assess long-horizon reasoning, trajectory stitching, and multi-goal composition. We focus on two categories of tasks.

\textbf{Locomotion.} Locomotion tasks require controlling a robot to navigate mazes and reach target locations. We consider \texttt{point}, \texttt{ant}, and \texttt{humanoid} embodiments across \texttt{medium}, \texttt{large}, and \texttt{giant} maze layouts. We further include the challenging \texttt{antsoccer} task, where an ant robot must push a ball to a target location, evaluated under both \texttt{arena} and \texttt{medium} settings.

\textbf{Manipulation.} Manipulation tasks involve a 6-DoF robotic arm performing object-centric interactions, including cube grasping, button pressing, window opening, and drawer manipulation. These tasks evaluate the agent’s ability to compose sequential behaviors under complex dynamics.

For comparison, we consider both flat and hierarchical baselines: \textbf{GCBC} \cite{GCSL}, \textbf{CFGRL} \cite{CFGRL}, \textbf{GCIVL} \cite{OGBench}, \textbf{OTA} \cite{OTA}, \textbf{Pi-HIQL} \cite{Pi-HIQL}, and \textbf{HIQL} \cite{HIQL}. We follow the hyperparameter settings reported in the original works and rerun their released code to report mean and standard deviation. Additional details on OGBench and the baseline methods are provided in Appendix~\ref{apdx:env_and_dat} and~\ref{apdx:baselines}.

\subsection{Locomotion Results}
\label{sec:6.2}

\begin{table*}[t]
\centering
\caption{Complete comparison between DSP and the offline GCRL baselines. The table reports the average binary success rate (\%) across five test-time goals for each task, averaged over 5 seeds. Standard deviations are indicated by the $\pm$ symbol. Entries within 95\% of the best-performing value in each row are highlighted in \textbf{bold}.}
\label{tab:main_result}
\vspace{-0.1cm}
\resizebox{\linewidth}{!}{
\begin{tabular}{lcccccccc}
\toprule
\textbf{Datasets} & 
\multicolumn{3}{c}{\textbf{Flat Policies}} & 
\multicolumn{5}{c}{\textbf{Hierarchical Policies}} \\
\cmidrule(r){2-4} \cmidrule(l){5-9}
& 
\textbf{GCBC} & 
\textbf{CFGRL} & 
\textbf{GCIVL} & 
\textbf{OTA} & 
\textbf{Pi-HIQL} & 
\textbf{HIQL} & 
\textbf{HIQL$^{\mathrm{w/o}}$} & 
\textbf{DSP} \\
\midrule
\texttt{pointmaze-medium-navigate-v0} 
& $4.8\pm6.9$ 
& $65.2\pm5.3$ 
& $70.2\pm5.9$ 
& $85.4\pm5.0$ 
& $62.2\pm7.3$ 
& $70.6\pm7.0$ 
& $74.6\pm6.1$ 
& $\mathbf{90.6}\pm3.8$ \\

\texttt{pointmaze-large-navigate-v0} 
& $25.6\pm6.3$ 
& $74.6\pm2.9$ 
& $42.6\pm5.3$ 
& $87.6\pm9.2$ 
& $80.2\pm13.3$ 
& $39.4\pm2.3$ 
& $50.0\pm10.4$ 
& $\mathbf{93.6}\pm2.9$ \\

\texttt{pointmaze-giant-navigate-v0} 
& $2.2\pm4.9$ 
& $3.4\pm2.5$ 
& $1.0\pm2.2$ 
& $\mathbf{69.4}\pm11.6$ 
& $3.0\pm2.9$ 
& $5.0\pm5.0$ 
& $0.0\pm0.0$ 
& $43.4\pm7.2$ \\

\texttt{pointmaze-teleport-navigate-v0} 
& $23.8\pm5.7$ 
& $\mathbf{49.2}\pm5.1$ 
& $43.6\pm2.5$ 
& $40.2\pm5.4$ 
& $33.6\pm9.0$ 
& $8.8\pm6.4$ 
& $8.2\pm6.2$ 
& $\mathbf{51.0}\pm5.8$ \\

\midrule
\texttt{antmaze-medium-navigate-v0} 
& $32.6\pm8.5$ 
& $34.8\pm5.5$ 
& $74.6\pm8.0$ 
& $91.6\pm2.4$ 
& $90.8\pm2.7$ 
& $93.2\pm1.3$ 
& $92.8\pm2.4$ 
& $\mathbf{98.4}\pm0.5$ \\

\texttt{antmaze-large-navigate-v0} 
& $23.0\pm2.0$ 
& $18.4\pm5.7$ 
& $15.0\pm6.6$ 
& $\mathbf{89.2}\pm2.3$ 
& $81.8\pm3.1$ 
& $\mathbf{87.8}\pm1.5$ 
& $\mathbf{88.6}\pm2.1$ 
& $\mathbf{92.0}\pm4.1$ \\

\texttt{antmaze-giant-navigate-v0} 
& $0.0\pm0.0$ 
& $0.0\pm0.0$ 
& $0.0\pm0.0$ 
& $\mathbf{68.4}\pm3.4$ 
& $50.4\pm3.7$ 
& $58.2\pm4.7$ 
& $50.4\pm8.0$ 
& $\mathbf{69.2}\pm2.9$ \\

\texttt{antmaze-teleport-navigate-v0} 
& $26.4\pm2.2$ 
& $35.2\pm7.5$ 
& $35.6\pm5.7$ 
& $48.4\pm2.6$ 
& $49.4\pm2.7$ 
& $43.4\pm6.6$ 
& $40.8\pm4.1$ 
& $\mathbf{59.8}\pm1.3$ \\

\midrule
\texttt{humanoidmaze-medium-navigate-v0} 
& $6.8\pm1.3$ 
& $7.0\pm2.3$ 
& $32.6\pm4.0$ 
& $82.4\pm3.6$ 
& $76.2\pm3.8$ 
& $74.8\pm3.0$ 
& $52.6\pm4.3$ 
& $\mathbf{89.6}\pm2.9$ \\

\texttt{humanoidmaze-large-navigate-v0} 
& $1.0\pm1.4$ 
& $1.2\pm1.6$ 
& $2.6\pm1.8$ 
& $\mathbf{76.4}\pm4.0$ 
& $39.0\pm3.2$ 
& $23.6\pm2.9$ 
& $7.2\pm3.4$ 
& $69.4\pm3.9$ \\

\texttt{humanoidmaze-giant-navigate-v0} 
& $0.0\pm0.0$ 
& $0.4\pm0.5$ 
& $0.0\pm0.0$ 
& $\mathbf{81.4}\pm4.9$ 
& $34.2\pm3.3$ 
& $5.0\pm1.6$ 
& $0.0\pm0.0$ 
& $66.0\pm3.4$ \\

\midrule
\texttt{antsoccer-arena-navigate-v0} 
& $5.4\pm3.0$ 
& $12.2\pm4.1$ 
& $47.4\pm7.5$ 
& $39.8\pm6.5$ 
& $17.8\pm2.9$ 
& $60.6\pm4.5$ 
& $56.6\pm5.5$ 
& $\mathbf{76.6}\pm4.0$ \\

\texttt{antsoccer-medium-navigate-v0} 
& $1.4\pm0.5$ 
& $3.6\pm2.1$ 
& $11.6\pm2.3$ 
& $\mathbf{16.6}\pm3.5$ 
& $2.4\pm1.5$ 
& $8.4\pm1.5$ 
& $7.2\pm1.5$ 
& $15.2\pm2.4$ \\

\midrule
\texttt{cube-single-play-v0} 
& $5.2\pm2.2$ 
& $7.6\pm2.3$ 
& $\mathbf{53.4}\pm5.4$ 
& $10.0\pm4.2$ 
& $1.2\pm1.1$ 
& $11.8\pm2.3$ 
& $21.8\pm4.0$ 
& $41.0\pm6.1$ \\

\texttt{cube-double-play-v0} 
& $1.0\pm1.0$ 
& $1.6\pm1.1$ 
& $\mathbf{31.6}\pm3.0$ 
& $2.2\pm0.8$ 
& $0.0\pm0.0$ 
& $3.6\pm2.2$ 
& $9.4\pm4.4$ 
& $\mathbf{30.8}\pm8.4$ \\

\midrule
\texttt{scene-play-v0} 
& $4.8\pm2.6$ 
& $17.2\pm3.3$ 
& $44.6\pm2.9$ 
& $23.4\pm7.1$ 
& $16.6\pm3.1$ 
& $37.2\pm4.2$ 
& $42.0\pm5.2$ 
& $\mathbf{62.6}\pm5.9$ \\

\midrule
\texttt{puzzle-3x3-play-v0} 
& $3.0\pm1.2$ 
& $2.0\pm1.2$ 
& $4.2\pm2.5$ 
& $4.6\pm3.2$ 
& $5.6\pm1.5$ 
& $8.8\pm1.3$ 
& $\mathbf{16.0}\pm2.9$ 
& $13.4\pm2.3$ \\

\texttt{puzzle-4x4-play-v0} 
& $0.0\pm0.0$ 
& $0.0\pm0.0$ 
& $7.8\pm2.6$ 
& $3.4\pm1.8$ 
& $13.6\pm8.4$ 
& $7.8\pm2.6$ 
& $5.4\pm2.3$ 
& $\mathbf{27.6}\pm2.9$ \\

\midrule
\texttt{Total} 
& $167.0$ 
& $333.6$ 
& $518.4$ 
& $920.4$ 
& $658.0$ 
& $648.0$ 
& $623.6$ 
& $\mathbf{1090.2}$ \\

\bottomrule
\end{tabular}
}
\end{table*}

We first evaluate DSP on the locomotion suite of OGBench. As summarized in Table~\ref{tab:main_result}, DSP achieves strong performance across the evaluated locomotion tasks, showing its effectiveness for offline long-horizon planning. In low-dimensional \texttt{point} environments, DSP outperforms prior methods in the \texttt{medium} and \texttt{large} mazes and remains competitive in the more challenging \texttt{giant} setting. The learning curves in Figure~\ref{fig:main_learning_curve} further show that DSP converges more reliably than HIQL-based baselines in several difficult \texttt{giant} environments.

The gains also extend to higher-dimensional robotic domains, including \texttt{ant} and \texttt{humanoid}. DSP attains near-optimal success rates in \texttt{antmaze-medium} and \texttt{antmaze-large}, and remains among the strongest methods on several \texttt{humanoidmaze} tasks, where high-dimensional states and complex dynamics make long-horizon value-based planning more difficult. These results are consistent with the intended role of DSP: generating data-supported, goal-directed subgoals without relying on explicit value comparisons at the high level.

DSP also achieves the best or near-best performance on the \texttt{antsoccer} tasks under both \texttt{arena} and \texttt{medium} configurations. These environments combine long-range navigation with object interaction, providing a useful test of planning consistency under coupled dynamics. Overall, the locomotion results suggest that guided generative subgoal planning is particularly effective in long-horizon navigation tasks that require stable multi-step subgoal generation.

\begin{figure*}[t]
    \centering
    \includegraphics[width=\linewidth]{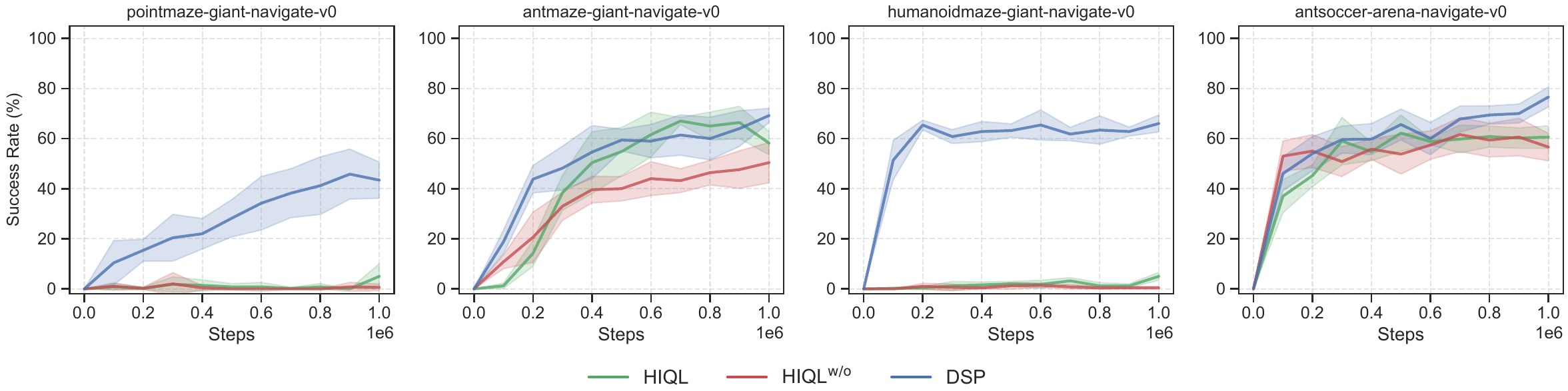}
    \caption{Learning curves on challenging OGBench navigation tasks. DSP generally converges more reliably than HIQL and HIQL$^{\mathrm{w/o}}$ in these long-horizon settings, suggesting the benefit of guided generative subgoal planning.}
    \label{fig:main_learning_curve}
\end{figure*}

\subsection{Manipulation Results}
\label{sec:6.3}

We further evaluate DSP on the manipulation suite of OGBench, spanning single-object grasping, multi-object interaction, and compositional puzzle solving. Unlike locomotion, these tasks emphasize sequencing local physical interactions rather than long-range navigation.

As shown in Table~\ref{tab:main_result}, GCIVL performs strongly on local tasks such as \texttt{cube-single} and \texttt{cube-double}, where compact goal spaces and short-horizon value estimates provide useful guidance. In these cases, flat policies remain competitive, and hierarchy may offer limited benefit.

DSP shows clearer advantages on multi-step interaction tasks such as \texttt{scene-play} and compositional tasks in the \texttt{puzzle} suite, where consistent subgoal coordination is more important. By generating data-supported subgoals without explicit high-level value comparisons, DSP performs strongly on \texttt{scene-play} and \texttt{puzzle-4x4}, while remaining competitive across the suite.

Overall, these results complement the locomotion findings: DSP is most useful when high-level planning requires stable multi-step subgoal generation, while remaining competitive on more local manipulation tasks.

\subsection{Guidance, Data Support, and Executor Reachability}
\begin{wraptable}{r}{0.53\textwidth}
    \vspace{-0.8em}
    \centering
    \caption{
        Guidance--support--reachability trade-off.
        $D$: support distance; $V$: validity;
        $R$: valid-and-reached rate.
        $V$, $R$, and Succ. are percentages.
    }
    \label{tab:guidance_reachability}
    \footnotesize
    \setlength{\tabcolsep}{2.8pt}
    \begin{tabular*}{\linewidth}{@{\extracolsep{\fill}}lccccc@{}}
    \toprule
    Datasets & $\omega$ & $D\downarrow$
    & $V\uparrow$ & $R\uparrow$ & Succ.$\uparrow$ \\
    \midrule
    \multirow[c]{4}{*}{\texttt{AntMaze-Giant}}
    & 1  & 1.6 & 100.0 & 68.8 & 46.0 \\
    & 3  & 1.7 & 100.0 & 62.1 & \textbf{69.2} \\
    & 5  & 1.8 & 98.8  & 51.6 & 56.4 \\
    & 10 & 2.1 & 96.9  & 40.6 & 27.0 \\
    \midrule
    \multirow[c]{4}{*}{\texttt{Scene-Play}}
    & 1  & 1.8 & 98.0 & 92.4 & 35.6 \\
    & 3  & 2.0 & 96.5 & 90.7 & 50.4 \\
    & 5  & 2.4 & 94.5 & 86.6 & \textbf{62.6} \\
    & 10 & 3.7 & 84.4 & 45.3 & 35.2 \\
    \bottomrule
\end{tabular*}
    \vspace{-0.8em}
\end{wraptable}
To quantify how guidance affects subgoal feasibility, we hold the trained checkpoints and all other settings fixed and vary $\omega$. We measure standardized training-support distance $D$, state-validity rate $V$, valid-and-reached rate $R$, and task success. Lower $D$ indicates greater proximity to the offline data, while higher $V$ and $R$ indicate more valid and executable subgoals. Full metric definitions are provided in Appendix~\ref{app:guidance_reachability}.

Relative to $\omega=1$, the family-default scales improve success by 23.2 and 27.0 percentage points on Ant-Giant and Scene-Play, respectively, while reducing the valid-and-reached rate by 6.7 and 5.8 points. At $\omega=10$, support distance increases and validity, reachability, and success all deteriorate. These results show that moderate guidance balances goal direction and executability, whereas excessive guidance is a practical failure mode; DSP does not provide a hard subgoal-feasibility guarantee.

\section{Conclusion}
\label{sec:7}
In this work, we propose Diffusion Subgoal Planning (DSP), a generative framework for high-level subgoal planning in offline goal-conditioned reinforcement learning. DSP targets a key limitation of hierarchical methods: high-level subgoal selection can become unreliable when guided by noisy goal-conditioned value estimates in long-horizon tasks. Instead of explicitly querying values for high-level planning, DSP learns unconditional and goal-conditioned velocity fields and uses classifier-free guidance to bias generated subgoals toward the desired goal. Combined with a low-level policy trained through implicit value learning, DSP achieves strong performance across challenging OGBench locomotion and manipulation tasks. These results suggest that guided generative subgoal modeling is a promising alternative to explicit value-based high-level planning in long-horizon offline GCRL.

\section*{Acknowledgments}
This work was supported by the Key Research and Development Program of Jiangsu Province under Grant BE2022095. We thank Qiyu Wang at University College London for helpful discussions and constructive feedback on this work. We also thank the reviewers and Area Chair for their constructive feedback, which helped improve the paper.

\clearpage
\bibliographystyle{plainnat}
\bibliography{reference}


\newpage
\appendix

\section{Limitations and Future Work}
\label{apdx:limitations}

DSP targets offline, state-based goal-conditioned RL settings where the dataset contains meaningful local reachability structure. As with many offline generative planning methods, performance may be affected when data coverage is limited or evaluation goals deviate substantially from the behavior data. DSP also uses iterative diffusion sampling for high-level subgoal generation, introducing an inference cost controlled by the number of sampling steps; in our experiments, a modest number of steps is sufficient for strong performance. 

Future work may extend DSP to partially observable domains, develop more efficient generative parameterizations for faster subgoal sampling, and explore pretrained representations or cross-task transfer to reduce dependence on task-specific data coverage.

\section{Proofs}
\label{apdx:proofs}

\subsection{Proof of Proposition~\ref{prop:4.1}}
\label{apdx:proposition1_proof}
\begin{proof}
    For simplicity, we assume that $T/k$ is an integer and $k \leq T$. 
    Defining $z_1 := z_{1,T}$ and $z_2 := z_{-1,T}$,
    the probability of the flat policy $\pi$ selecting an incorrect action can be computed as follows:
    \begin{align}
        \mathcal{E}(\pi)
        &= \mathbb{P}[\hat V(s+1, g) \leq \hat V(s-1, g)] \\
        &= \mathbb{P}[\hat V(1, T) \leq \hat V(-1, T)] \\
        &= \mathbb{P}[-(T-1)(1 + \sigma z_1) \leq -(T+1)(1 + \sigma z_2)] \\
        &= \mathbb{P}[z_1 \sigma (T-1) - z_2 \sigma (T+1) \geq 2] \\
        &= \mathbb{P}[z \sigma \sqrt{T^2 + 1} \leq - \sqrt 2] \\
        &= \Phi\left(-\frac{\sqrt 2}{\sigma \sqrt{T^2 + 1}}\right),
    \end{align}
    where $z$ is a standard Gaussian random variable,
    and we use the fact that the sum of two independent Gaussian random variables
    with standard deviations of $\sigma_1$ and $\sigma_2$
    follows a normal distribution with a standard deviation of $\sqrt{\sigma_1^2 + \sigma_2^2}$.
    Similarly, the probability of the hierarchical policy $\pi^l \circ \pi^h$ selecting an incorrect action is bounded using a union bound as
    \begin{align}
        \mathcal{E}(\pi^l \circ \pi^h)
        &\leq \mathcal{E}(\pi^h) + \mathcal{E}(\pi^l) \\
        &= \mathbb{P}[\hat V(s+k, g) \leq \hat V (s-k, g)] + \mathbb{P}[\hat V(s+1, s+k) \leq \hat V(s-1, s+k)] \\
        &= \mathbb{P}[\hat V(k, T) \leq \hat V (-k, T)] + \mathbb{P}[\hat V(1, k) \leq \hat V(-1, k)] \\
        &= \Phi\left(-\frac{\sqrt 2}{\sigma \sqrt{(T/k)^2 + 1}}\right) + \Phi\left(-\frac{\sqrt 2}{\sigma \sqrt{k^2 + 1}}\right).
    \end{align}
\end{proof}

\subsection{Proof of Proposition~\ref{prop:4.2}}
\label{apdx:proposition2_proof}
\begin{proof}
We first derive the exact one-step decision error for an arbitrary
generated subgoal distribution. This general result is then specialized
to the matched-distance setting of Proposition~\ref{prop:4.2}.

\paragraph{Arbitrary generated subgoals.}
Let $w\sim\pi_{\mathrm{DSP}}^h(\cdot\mid s,g)$ be an arbitrary generated subgoal, assumed to be independent of the
low-level value-noise variables. The final goal $g=s+T$ lies to the
right of the current state $s$, so a globally incorrect primitive
action moves to the left.

First consider $w>s$ and define $d:=w-s>0.$
The distances from the two successor states to $w$ are
\begin{align}
    |s+1-w| = |d-1|, \quad |s-1-w| = d+1.
\end{align}
Let $z_+ := z_{s+1,w}, z_- := z_{s-1,w},$
where $z_+,z_-\sim\mathcal{N}(0,1)$ independently. Under the noisy-value
model,
\begin{align}
    \hat V(s+1,w)
    &=
    -|d-1|\left(1+\sigma z_+\right),\\
    \hat V(s-1,w)
    &=
    -(d+1)\left(1+\sigma z_-\right).
\end{align}
Since both $w$ and the final goal lie to the right of $s$, the
low-level policy makes a globally incorrect decision when it selects
the left action:
\begin{align}
    \hat V(s+1,w)
    \le
    \hat V(s-1,w).
\end{align}
Substituting the noisy values and rearranging gives
\begin{align}
    \sigma\left(
        |d-1|z_+-(d+1)z_-
    \right)
    \ge
    (d+1)-|d-1|.
\end{align}
Using
\begin{align}
    (d+1)-|d-1|
    &=
    2\min\{d,1\},\\
    |d-1|^2+(d+1)^2
    &=
    2(d^2+1),
\end{align}
we obtain
\begin{align}
    &\mathbb{P}\left[
        \hat V(s+1,w)
        \le
        \hat V(s-1,w)
        \mid w
    \right]
    =
    \Phi\left(
        -\frac{\sqrt{2}\min\{d,1\}}
        {\sigma\sqrt{d^2+1}}
    \right)
    =:
    p_\ell(d).
    \label{eq:conditional_low_level_error}
\end{align}

Now consider $w<s$ and define $d:=s-w>0$. By symmetry,
$p_\ell(d)$ is the probability that the low-level policy moves to the
right, away from the selected subgoal. In this case, however, the right
action is globally correct because the final goal lies to the right.
Hence, conditioned on $w<s$, the probability of a globally incorrect
left action is
\[
    1-p_\ell(d).
\]
When $w=s$, the two actions are symmetric under the noise model, so the
globally incorrect left action is selected with probability $1/2$.
Taking expectation over $w$ therefore yields
\begin{align}
    &\mathcal{E}\left(
        \pi_{\mathrm{DSP}}^\ell
        \circ
        \pi_{\mathrm{DSP}}^h
    \right)
    =
    \mathbb{E}_{w}\left[
        p_\ell(w-s)\mathbf{1}\{w>s\}
        +
        \left(1-p_\ell(s-w)\right)\mathbf{1}\{w<s\}
    \right]
    +
    \frac{1}{2}\mathbb{P}(w=s).
    \label{eq:dsp_general_error}
\end{align}
Equation~\eqref{eq:dsp_general_error} holds for an arbitrary generated
subgoal distribution and does not require Gaussianity, symmetry, or
unimodality.

\paragraph{Matched-distance DSP error.}
Under the matched-distance setting of Proposition~\ref{prop:4.2}, DSP
selects $s+k$ with probability $1-\epsilon$ and $s-k$ with probability
$\epsilon$. Since $k\ge1$, Equation~\eqref{eq:conditional_low_level_error}
gives
\[
    p_\ell(k)
    =
    \Phi\left(
        -\frac{\sqrt{2}}
        {\sigma\sqrt{k^2+1}}
    \right).
\]
If DSP selects $s+k$, the globally incorrect action is chosen with
probability $p_\ell(k)$. If DSP selects $s-k$, the globally incorrect
action is chosen with probability $1-p_\ell(k)$. Therefore,
\begin{align}
    \mathcal{E}\left(
        \pi_{\mathrm{DSP}}^\ell
        \circ
        \pi_{\mathrm{DSP}}^h
    \right)
    &=
    (1-\epsilon)p_\ell(k)
    +
    \epsilon\left(1-p_\ell(k)\right)
    \nonumber\\
    &=
    p_\ell(k)
    +
    \left(1-2p_\ell(k)\right)\epsilon,
\end{align}
which proves Equation~\eqref{eq:dsp_exact_error}.

\paragraph{Matched-distance HIQL error.}
Let $H$ denote the event that HIQL selects the wrong-direction subgoal
$s-k$, and let $L$ denote the event that its low-level policy moves away
from the selected subgoal. Proposition~\ref{prop:4.1} gives
\begin{align}
    \mathbb{P}(H)
    &=
    p_h(T,k)
    =
    \Phi\left(
        -\frac{\sqrt{2}}
        {\sigma\sqrt{(T/k)^2+1}}
    \right).
\end{align}
By symmetry, the conditional low-level error is $p_\ell(k)$ for either
selected subgoal:
\begin{align}
    \mathbb{P}(L\mid H)
    =
    \mathbb{P}(L\mid H^c)
    =
    p_\ell(k).
\end{align}
Because the high- and low-level comparisons use independent value-noise
variables, $H$ and $L$ are independent.

The final primitive action is globally incorrect in exactly two cases:
the high-level direction is correct and the low-level policy moves away
from its subgoal, or the high-level direction is incorrect and the
low-level policy moves toward its subgoal. Hence,
\begin{align}
    \mathcal{E}\left(
        \pi^\ell\circ\pi^h
    \right)
    &=
    \mathbb{P}(H^c\cap L)
    +
    \mathbb{P}(H\cap L^c)
    \nonumber\\
    &=
    \left(1-p_h(T,k)\right)p_\ell(k)
    +
    p_h(T,k)\left(1-p_\ell(k)\right)
    \nonumber\\
    &=
    p_\ell(k)
    +
    \left(1-2p_\ell(k)\right)p_h(T,k),
\end{align}
which proves Equation~\eqref{eq:hiql_exact_error}.

Since $p_\ell(k)<1/2$, we have
\begin{align}
    &\mathcal{E}\left(
        \pi_{\mathrm{DSP}}^\ell
        \circ
        \pi_{\mathrm{DSP}}^h
    \right)
    <
    \mathcal{E}\left(
        \pi^\ell\circ\pi^h
    \right)
    \quad\Longleftrightarrow\quad
    \epsilon<p_h(T,k),
\end{align}
which proves Equation~\eqref{eq:dsp_hiql_condition}.

\paragraph{Gaussian directional-error special case.}
For the closed-form special case in
Equation~\eqref{eq:gaussian_direction_error}, consider an auxiliary
directional proposal
\[
    u\sim
    \mathcal{N}
    \left(
        s+k,\sigma_{\mathrm{data}}^2
    \right),
\]
where the matched-distance planner selects $s+k$ when $u>s$ and $s-k$
otherwise. This auxiliary proposal is used only to parameterize the
wrong-direction probability $\epsilon$. We have
\begin{align}
    \epsilon
    &=
    \mathbb{P}(u\le s)
    \nonumber\\
    &=
    \mathbb{P}\left[
        \frac{u-(s+k)}
        {\sigma_{\mathrm{data}}}
        \le
        -\frac{k}{\sigma_{\mathrm{data}}}
    \right]
    \nonumber\\
    &=
    \Phi\left(
        -\frac{k}{\sigma_{\mathrm{data}}}
    \right).
\end{align}
Condition~\eqref{eq:dsp_hiql_condition} therefore becomes
\begin{align}
    \Phi\left(
        -\frac{k}{\sigma_{\mathrm{data}}}
    \right)
    <
    \Phi\left(
        -\frac{\sqrt{2}}
        {\sigma\sqrt{(T/k)^2+1}}
    \right).
\end{align}
By the strict monotonicity of $\Phi$, this is equivalent to
\begin{align}
    \sqrt{(T/k)^2+1}
    >
    \frac{\sqrt{2}\sigma_{\mathrm{data}}}
    {\sigma k}.
    \label{eq:threshold_intermediate}
\end{align}
When
\[
    \frac{\sqrt{2}\sigma_{\mathrm{data}}}
    {\sigma k}>1,
\]
rearranging Equation~\eqref{eq:threshold_intermediate} yields
\begin{align}
    T >
    k\sqrt{
        \left(
            \frac{\sqrt{2}\sigma_{\mathrm{data}}}
            {\sigma k}
        \right)^2
        -1
    },
\end{align}
which is Equation~\eqref{eq:dsp_horizon_condition}. When
$\frac{\sqrt{2}\sigma_{\mathrm{data}}}{\sigma k}\le1$,
the left-hand side of Equation~\eqref{eq:threshold_intermediate} is
strictly greater than one for every $T>0$, so no additional positive
lower bound on $T$ is required.
\end{proof}

\subsection{Proof of Proposition~\ref{prop:5.1}}
\label{apdx:proposition3_proof}

\begin{proof}
We separate the proof into two parts. We first establish the exact
relationship between the conditional density ratio and the
behavior-policy goal-reaching value. We then connect classifier-free
guidance to $Q_\beta$-weighted subgoal extraction.

\paragraph{Density ratio as normalized goal-reaching value.}

By construction of the future-state relabeling process,
$Q_\beta(s,w,g)$ is the conditional probability mass or density of
sampling $g$ from the future continuation after first selecting $w$:
\begin{align}
    p_{\mathcal D}(g\mid s,w)
    =
    Q_\beta(s,w,g).
    \label{eq:proof_goal_conditional_q}
\end{align}
Moreover,
$p_{\mathcal D}(w\mid s)=\pi_\beta(w\mid s)$. Marginalizing over the
behavior subgoal distribution gives
\begin{align}
    p_{\mathcal D}(g\mid s)
    =
    \int
    p_{\mathcal D}(g\mid s,w)
    p_{\mathcal D}(w\mid s)
    \,dw                                                    
    =
    \int
    Q_\beta(s,w,g)
    \pi_\beta(w\mid s)
    \,dw                                                    
    =
    V_\beta(s,g).
    \label{eq:proof_goal_marginal_v}
\end{align}
Bayes' rule therefore yields
\begin{align}
    p_{\mathcal D}(w\mid s,g)
    =
    \frac{
        p_{\mathcal D}(g\mid s,w)
        p_{\mathcal D}(w\mid s)
    }{
        p_{\mathcal D}(g\mid s)
    }                                                       
    =
    \pi_\beta(w\mid s)
    \frac{
        Q_\beta(s,w,g)
    }{
        V_\beta(s,g)
    }.
    \label{eq:proof_bayes_q_policy}
\end{align}
Dividing both sides by
$p_{\mathcal D}(w\mid s)=\pi_\beta(w\mid s)$ gives
\begin{align}
    \frac{
        p_{\mathcal D}(w\mid s,g)
    }{
        p_{\mathcal D}(w\mid s)
    }
    =
    \frac{
        Q_\beta(s,w,g)
    }{
        V_\beta(s,g)
    }.
    \label{eq:proof_density_q}
\end{align}
Using
\begin{align}
    Q_\beta(s,w,g)
    =
    V_\beta(s,g)
    +
    A_\beta(s,w,g),
\end{align}
we obtain
\begin{align}
    \frac{
        p_{\mathcal D}(w\mid s,g)
    }{
        p_{\mathcal D}(w\mid s)
    }
    =
    1+
    \frac{
        A_\beta(s,w,g)
    }{
        V_\beta(s,g)
    }.
    \label{eq:proof_density_adv}
\end{align}
Taking logarithms yields
\begin{align}
    \widetilde A_\beta(s,w,g)
    &=
    \log
    \frac{
        p_{\mathcal D}(w\mid s,g)
    }{
        p_{\mathcal D}(w\mid s)
    }                                                       
    =
    \log
    \frac{
        Q_\beta(s,w,g)
    }{
        V_\beta(s,g)
    }                                                       
    =
    \log
    \left(
        1+
        \frac{
            A_\beta(s,w,g)
        }{
            V_\beta(s,g)
        }
    \right).
    \label{eq:proof_log_relative_adv}
\end{align}

For fixed $(s,g)$, $V_\beta(s,g)$ does not depend on $w$. Wherever
$Q_\beta(s,w,g)>0$,
\begin{align}
    \frac{
        \partial
        \widetilde A_\beta(s,w,g)
    }{
        \partial
        A_\beta(s,w,g)
    }
    &=
    \frac{
        1
    }{
        V_\beta(s,g)
        +
        A_\beta(s,w,g)
    }                                                       \notag \\
    &=
    \frac{
        1
    }{
        Q_\beta(s,w,g)
    }
    >0.
    \label{eq:proof_monotone_adv}
\end{align}
Thus, $\widetilde A_\beta$ is a strictly increasing transformation of
the behavior-policy RL advantage. In particular,
\begin{align}
    \widetilde A_\beta(s,w,g)>0
    \quad\Longleftrightarrow\quad
    A_\beta(s,w,g)>0,
\end{align}
and both quantities induce the same ordering over $w$ for fixed
$(s,g)$.

\paragraph{Classifier-free guidance as $Q_\beta$-weighted extraction.}

We next connect the density-ratio identity to the guided velocity field
used by DSP. Under the score-approximation interpretation adopted in
Proposition~\ref{prop:5.1}, the unconditional and goal-conditioned
velocity fields satisfy
\begin{align}
    v_\theta(w^i,i,s,\emptyset)
    &\approx
    \nabla_{w^i}
    \log p_i(w^i\mid s), \quad
    v_\theta(w^i,i,s,g)
    \approx
    \nabla_{w^i}
    \log p_i(w^i\mid s,g),
\end{align}
where $p_i$ denotes the corresponding distribution at diffusion time
$i$. Therefore, their difference satisfies
\begin{align}
    &v_\theta(w^i,i,s,g)
    -
    v_\theta(w^i,i,s,\emptyset) 
    \approx
    \nabla_{w^i}
    \log
    \frac{
        p_i(w^i\mid s,g)
    }{
        p_i(w^i\mid s)
    }.
    \label{eq:proof_velocity_density_ratio}
\end{align}

Substituting these approximations into the CFG velocity field gives
\begin{align}
    \hat v_\theta(w^i,i)
    &=
    v_\theta(w^i,i,s,\emptyset)
    +
    \omega
    \left(
        v_\theta(w^i,i,s,g)
        -
        v_\theta(w^i,i,s,\emptyset)
    \right)                                               \notag \\
    &\approx
    \nabla_{w^i}
    \left[
        \log p_i(w^i\mid s)
        +
        \omega
        \log
        \frac{
            p_i(w^i\mid s,g)
        }{
            p_i(w^i\mid s)
        }
    \right]                                                \notag \\
    &=
    \nabla_{w^i}
    \log
    \left[
        p_i(w^i\mid s)
        \left(
            \frac{
                p_i(w^i\mid s,g)
            }{
                p_i(w^i\mid s)
            }
        \right)^\omega
    \right].
\end{align}

Under the corresponding endpoint-density interpretation of CFG, the
induced subgoal distribution is
\begin{align}
    p_\omega(w\mid s,g)
    &\propto
    p(w\mid s)
    \left(
        \frac{
            p(w\mid s,g)
        }{
            p(w\mid s)
        }
    \right)^\omega.                                       \label{eq:proof_cfg_density_ratio}
\end{align}
Using
$p(w\mid s)=\pi_\beta(w\mid s)$ and the density-ratio identity
\begin{align}
    \frac{
        p(w\mid s,g)
    }{
        p(w\mid s)
    }
    =
    \frac{
        Q_\beta(s,w,g)
    }{
        V_\beta(s,g)
    },
\end{align}
we obtain
\begin{align}
    p_\omega(w\mid s,g)
    \propto
    \pi_\beta(w\mid s)
    \left(
        \frac{
            Q_\beta(s,w,g)
        }{
            V_\beta(s,g)
        }
    \right)^\omega                                        
    \propto
    \pi_\beta(w\mid s)
    Q_\beta(s,w,g)^\omega,
    \label{eq:proof_cfg_q_weight}
\end{align}
where $V_\beta(s,g)^\omega$ is absorbed into the normalization constant
because it does not depend on $w$. Equivalently,
\begin{align}
    p_\omega(w\mid s,g)
    \propto
    \pi_\beta(w\mid s)
    \exp
    \left(
        \omega
        \widetilde A_\beta(s,w,g)
    \right),
\end{align}
where $\widetilde A_\beta(s,w,g) = \log \frac{Q_\beta(s,w,g)}{V_\beta(s,g)}.$
\end{proof}

\section{Implementation Details}
\label{apdx:impl_details}
\subsection{Goal Mixed Sampling}
We adopt the standard HER \citep{HER,OGBench} trick provided by OGBench. To train both value networks and actors, we employ a mixture of three distinct goal distributions:
\begin{itemize}
    \item ${p_{{\mathrm{cur}}}}\left( {g\left| s \right.} \right)$: A degenerate distribution at the current state $s$ (i.e., $g = s$).
    \item ${p_{{\mathrm{traj}}}}\left( {g\left| s \right.} \right)$: A uniform distribution over future states visited within the same trajectory as the current state $s$.
    \item ${p_{{\mathrm{rand}}}}\left( {g\left| s \right.} \right)$: A uniform distribution over arbitrary states sampled from the entire dataset, independent of the current state $s$.
\end{itemize}
The specific mixing ratios for each task are detailed in Table~\ref{tab:common_hyperparameter}.

\subsection{Target Value Network}
In our proposed method, we retain the conventional use of target value networks to further stabilize the training process. Previous studies \citep{UVFA, RIS} have demonstrated that this technique facilitates faster convergence with negligible computational overhead.

\subsection{Architectures}
In our experiments, the value networks and policies across all methods are implemented as three-layer MLPs that take the concatenation of states and goals as input.

For most hierarchical baselines, an additional subgoal encoder is used to map subgoals into a 10-dimensional latent space, which serves as the interface between the high-level planner and the low-level policy. In contrast, DSP directly operates in the original subgoal space and therefore does not introduce an explicit subgoal encoder. We also report results for HIQL w/o rep, which removes this subgoal encoding for comparison.

Empirical results for HIQL w/o rep in Table~\ref{tab:main_result} indicate that subgoal encoding can noticeably improve performance in high-dimensional state spaces, with the effect being particularly evident in challenging environments such as \texttt{humanoidmaze}. Although computational constraints prevent a systematic ablation of architectural choices, this observation is consistent with prior work on representation learning \citep{QRL,CRL,METRA,ProQ}, which suggests that appropriate dimensional compression can facilitate learning and planning in complex environments.

\section{Hyperparameters}
\label{apdx:hyperparameters}
To support reproducibility, \ref{tab:common_hyperparameter} reports the shared hyperparameters, while \ref{tab:specific_hyperparameters} lists environment-specific settings for the guidance scale $\omega$, policy temperature $\alpha$, and subgoal interval $k$. We select $\omega$ via a coarse sweep on representative tasks and then fix it across all tasks and seeds within each environment family; $k$ is likewise shared within each family. The number of flow integration steps is fixed globally at $N=20$.

\begin{table}[H]
    \centering

    \setlength{\abovecaptionskip}{3pt}
    \setlength{\belowcaptionskip}{2pt}

    \caption{Common hyperparameters for DSP in our experiments.}
    \label{tab:common_hyperparameter}

    \footnotesize
    \setlength{\tabcolsep}{3.5pt}
    \renewcommand{\arraystretch}{0.82}

    \begin{tabular}{@{}ll@{}}
        \toprule
        \textbf{Hyperparameter} & \textbf{Value} \\
        \midrule
        Learning rate & $0.0003$ \\
        Optimizer & Adam \\
        Batch size & $1024$ \\
        Total gradient steps & $1000000$ \\
        MLP dimensions & $[512, 512, 512]$ \\
        Activation function & GELU \\
        Target network smoothing coefficient $\kappa$ & $0.005$ \\
        Discount factor $\gamma$ & $0.995$ (humanoidmaze), $0.99$ (others) \\
        \midrule
        Expectile $\varepsilon$ & $0.7$ \\
        Diffusion time sampling distribution & $\mathcal{U}(0,1)$ \\
        Diffusion step $N$ & $20$ \\
        Dropout probability $p_{\mathrm{unc}}$ & $0.1$ \\
        \midrule
        Value goal mix ratio
        $({p_{\mathrm{cur}}},{p_{\mathrm{traj}}},{p_{\mathrm{rand}}})$
        & $(0.2, 0.5, 0.3)$ \\
        \midrule
        Policy goal mix ratio
        $({p_{\mathrm{cur}}},{p_{\mathrm{traj}}},{p_{\mathrm{rand}}})$
        & $(0, 1.0, 0)$ \\
        \bottomrule
    \end{tabular}
\end{table}

\begin{table}[H]
    \centering
    \setlength{\abovecaptionskip}{3pt}
    \setlength{\belowcaptionskip}{2pt}
    \caption{
        Task specific hyperparameters for DSP in our experiments.
    }
    \label{tab:specific_hyperparameters}
    \footnotesize
    \setlength{\tabcolsep}{3.2pt}
    \renewcommand{\arraystretch}{0.82}

    \begin{tabular*}{0.72\linewidth}{
    @{\extracolsep{\fill}}llccc@{}
}
        \toprule
        \textbf{Environment} & \textbf{Datasets} & $\omega$ & $\alpha$ & $k$ \\
        \midrule
        \multirow{8}{*}{\texttt{pointmaze}} 
        & \texttt{pointmaze-medium-navigate-v0} & $3.0$ & $3.0$ & $25$ \\
        & \texttt{pointmaze-large-navigate-v0} & $3.0$ & $3.0$ & $25$ \\
        & \texttt{pointmaze-giant-navigate-v0} & $3.0$ & $3.0$ & $25$ \\
        & \texttt{pointmaze-teleport-navigate-v0} & $3.0$ & $3.0$ & $25$ \\
        & \texttt{pointmaze-medium-stitch-v0} & $3.0$ & $3.0$ & $25$ \\
        & \texttt{pointmaze-large-stitch-v0} & $3.0$ & $3.0$ & $25$ \\
        & \texttt{pointmaze-giant-stitch-v0} & $3.0$ & $3.0$ & $25$ \\
        & \texttt{pointmaze-teleport-stitch-v0} & $3.0$ & $3.0$ & $25$ \\
        \midrule
        \multirow{8}{*}{\texttt{antmaze}} 
        & \texttt{antmaze-medium-navigate-v0} & $3.0$ & $3.0$ & $25$ \\
        & \texttt{antmaze-large-navigate-v0} & $3.0$ & $3.0$ & $25$ \\
        & \texttt{antmaze-giant-navigate-v0} & $3.0$ & $3.0$ & $25$ \\
        & \texttt{antmaze-teleport-navigate-v0} & $3.0$ & $3.0$ & $25$ \\
        & \texttt{antmaze-medium-stitch-v0} & $3.0$ & $3.0$ & $25$ \\
        & \texttt{antmaze-large-stitch-v0} & $3.0$ & $3.0$ & $25$ \\
        & \texttt{antmaze-giant-stitch-v0} & $3.0$ & $3.0$ & $25$ \\
        & \texttt{antmaze-teleport-stitch-v0} & $3.0$ & $3.0$ & $25$ \\
        \midrule
        \multirow{6}{*}{\texttt{humanoidmaze}} 
        & \texttt{humanoidmaze-medium-navigate-v0} & $1.5$ & $3.0$ & $100$ \\
        & \texttt{humanoidmaze-large-navigate-v0} & $1.5$ & $3.0$ & $100$ \\
        & \texttt{humanoidmaze-giant-navigate-v0} & $1.5$ & $3.0$ & $100$ \\
        & \texttt{humanoidmaze-medium-stitch-v0} & $1.5$ & $3.0$ & $100$ \\
        & \texttt{humanoidmaze-large-stitch-v0} & $1.5$ & $3.0$ & $100$ \\
        & \texttt{humanoidmaze-giant-stitch-v0} & $1.5$ & $3.0$ & $100$ \\
        \midrule
        \multirow{4}{*}{\texttt{antsoccer}} 
        & \texttt{antsoccer-arena-navigate-v0} & $3.0$ & $3.0$ & $25$ \\
        & \texttt{antsoccer-medium-navigate-v0} & $3.0$ & $3.0$ & $25$ \\
        & \texttt{antsoccer-arena-stitch-v0} & $3.0$ & $3.0$ & $25$ \\
        & \texttt{antsoccer-medium-stitch-v0} & $3.0$ & $3.0$ & $25$ \\
        \midrule
        \multirow{2}{*}{\texttt{cube}} 
        & \texttt{cube-single-play-v0} & $5.0$ & $3.0$ & $20$ \\
        & \texttt{cube-double-play-v0} & $5.0$ & $3.0$ & $20$ \\
        \midrule
        \multirow{1}{*}{\texttt{scene}} 
        & \texttt{scene-play-v0} & $5.0$ & $3.0$ & $20$ \\
        \midrule
        \multirow{2}{*}{\texttt{puzzle}}
        & \texttt{puzzle-3x3-play-v0} & $5.0$ & $3.0$ & $20$ \\
        & \texttt{puzzle-4x4-play-v0} & $5.0$ & $3.0$ & $20$ \\
        \midrule
        \multirow{6}{*}{\texttt{D4RL-antmaze}}
        & \texttt{antmaze-umaze-v2} & $3.0$ & $3.0$ & $25$ \\
        & \texttt{antmaze-umaze-diverse-v2} & $3.0$ & $3.0$ & $25$ \\
        & \texttt{antmaze-medium-play-v2} & $3.0$ & $3.0$ & $25$ \\
        & \texttt{antmaze-medium-diverse-v2} & $3.0$ & $3.0$ & $25$ \\
        & \texttt{antmaze-large-play-v2} & $3.0$ & $3.0$ & $25$ \\
        & \texttt{antmaze-large-diverse-v2} & $3.0$ & $3.0$ & $25$ \\
        \bottomrule
    \end{tabular*}
\end{table}

\section{Computational Resources}
\label{apdx:computational_resources}
All experiments were conducted on two compute servers, each equipped with four NVIDIA GeForce RTX 4090 GPUs (24 GB of VRAM per GPU). Unless otherwise specified, all reported training times correspond to runs executed on a single GPU. For locomotion tasks, training DSP on the largest environment (\texttt{humanoidmaze-giant-navigate-v0}) requires approximately one hour on a single GPU. For manipulation tasks, the corresponding training time is around 40 minutes.

\section{Environments and Datasets}
\label{apdx:env_and_dat}
This section provides a detailed description of each task. Visualizations of the environments are presented in Figure~\ref{fig:environments}. For further technical specifications, we refer readers to the OGBench white paper.

\textbf{Maze.} \texttt{Maze} is a challenging sparse-reward, long-horizon locomotion task requiring the agent to navigate from an arbitrary starting position to a specified goal. The environments vary by agent complexity: the 2-DoF \texttt{point}, the 8-DoF \texttt{ant}, and the 21-DoF \texttt{humanoid}. Each maze is categorized by size: \texttt{medium}, \texttt{large}, and \texttt{giant}. Additionally, the \texttt{teleport} configuration mirrors the large layout but incorporates random teleporters—some leading to dead ends—to test robustness against environmental stochasticity. The datasets are distinct based on collection methods; specifically, the navigate datasets consist of full trajectories collected by a noisy expert policy attempting to reach randomly sampled goals. In contrast, the \texttt{stitch} datasets are composed of short goal-reaching trajectories with a maximum length of four cell units, requiring the agent to stitch together multiple trajectory segments to solve long-horizon tasks.

\textbf{AntSoccer.} \texttt{Antsoccer} is introduced to further evaluate multi-goal generalization. In this environment, a quadrupedal Ant robot must dribble a ball to a target location. This task exceeds the difficulty of \texttt{antmaze}, as it requires simultaneous navigation and precise ball manipulation. The environment includes two layouts: \texttt{arena}, an open space without walls, and \texttt{medium}, which corresponds to the layout in \texttt{antmaze}.

\textbf{Cube.} \texttt{Cube} involves complex robotic manipulation focused on pick-and-place operations, where a robotic arm must arrange cubes into a specific configuration. The training dataset consists of \texttt{play}-style data generated by a scripted policy that randomly moves and stacks cubes. During evaluation, the agent is tasked with moving, stacking, swapping, or arranging cubes to match a goal configuration. Success requires learning generalizable multi-object behaviors and long-horizon reasoning from unstructured, stochastic trajectories.

\begin{figure*}
    \centering
    \includegraphics[width=0.18\linewidth]{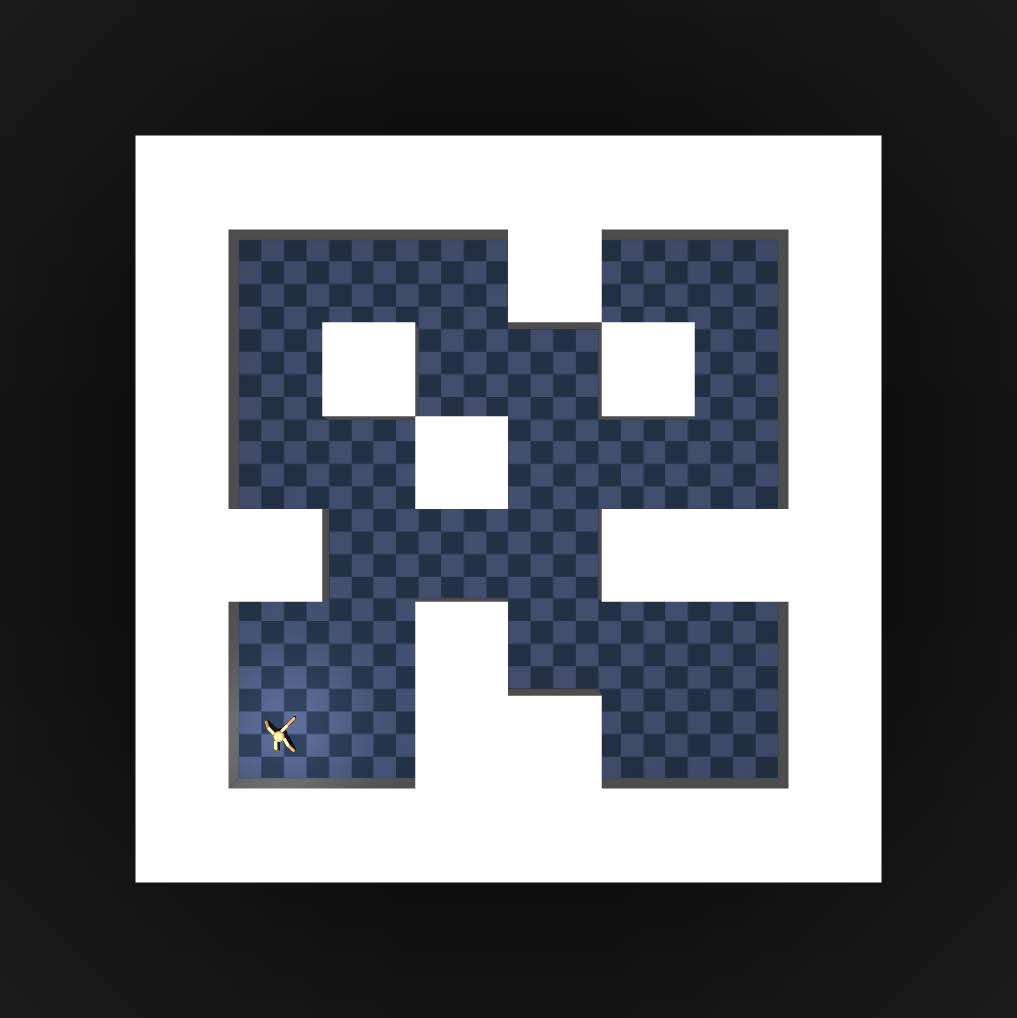}
    \includegraphics[width=0.18\linewidth]{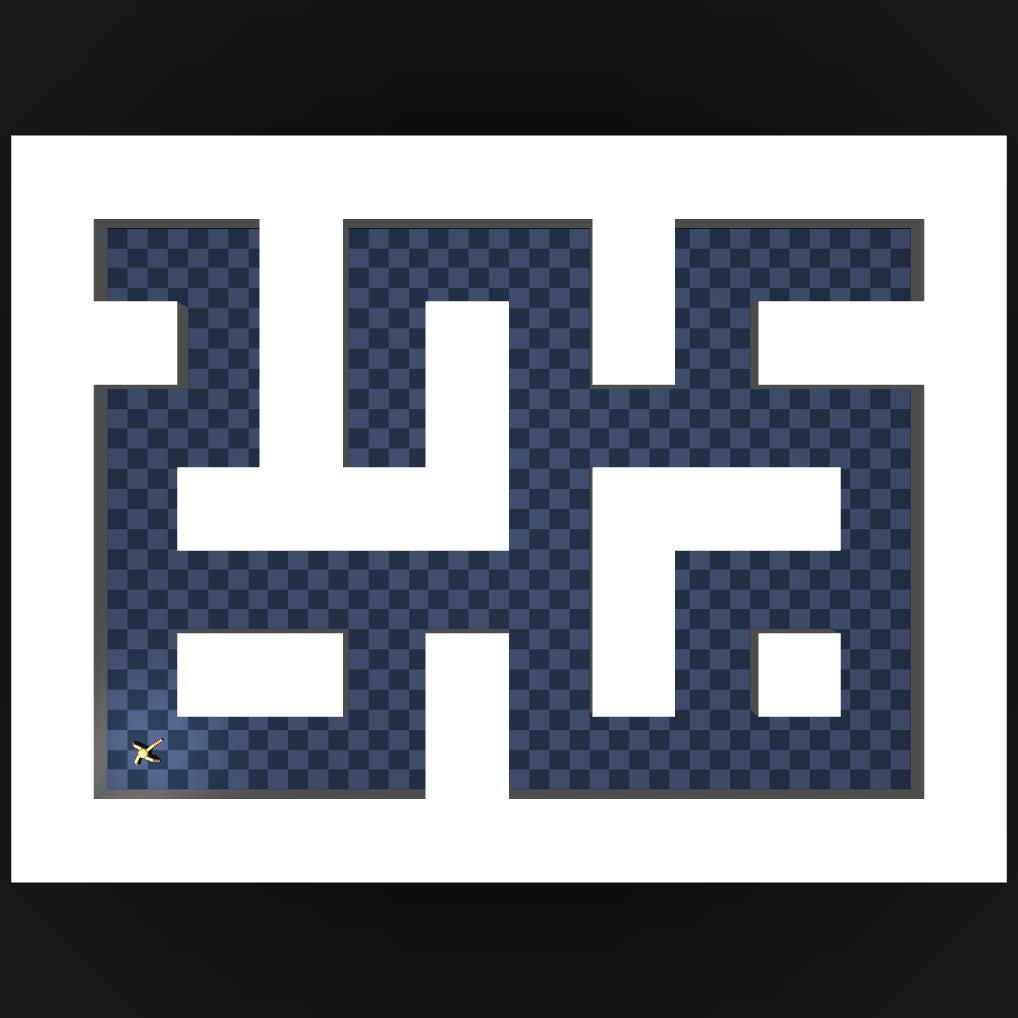}
    \includegraphics[width=0.18\linewidth]{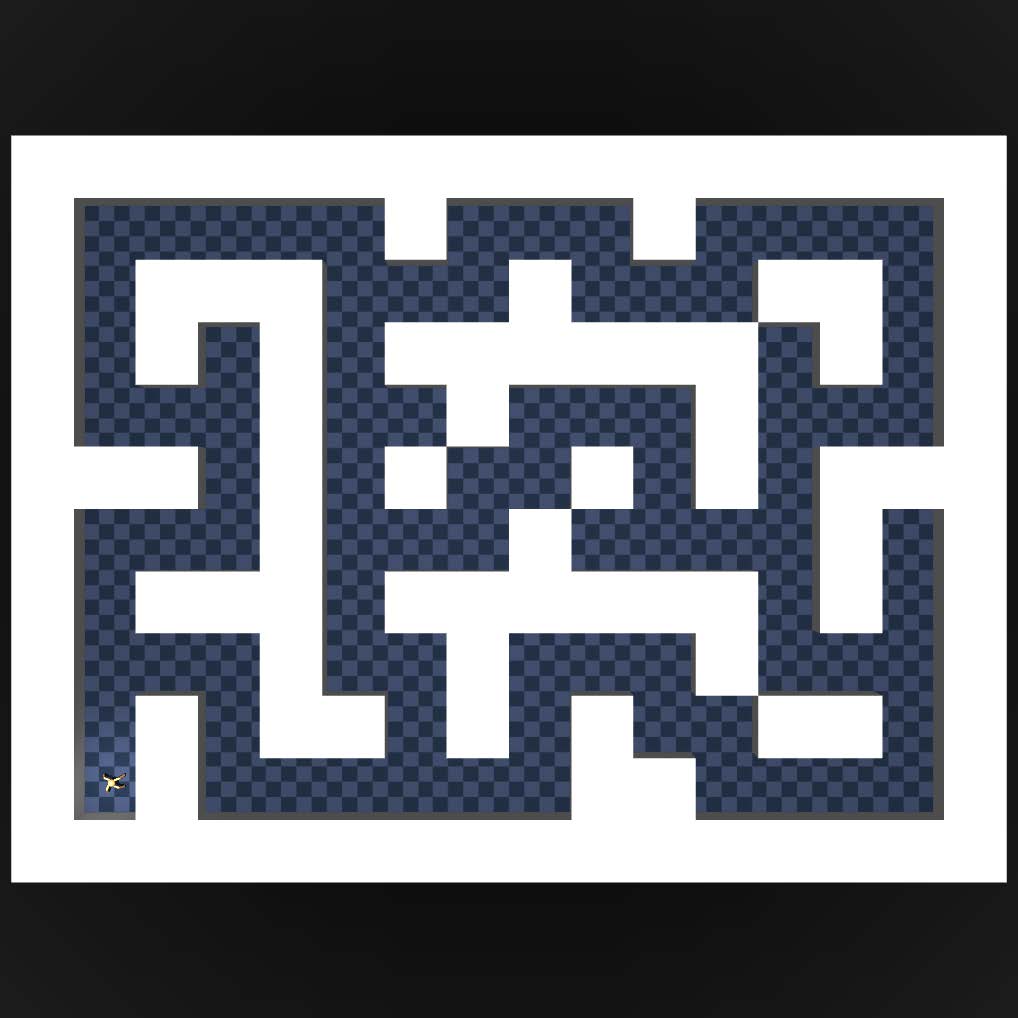}
    \includegraphics[width=0.18\linewidth]{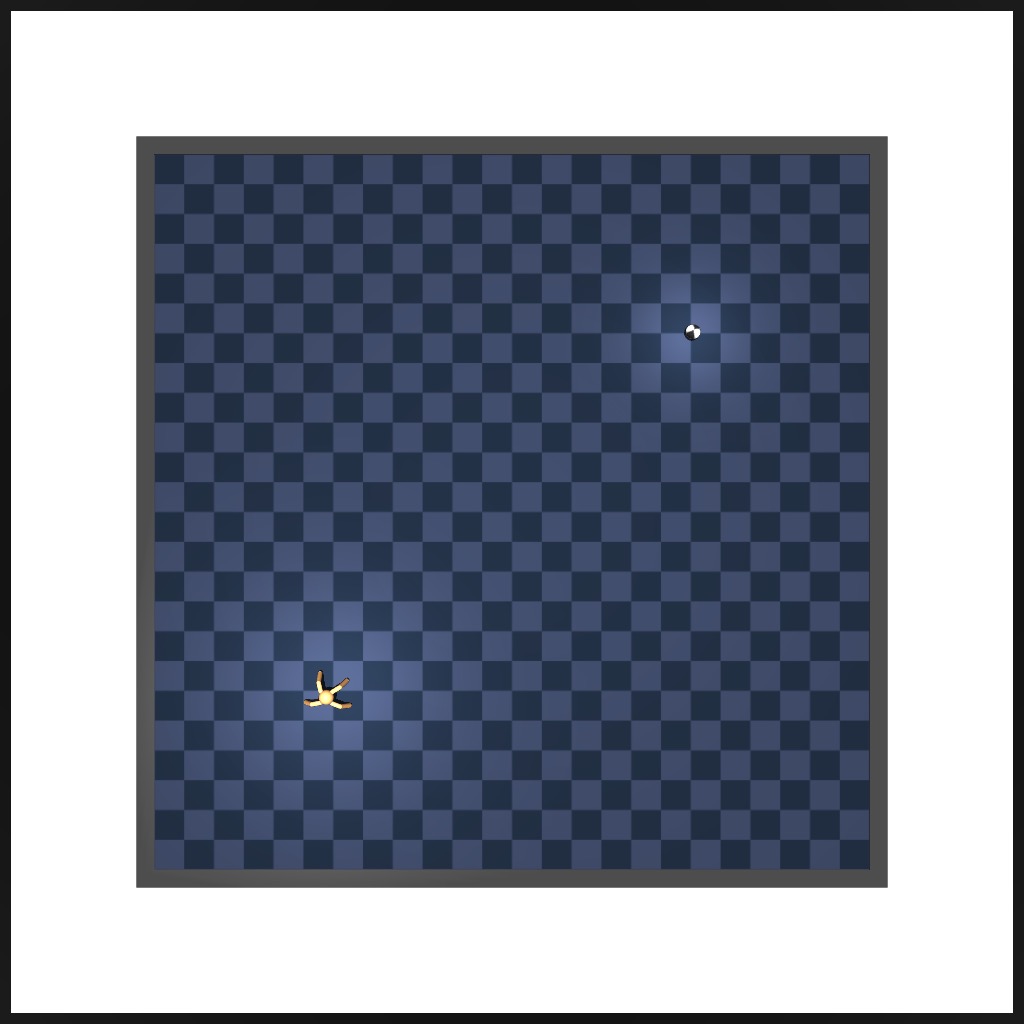}
    \includegraphics[width=0.18\linewidth]{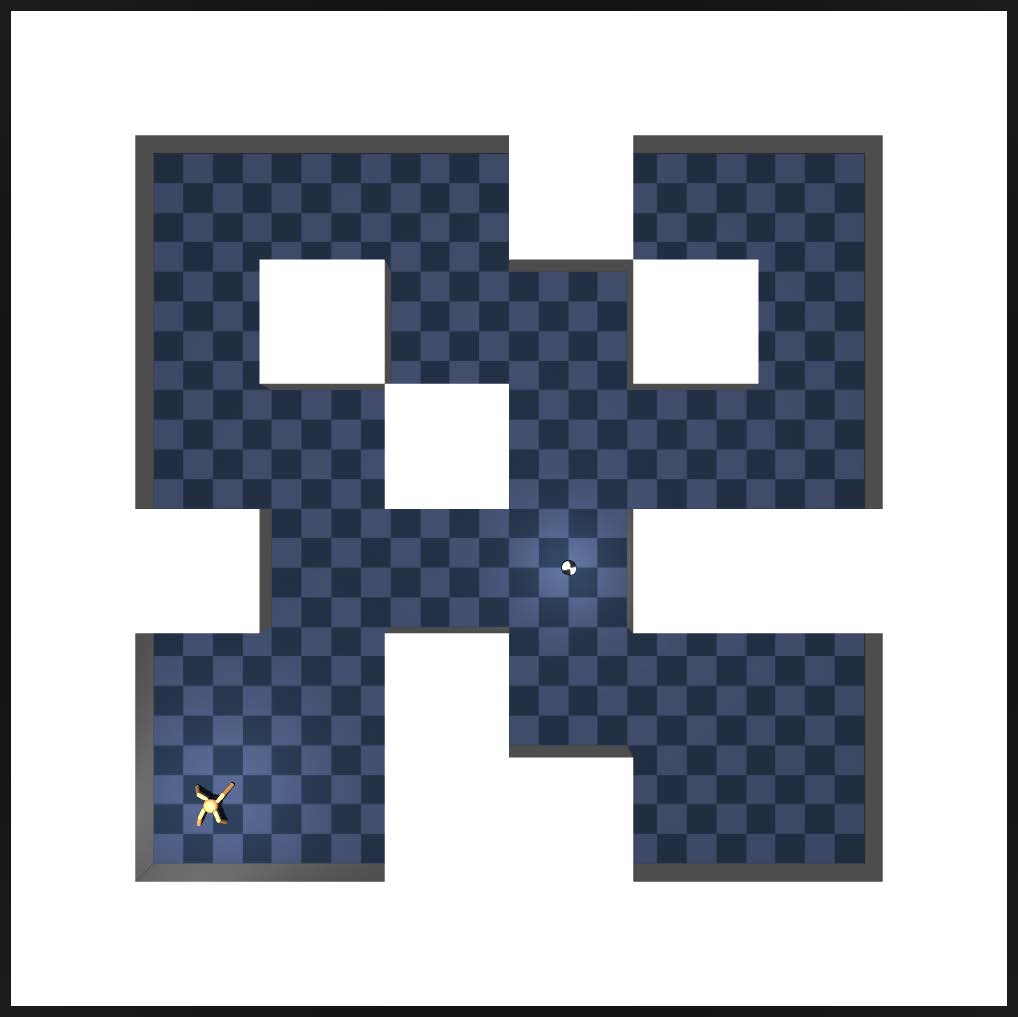}
    \includegraphics[width=0.18\linewidth]{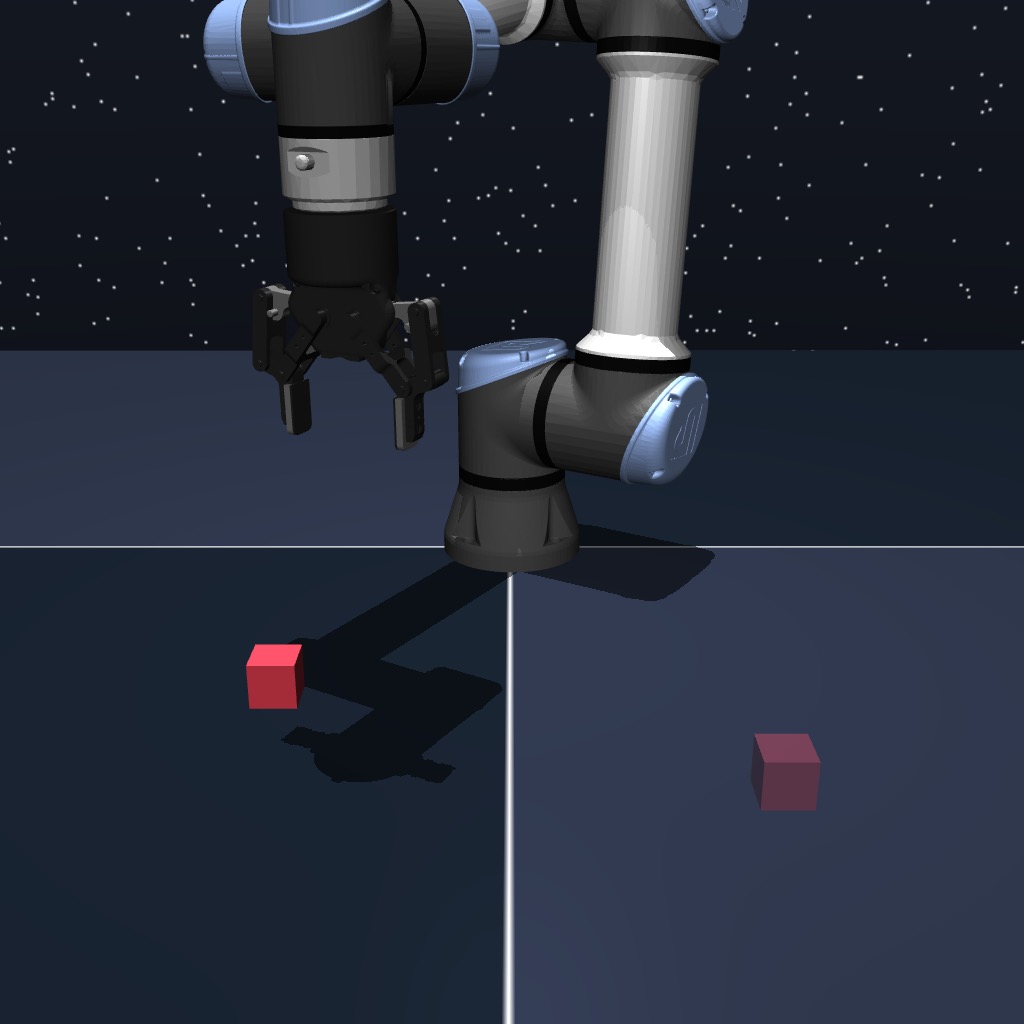}
    \includegraphics[width=0.18\linewidth]{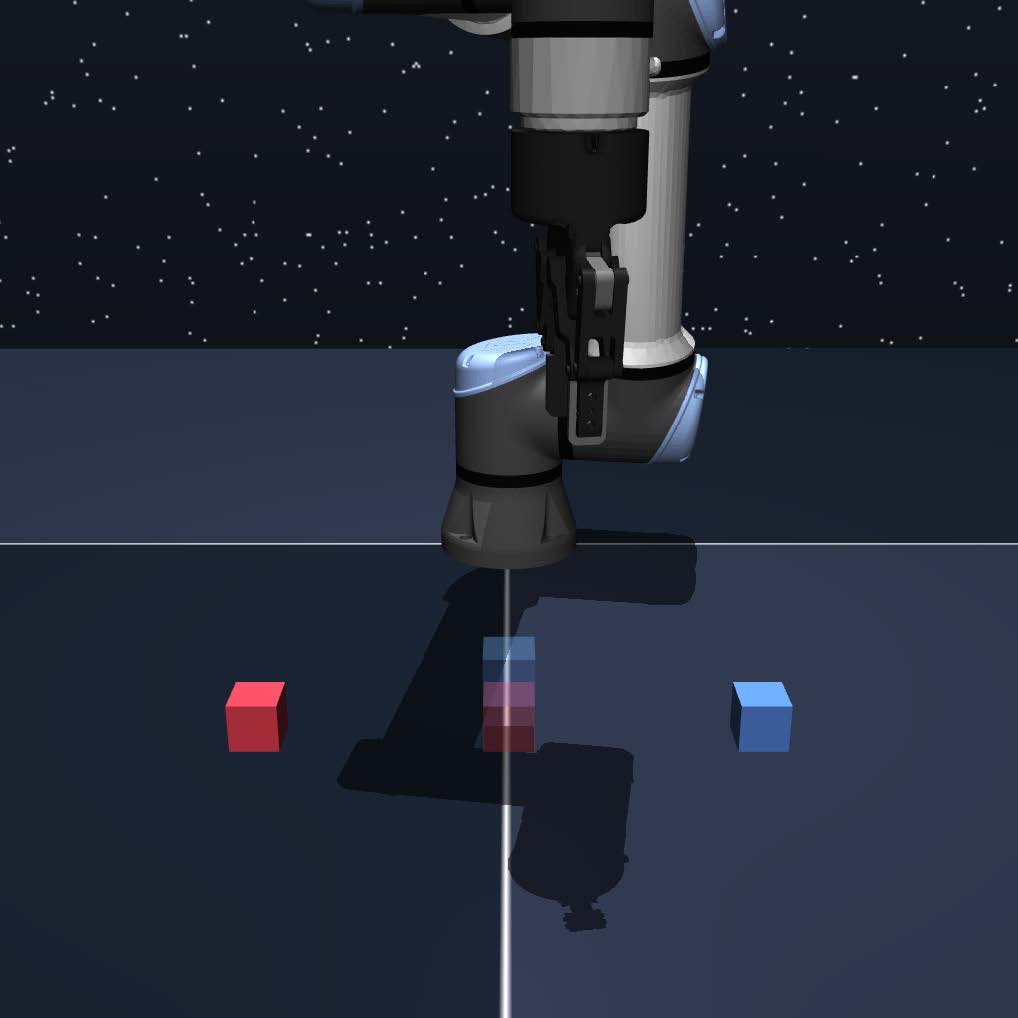}
    \includegraphics[width=0.18\linewidth]{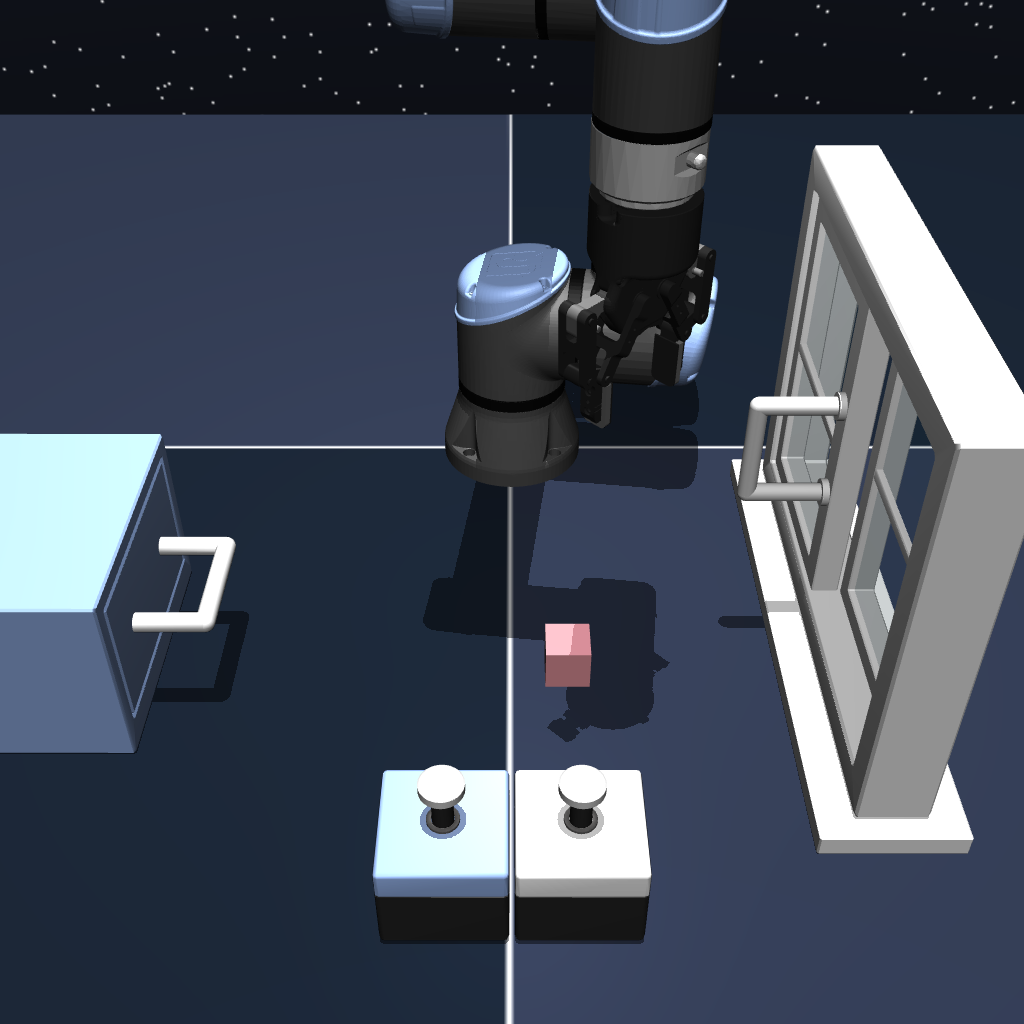}
    \includegraphics[width=0.18\linewidth]{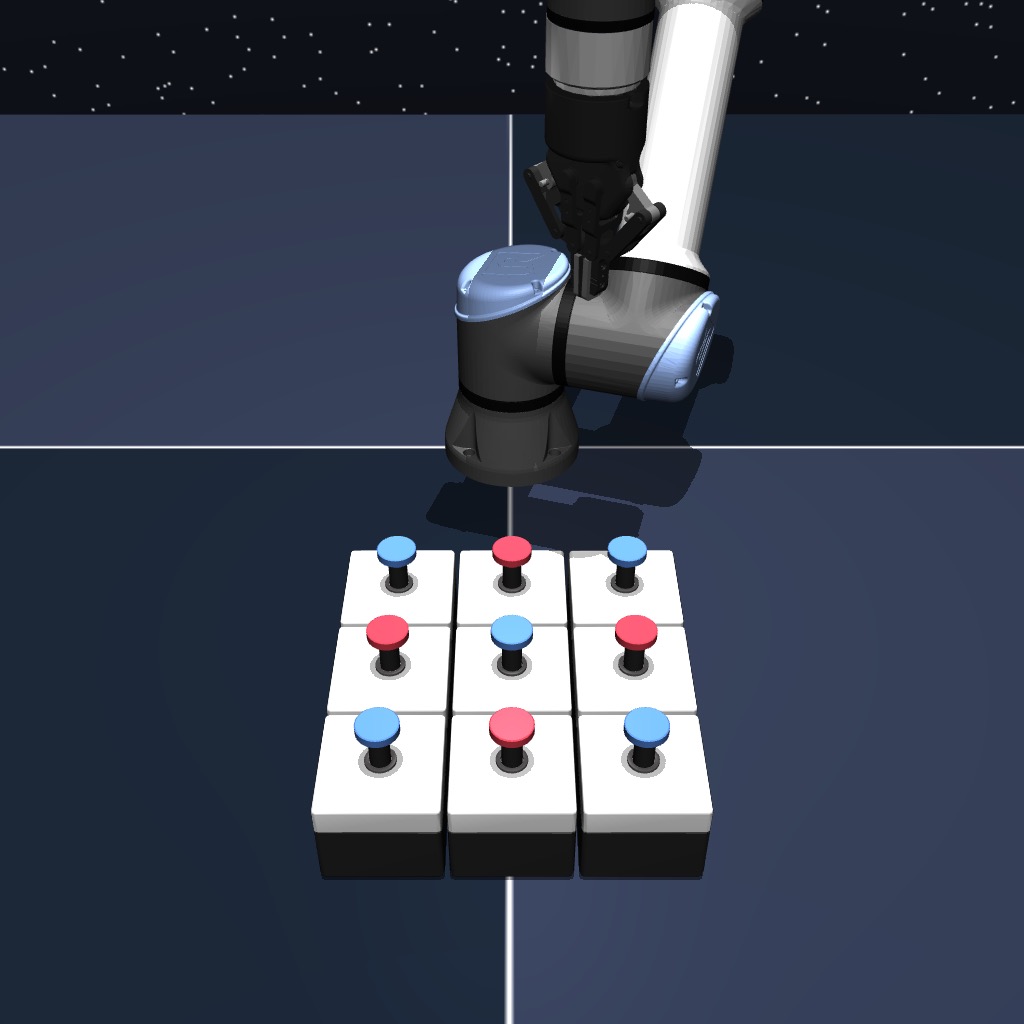}
    \includegraphics[width=0.18\linewidth]{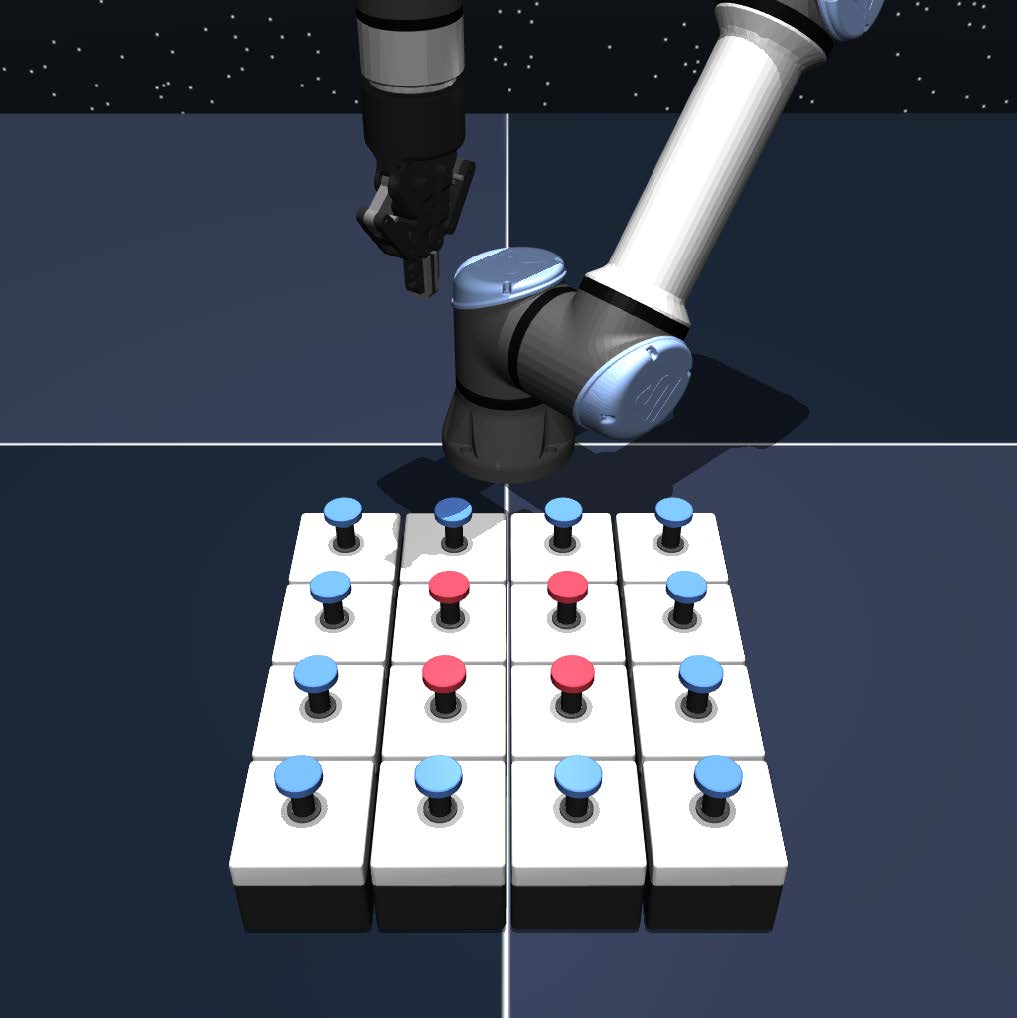}
    \caption{Illustration of the OGBench environments, covering diverse locomotion and manipulation tasks used for evaluation.}
	\label{fig:environments}
\end{figure*}

\textbf{Scene.} The \texttt{scene} task evaluates sequential reasoning and long-horizon planning involving everyday objects such as cubes, windows, drawers, and button locks. Pressing a button toggles the locking state of a corresponding object (e.g., a drawer or window). Similar to \texttt{cube}, the dataset is collected via a \texttt{play}-style scripted policy. At test time, the agent must complete a sequence of sub-tasks to achieve the final goal, necessitating the ability to compose skills temporally.

\textbf{Puzzle.} \texttt{Puzzle} requires a robotic arm to solve a "Lights Out" style problem. The workspace contains a 2D array of buttons (e.g., a 3x3 grid); pressing a button toggles its color and that of its orthogonal neighbors. The objective is to achieve a specific color configuration through a sequence of presses. This task demands both precise low-level continuous control and high-level combinatorial generalization, posing a significant challenge to the agent's long-term reasoning capabilities.

\section{Offline GCRL Baseline Algorithms}
\label{apdx:baselines}
A brief overview of the baseline algorithms compared in this study is provided below:

\textbf{Goal-conditioned behavioral cloning (GCBC). \citep{GCSL}} GCBC is a standard imitation learning approach that clones behaviors by utilizing hindsight goal relabeling with future states observed within the same trajectory.

\textbf{Classifier-free guidance reinforcement learning (CFGRL). \citep{CFGRL}} CFGRL employs a diffusion model as the policy network and executes action inference through classifier-free guidance sampling, enhancing the generation of high-fidelity actions.

\textbf{Goal-conditioned implicit V learning (GCIVL). \citep{OGBench}} GCIVL introduces a V-only objective to regress the optimal state-value function. By omitting Q-value learning, it does not explicitly marginalize over non-causal factors, which can introduce optimistic bias in stochastic environments.

\textbf{Option-aware temporally abstracted V learning (OTA). \citep{OTA}} OTA incorporates an auxiliary high-level value function to facilitate high-level policy extraction. Its core contribution involves leveraging option-aware temporal abstraction to derive this value function, thereby providing more precise guidance for the high-level policy.

\textbf{Physics-informed value learner (Pi-HIQL). \citep{Pi-HIQL}} Drawing inspiration from the Eikonal equation, Pi-HIQL incorporates an Eikonal regularizer into implicit V-learning to encourage geometric structure in the learned value function. This method aims to induce geometric inductive biases within the value function by leveraging ground-truth physical constraints.

\textbf{Hierarchical implicit Q learning (HIQL). \citep{HIQL}} The theoretical framework of HIQL is detailed in Section~\ref{sec:3}. Regarding implementation, standard HIQL employs a latent subgoal representation for policy extraction. Specifically, the high-level and low-level policies are parameterized as ${\pi ^h}:{\cal S} \times {\cal S} \to \Delta \left( {\cal Z} \right)$ and ${\pi ^\ell }:{\cal S} \times {\cal Z} \to \Delta \left( {\cal A} \right)$, respectively, where $\mathcal{Z}$ denotes the latent subgoal space.

\textbf{HIQL without subgoal representation (HIQL$^\mathrm{w/o}$).} This variant removes the latent subgoal representation mechanism used in the standard HIQL implementation. Instead, it adheres strictly to the objectives defined in Equation~\ref{equ:hiql_high} and Equation~\ref{equ:hiql_low} of Section~\ref{sec:3} for policy learning, operating without the auxiliary latent mapping.

\section{Subgoal Support and Executor Reachability}
\label{app:guidance_reachability}
Neither DSP nor HIQL provides a hard feasibility guarantee for generated
subgoals. To quantify how classifier-free guidance affects data support and
low-level execution, we hold the trained checkpoints and all other settings
fixed and vary the guidance scale $\omega$.

For $M$ generated subgoals $\{w_i\}_{i=1}^{M}$, we define the standardized
training-support distance as
\begin{equation}
    D =
    \frac{1}{M}
    \sum_{i=1}^{M}
    \min_{x \in \mathcal{S}_{\mathcal D}}
    \left\|
        \frac{w_i-x}{\sigma_{\mathcal D}}
    \right\|_2,
\end{equation}
where $\mathcal{S}_{\mathcal D}$ denotes the states contained in the offline
dataset and $\sigma_{\mathcal D}$ is their coordinate-wise standard deviation.
A smaller $D$ indicates that generated subgoals remain closer to the training
support.

We further measure the state-validity rate
\begin{equation}
    V =
    \frac{1}{M}
    \sum_{i=1}^{M}
    \mathbf{1}\left[\operatorname{Valid}(w_i)\right],
\end{equation}
and the valid-and-reached rate
\begin{equation}
    R =
    \frac{1}{M}
    \sum_{i=1}^{M}
    \mathbf{1}\left[\operatorname{Valid}(w_i)\right]
    \mathbf{1}\left[d(s_{i,k},w_i)\leq\delta\right],
\end{equation}
where $s_{i,k}$ is the state reached after executing the low-level policy
toward $w_i$ for at most $k$ steps. The environment-specific validity
predicate, distance metric $d$, and threshold $\delta$ are described below.
Both $V$ and $R$ are reported as percentages.

As shown in \autoref{tab:guidance_reachability}, the family-default guidance
scales improve task success relative to $\omega=1$, while moderately reducing
the valid-and-reached rate. With excessive guidance ($\omega=10$), support
distance increases and validity, reachability, and success all deteriorate.
Thus, moderate guidance balances goal direction and executability, whereas
excessive guidance constitutes a practical failure mode.

\section{Hyperparameter Sensitivity}
\label{apdx:hyper_sensitivity}

\subsection{Guidance Scale}
\begin{figure}[h]
    \centering
    \subfigure[Guidance Scale $\omega$]{
        \includegraphics[width=0.31\linewidth]{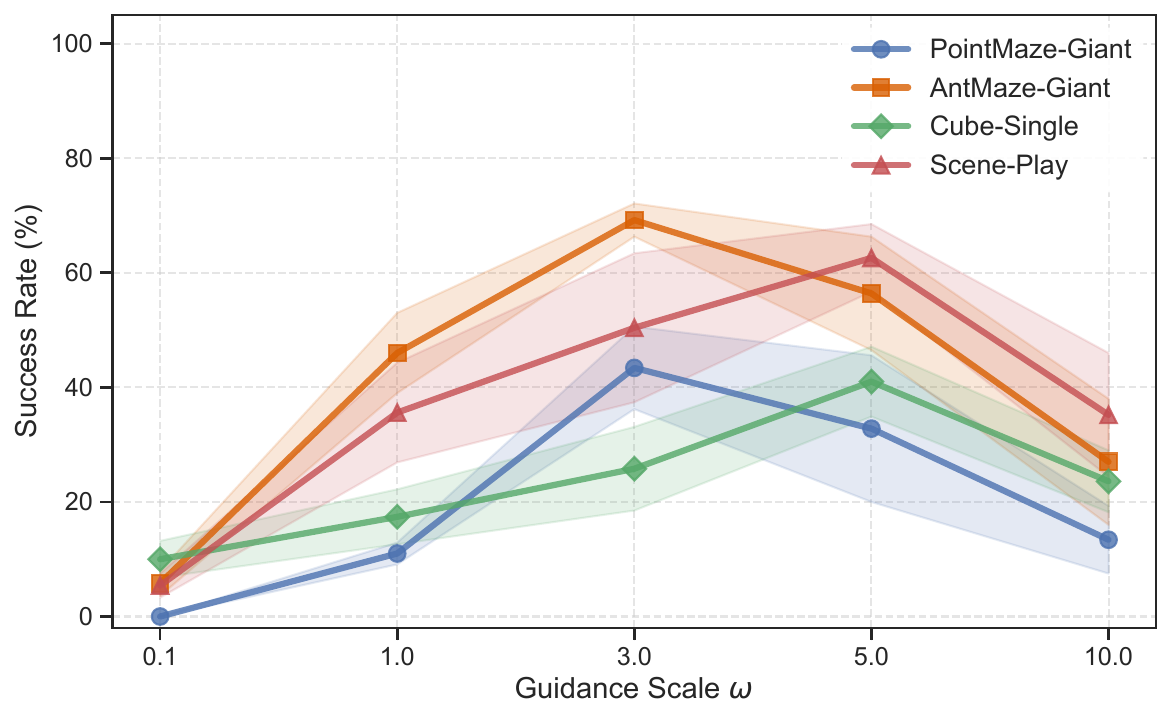}
        \label{fig:cfg_sensitivity}
    }
    \subfigure[Diffusion Step $N$]{
        \includegraphics[width=0.31\linewidth]{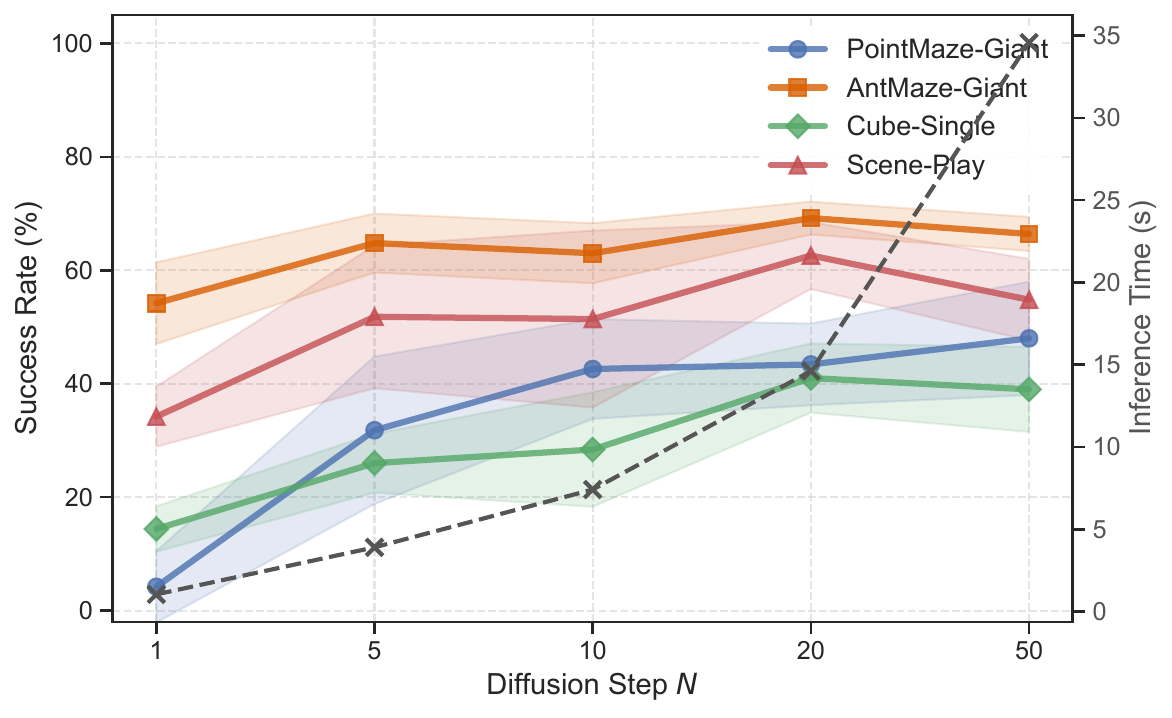}
        \label{fig:N_sensitivity}
    }
    \subfigure[Subgoal Steps $k$]{
        \includegraphics[width=0.31\linewidth]{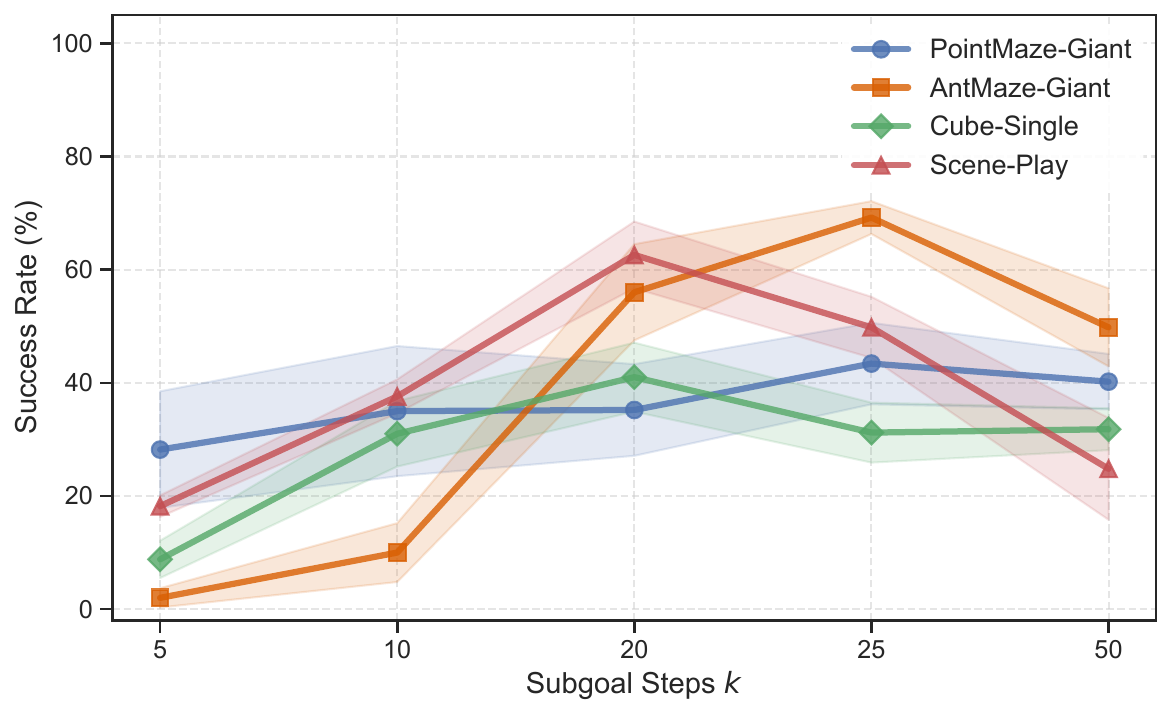}
        \label{fig:k_sensitivity}
    }
    \vspace{-0.3cm}
    \caption{Hyperparameter sensitivity analysis of DSP. (a): sensitivity to the guidance scale $\omega$ under classifier-free guidance, showing a consistent inverted-U trend across representative tasks. (b): sensitivity to the number of diffusion steps $N$, illustrating the trade-off between performance gains and inference cost. (c): sensitivity to the subgoal steps $k$, showing a broad performance plateau with degradation for overly large $k$. Shaded regions indicate one standard deviation over 5 random seeds.}
    \label{fig:hyperparameter_sensitivity}
\end{figure}

Figure~\ref{fig:cfg_sensitivity} presents a sensitivity analysis of the guidance scale $\omega$ across four representative tasks. The results exhibit a consistent inverted-U trend, from which we draw three key observations. First, increasing $\omega$ from the baseline value of $1.0$ (which approximates standard conditional sampling) to the range of $3.0$ to $5.0$ yields substantial performance gains across all tasks. For instance, in \texttt{pointmaze-giant-navigate-v0}, the success rate increases from $11.0\%$ to $43.4\%$. This is consistent with the implicit advantage-weighted interpretation of CFG in DSP, where stronger guidance increases preference for goal-relevant subgoals. Second, we observe clear task-dependent preferences for the guidance strength. Locomotion tasks tend to peak around $\omega=3.0$, whereas manipulation tasks benefit from stronger guidance and achieve their best performance at $\omega=5.0$. We hypothesize that manipulation tasks involve complex contact dynamics and narrow optimal regions, requiring stronger directional signals to steer generation toward effective subgoals. In contrast, navigation tasks span larger spatial scales, where overly strong guidance may overly constrain subgoal diversity and hinder exploration of feasible paths. Finally, performance often degrades when $\omega$ becomes excessively large. This behavior aligns with prior findings on CFG~\citep{CFG}: excessive guidance pushes samples toward low-density regions of the data distribution, increasing the likelihood of generating unreachable subgoals and ultimately impairing low-level execution.

\subsection{Diffusion Step}
\label{apdx:hyper_sensitivity_N}
We further study the effect of the number of diffusion steps $N$ on both the performance and inference efficiency of DSP. Figure~\ref{fig:N_sensitivity} reports the average success rates and standard deviations for $N \in \{1, 5, 10, 20, 50\}$. Across all tasks, DSP exhibits a clear pattern in which performance improves substantially as $N$ increases, followed by saturation and occasional mild degradation. Increasing the number of diffusion steps from $N=1$ to $N=10$ or $N=20$ consistently leads to performance gains across all tasks, indicating that additional diffusion steps allow the conditional generative model to better capture the structural information required for long-horizon subgoal generation. This effect is particularly pronounced on more challenging tasks such as \texttt{pointmaze-giant} and \texttt{scene-play}. In contrast, further increasing the number of diffusion steps to $N=50$ does not yield consistent improvements, and even results in slight performance drops on some tasks (e.g., \texttt{antmaze-giant} and \texttt{scene-play}), suggesting diminishing returns and potential instability introduced by excessive sampling.

At the same time, inference time increases approximately linearly with the number of diffusion steps, rising from about 1 second at $N=1$ to over 35 seconds at $N=50$, leading to a substantial computational overhead. Here, the reported inference time is measured during evaluation over 10000 repeated inferences, where each inference generates one subgoal and one action. Considering the trade-off between performance and efficiency, $N=20$ achieves near-optimal or optimal performance on most tasks while maintaining a reasonable inference cost. We therefore adopt $N=20$ as the default number of diffusion steps in all experiments unless otherwise specified.

\subsection{Subgoal Steps}
We study the sensitivity of DSP to the subgoal steps $k$, which controls the temporal distance of high-level subgoals. As shown in Figure~\ref{fig:k_sensitivity}, performance consistently improves as $k$ increases from very small values, indicating that short-horizon subgoals are insufficient for effective long-range planning. As $k$ increases further, performance reaches a broad plateau (e.g., $k=20$--$25$), suggesting that DSP is not highly sensitive to the exact choice of $k$ once it exceeds a reasonable scale.

However, when $k$ becomes overly large, performance may degrade. This is because distant subgoals are harder for the low-level policy to reliably reach, and such long-range transitions are less supported in the offline dataset, leading to increased uncertainty in subgoal generation.

In practice, we adopt a simple and consistent heuristic across domains: larger $k$ for long-horizon navigation tasks and smaller $k$ for manipulation tasks requiring fine-grained control. This choice is consistent with common practices in hierarchical reinforcement learning and does not require per-task tuning.

\section{High-Level Subgoal Generation: Ablations and Comparisons}
\label{app:high_level_generation}

\subsection{Controlled Comparison of High-Level Mechanisms}
\label{app:controlled_high_level_ablation}
In DSP, training subgoals are constructed from $k$-step future states in the offline dataset and are not assumed to be optimal. CFG therefore does not provide optimal labels during training; instead, it introduces an inference-time goal-directed bias in the learned generative subgoal policy. To isolate this effect, we conduct a controlled ablation on the high-level planner while keeping the low-level policy training objective fixed as HIQL-style AWR.
\begin{table}[h]
    \centering
    \caption{
        Ablation on high-level subgoal generation.
        The table reports average binary success rate (\%) over 5 seeds.
        MLE and diffusion variants are trained on the same $k$-step
        future-state subgoal targets.
    }
    \label{tab:high_level_ablation}

    \scriptsize
    \setlength{\tabcolsep}{3pt}
    \renewcommand{\arraystretch}{0.90}

    \begin{tabular}{@{}lcccc@{}}
        \toprule
        \textbf{Datasets}
        & \textbf{MLE}
        & \textbf{Diffusion $(\omega=1)$}
        & \textbf{HIQL}
        & \textbf{DSP} \\
        \midrule

        \texttt{antmaze-medium-navigate-v0}
        & $75.8\pm2.5$
        & $77.0\pm7.5$
        & $93.2\pm1.3$
        & $\mathbf{98.4}\pm0.5$ \\

        \texttt{antmaze-large-navigate-v0}
        & $58.0\pm5.1$
        & $58.2\pm7.6$
        & $87.8\pm1.5$
        & $\mathbf{92.0}\pm4.1$ \\

        \texttt{antmaze-giant-navigate-v0}
        & $14.8\pm4.4$
        & $43.8\pm4.0$
        & $58.2\pm4.7$
        & $\mathbf{69.2}\pm2.9$ \\

        \midrule

        \texttt{cube-single-play-v0}
        & $15.0\pm3.7$
        & $19.2\pm3.8$
        & $11.8\pm2.3$
        & $\mathbf{41.0}\pm6.1$ \\

        \texttt{cube-double-play-v0}
        & $1.2\pm1.1$
        & $4.8\pm0.8$
        & $3.6\pm2.2$
        & $\mathbf{30.8}\pm8.4$ \\

        \texttt{scene-play-v0}
        & $6.8\pm1.8$
        & $25.6\pm4.2$
        & $37.2\pm4.2$
        & $\mathbf{62.6}\pm5.9$ \\

        \bottomrule
    \end{tabular}
\end{table}

We compare four high-level planning mechanisms: (i) an MLE-based subgoal generator trained to imitate $k$-step future states, (ii) diffusion-based conditional sampling with $\omega=1$, (iii) HIQL-style value-based high-level planning, and (iv) DSP with CFG-guided subgoal generation. All methods use the same type of low-level AWR executor. The results are reported in Table~\ref{tab:high_level_ablation}.

The MLE-based subgoal generator performs substantially worse than HIQL and DSP, especially on long-horizon navigation tasks, indicating that directly imitating $k$-step future states is insufficient. Diffusion-based conditional sampling improves over MLE in several settings, suggesting that modeling a richer subgoal distribution is beneficial. However, the gap between diffusion with $\omega=1$ and DSP shows that standard conditional generation does not explain the full performance gains. CFG-guided sampling further biases generation toward goal-relevant subgoals and achieves the strongest performance across all evaluated tasks.

These results suggest that DSP is not merely a $k$-step subgoal imitation method. Rather, its performance comes from combining data-supported generative subgoal modeling with inference-time goal-directed guidance, consistent with the advantage-like interpretation of CFG discussed in Section~\ref{sec:5.3}.

\subsection{Comparison with Diffusion-Based Hierarchical Planners}
\label{app:diffusion_planner_comparison}
\begin{table}[h]
    \centering

    \setlength{\abovecaptionskip}{3pt}
    \setlength{\belowcaptionskip}{2pt}

    \caption{
        Success rates (\%) under the matched OGBench protocol, reported as
        mean $\pm$ standard deviation over five seeds.
        Bold indicates the highest mean in each setting.
    }
    \label{tab:diffusion_planner_comparison}

    \footnotesize
    \setlength{\tabcolsep}{4pt}
    \renewcommand{\arraystretch}{0.88}

    \begin{tabular*}{0.96\linewidth}{
        @{\extracolsep{\fill}}llcccc@{}
    }
        \toprule
        Datasets & Representation & Merlin & HDMI & HD & DSP \\
        \midrule

        \texttt{PointMaze-Large}
        & \emph{Full} $=$ XY
        & $43.2\pm8.1$
        & $61.2\pm4.1$
        & $73.6\pm6.8$
        & $\mathbf{93.6}\pm2.9$ \\

        \texttt{PointMaze-Giant}
        & \emph{Full} $=$ \emph{XY}
        & $0.4\pm0.5$
        & $26.4\pm5.4$
        & $12.6\pm6.9$
        & $\mathbf{43.4}\pm7.2$ \\

        \midrule

        \multirow[c]{2}{*}{\texttt{AntMaze-Large}}
        & \emph{Full}
        & $22.6\pm5.2$
        & $53.4\pm5.6$
        & $64.0\pm8.2$
        & $\mathbf{92.0}\pm4.1$ \\

        & \emph{XY}
        & $34.2\pm5.8$
        & $65.0\pm3.7$
        & $74.4\pm5.5$
        & $\mathbf{96.0}\pm2.0$ \\

        \midrule

        \multirow[c]{2}{*}{\texttt{AntMaze-Giant}}
        & \emph{Full}
        & $0.0\pm0.0$
        & $0.0\pm0.0$
        & $0.2\pm0.4$
        & $\mathbf{69.2}\pm2.9$ \\

        & \emph{XY}
        & $0.0\pm0.0$
        & $22.8\pm3.5$
        & $8.6\pm3.7$
        & $\mathbf{77.2}\pm2.7$ \\

        \bottomrule
    \end{tabular*}
\end{table}

We compare DSP with Merlin \cite{Merlin}, HDMI \cite{HDMI}, and HD
\citep{HD}. All methods use the same OGBench datasets,
$1$M training updates, five seeds, and the same five-goal evaluation protocol.
We reimplement the three baselines in JAX while retaining their
method-specific architectures and tuning their key hyperparameters.

Because these methods originally use different goal representations, we
evaluate two interfaces on AntMaze: \emph{Full} uses full OGBench goal
observations and full-state subgoals, while \emph{XY} uses only two-dimensional
positions for both goals and subgoals. PointMaze states are already
two-dimensional, so \emph{Full} and \emph{XY} are equivalent.

As shown in \autoref{tab:diffusion_planner_comparison}, DSP achieves the
highest mean success in all six evaluated task--representation settings.
HDMI and HD generally improve under the lower-dimensional \emph{XY} interface,
confirming that goal representation affects high-level modeling difficulty.
DSP remains strongest under both \emph{Full} and \emph{XY} representations in this matched
evaluation protocol.

\section{Inference-Time Analysis}
\label{apdx:infer_time}

We analyze the inference-time efficiency of DSP and compare it with hierarchical baselines under matched budgets. In addition to the performance–latency trade-off of DSP shown in Appendix~\ref{apdx:hyper_sensitivity_N}, we provide a direct comparison across methods.

\begin{figure}[h]
    \centering
    \includegraphics[width=0.6\linewidth]{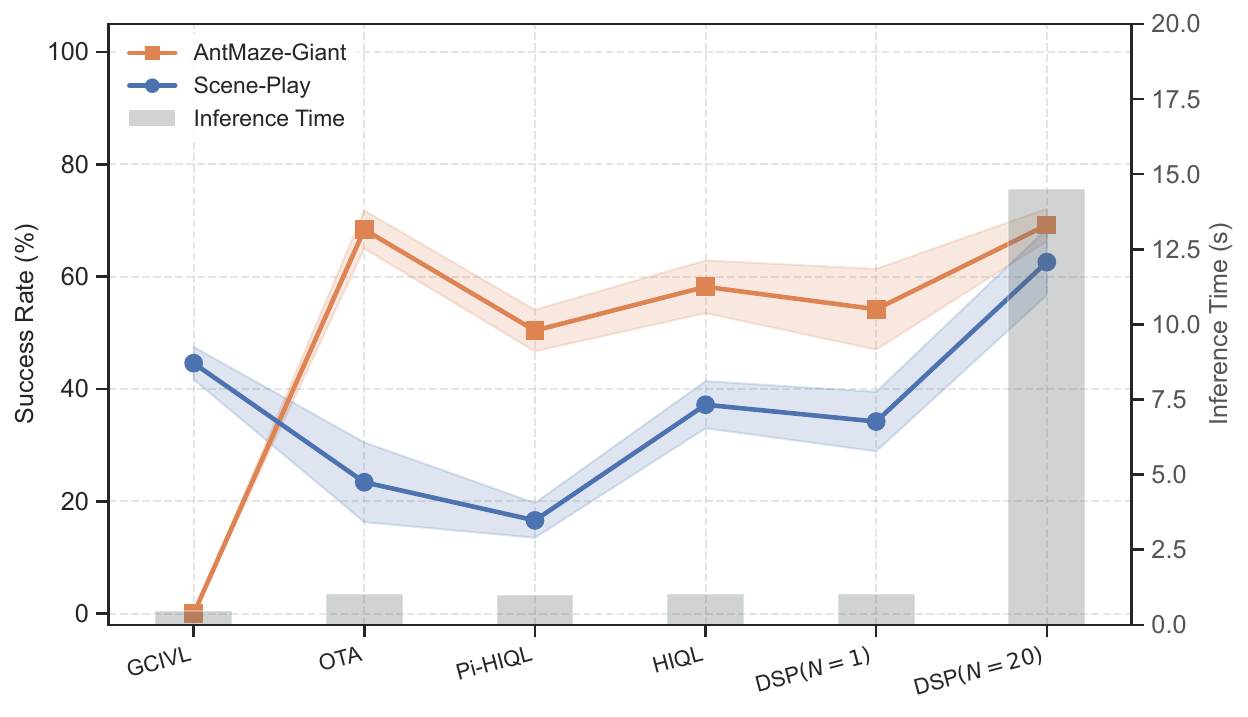}
    \vspace{-0.3cm}
    \caption{Comparison of success rate and inference time across methods. The left axis shows success rates on \texttt{antmaze-giant} and \texttt{scene-play}, while the right axis shows inference time. DSP with $N=1$ corresponds to the closest matched-budget setting, while increasing the number of diffusion steps (e.g., $N=20$) leads to improved performance at higher inference cost.}
    \label{fig:inference_comparison}
\end{figure}
\begin{table}[h]
    \centering
    \renewcommand{\arraystretch}{1.5}
    \caption{Training time comparison of different methods measured on the same hardware.}
    \resizebox{0.7\linewidth}{!}{\begin{tabular}{lccccc}
    \hline
    \textbf{Method} & \textbf{GCIVL} & \textbf{OTA} & \textbf{Pi-HIQL} & \textbf{HIQL} & \textbf{DSP} \\
    \hline
    Training Time (min) & 24 & 56 & 53 & 42 & 57 \\
    \hline
    \end{tabular}}
\label{tab:training_time}
\end{table}

As shown in Fig.~\ref{fig:inference_comparison}, DSP with $N=1$ achieves performance comparable to HIQL under similar inference time, indicating that DSP does not rely on increased computation to be effective. Increasing the number of diffusion steps further improves performance (e.g., $+11$ on \texttt{antmaze-giant} and $+25$ on \texttt{scene-play} for $N=20$), demonstrating a favorable performance–latency trade-off.

We note that inference time is measured during evaluation over 10,000 repeated inferences, where each inference consists of generating one subgoal and one action. We also report the training time of different methods on the same hardware for reference, as shown in Table~\ref{tab:training_time}.

\section{Stitching Experiments}
\label{apdx:stitch}
We evaluate the stitching capability of DSP on the OGBench stitch datasets, which test whether an algorithm can solve long-horizon tasks by composing shorter offline trajectory fragments \citep{OGBench, PCDT}. Table~\ref{tab:stitch_result} compares DSP with flat and hierarchical offline GCRL baselines. We include several DSP variants to isolate the effects of goal sampling and guidance strength. The superscript ``mix'' denotes the variant trained with the OGBench actor goal mixing strategy for stitching tasks, i.e., $(p_{\mathrm{cur}},p_{\mathrm{traj}},p_{\mathrm{rand}})=(0,0.5,0.5)$. The subscript $1$ denotes standard conditional sampling with guidance scale $\omega=1$. DSP without a subscript uses the task-specific guidance scale reported in Appendix~\ref{apdx:hyperparameters}.

\begin{table*}[h]
\centering
\caption{Complete comparison between DSP and the offline GCRL baselines. The table reports the average binary success rate (\%) across five test-time goals for each task, averaged over 5 seeds. Standard deviations are indicated by the $\pm$ symbol. Entries within 95\% of the best-performing value in each row are highlighted in \textbf{bold}.}
\label{tab:stitch_result}
\resizebox{\linewidth}{!}{\begin{tabular}{lccccccccc}
\toprule
\textbf{Datasets} & 
\multicolumn{3}{c}{\textbf{Flat Policies}} & 
\multicolumn{6}{c}{\textbf{Hierarchical Policies}} \\
\cmidrule(r){2-4} \cmidrule(l){5-10}&
\textbf{GCBC} & 
\textbf{CFGRL} & 
\textbf{GCIVL} & 
\textbf{HIQL} & 
\textbf{HIQL$^{\mathrm{w/o}}$} & 
\textbf{DSP$^{\mathrm{mix}}_1$} &
\textbf{DSP$_1$} &
\textbf{DSP$^{\mathrm{mix}}$} &
\textbf{DSP} \\
\midrule
\texttt{pointmaze-medium-stitch-v0} & $23.2\pm16.7$ & $24.6\pm8.3$ & $20.8\pm4.0$ & $62.0\pm6.8$ & $\mathbf{69.4}\pm13.6$ & $2.4\pm0.9$ & $54.6\pm3.0$ & $5.0\pm2.0$ & $\mathbf{65.8}\pm5.8$ \\
\texttt{pointmaze-large-stitch-v0} & $4.4\pm8.3$ & $13.0\pm3.9$ & $8.0\pm11.0$ & $22.0\pm7.5$ & $8.2\pm7.7$ & $0.0\pm0.0$ & $38.0\pm1.9$ & $3.6\pm1.1$ & $\mathbf{44.2}\pm4.3$ \\
\texttt{pointmaze-giant-stitch-v0} & $0.0\pm0.0$ & $0.0\pm0.0$ & $0.0\pm0.0$ & $0.0\pm0.0$ & $0.0\pm0.0$ & $0.0\pm0.0$ & $0.0\pm0.0$ & $0.0\pm0.0$ & $\mathbf{1.2}\pm0.8$ \\
\texttt{pointmaze-teleport-stitch-v0} & $26.8\pm8.7$ & $13.8\pm8.9$ & $28.6\pm5.0$ & $33.2\pm2.5$ & $8.4\pm4.9$ & $19.6\pm4.2$ & $43.4\pm7.5$ & $14.0\pm5.7$ & $\mathbf{51.8}\pm4.9$\\
\midrule
\texttt{antmaze-medium-stitch-v0} & $52.6\pm5.7$ & $44.4\pm5.3$ & $32.6\pm10.4$ & $\mathbf{89.8}\pm3.8$ & $\mathbf{89.6}\pm3.4$ & $66.6\pm9.2$ & $75.8\pm6.6$ & $65.8\pm5.5$ & $\mathbf{85.8}\pm3.3$ \\
\texttt{antmaze-large-stitch-v0} & $1.8\pm2.5$ & $4.4\pm4.7$ & $5.0\pm4.9$ & $\mathbf{63.6}\pm8.6$ & $\mathbf{66.4}\pm3.8$ & $19.6\pm5.3$ & $22.0\pm6.9$ & $48.2\pm2.9$ & $51.8\pm4.9$ \\
\texttt{antmaze-giant-stitch-v0} & $0.0\pm0.0$ & $0.0\pm0.0$ & $0.0\pm0.0$ & $0.0\pm0.0$ & $0.8\pm0.8$ & $1.8\pm2.0$ & $2.6\pm1.3$ & $\mathbf{9.0}\pm4.0$ & $\mathbf{8.6}\pm2.6$ \\
\texttt{antmaze-teleport-stitch-v0} & $31.4\pm7.7$ & $13.0\pm6.0$ & $27.2\pm5.4$ & $34.0\pm3.2$ & $33.8\pm3.4$ & $26.8\pm6.6$ & $\mathbf{45.2}\pm6.0$ & $26.8\pm4.7$ & $39.2\pm6.1$ \\
\midrule
\texttt{humanoidmaze-medium-stitch-v0} & $26.2\pm5.4$ & $18.6\pm6.8$ & $31.8\pm6.1$ & $\mathbf{74.8}\pm3.0$ & $62.4\pm3.0$ & $53.2\pm3.3$ & $58.6\pm4.0$ & $50.8\pm6.3$ & $\mathbf{73.2}\pm4.3$ \\
\texttt{humanoidmaze-large-stitch-v0} & $4.2\pm1.7$ & $4.0\pm2.1$ & $5.0\pm3.8$ & $\mathbf{23.6}\pm2.9$ & $8.2\pm1.9$ & $2.4\pm1.3$ & $4.0\pm2.1$ & $8.2\pm3.6$ & $19.8\pm4.4$ \\
\texttt{humanoidmaze-giant-stitch-v0} & $0.0\pm0.0$ & $0.0\pm0.0$ & $0.0\pm0.0$ & $5.0\pm1.6$ & $0.0\pm0.0$ & $0.2\pm0.4$& $0.0\pm0.0$ & $17.6\pm7.0$ & $\mathbf{35.2}\pm5.9$ \\
\midrule
\texttt{antsoccer-arena-stitch-v0} & $21.6\pm3.9$ & $12.6\pm6.4$ & $3.8\pm1.8$ & $14.2\pm2.8$ & $35.6\pm3.4$ & $15.6\pm3.8$ & $16.6\pm4.7$ & $24.4\pm5.7$ & $\mathbf{40.2}\pm5.2$ \\
\texttt{antsoccer-medium-stitch-v0} & $3.8\pm2.8$ & $1.2\pm1.1$ & $0.4\pm0.9$ & $4.0\pm1.2$ & $6.0\pm1.2$ & $2.6\pm1.5$ & $6.8\pm2.3$ & $5.2\pm1.8$ & $\mathbf{10.6}\pm3.3$ \\
\bottomrule
\end{tabular}}
\end{table*}

Overall, DSP performs competitively on stitching tasks and substantially improves over flat policy baselines on most environments. With task-specific guidance, DSP achieves the best or near-best performance in several settings, including \texttt{pointmaze-large}, \texttt{pointmaze-teleport}, \texttt{antmaze-giant}, \texttt{humanoidmaze-giant}, and both \texttt{antsoccer} stitching tasks. These results suggest that guided generative subgoal planning can support trajectory composition when the learned subgoal distribution captures sufficient local connectivity.

The comparison between DSP$_1$ and DSP shows that guidance strength is important in stitching regimes. In many larger or more dynamically complex environments, using the task-specific guidance scale improves performance over standard conditional sampling, suggesting that properly tuned goal-directed guidance helps select more useful subgoals from the learned conditional distribution. This is consistent with prior diffusion-based decision-making work suggesting that conditional generative models can exhibit implicit dynamic-programming-like behavior through trajectory or subtrajectory generation \citep{DD, SDD}. In DSP, this effect appears at the subgoal level: CFG amplifies the goal-conditioned component of the learned subgoal distribution, helping compose locally supported transitions into longer-horizon behavior.

At the same time, DSP does not uniformly dominate Bellman-backup-based hierarchical methods. HIQL and HIQL$^{\mathrm{w/o}}$ remain stronger on some \texttt{antmaze} and \texttt{humanoidmaze} stitch tasks, likely because value-based methods can propagate goal information across trajectory fragments through Bellman backups. By contrast, DSP relies more directly on the transition structure captured by its generative subgoal model. Thus, the stitching results show that DSP can perform effective trajectory composition in several regimes, rather than replacing Bellman-backup-based stitching in all settings.

Finally, the DSP$^{\mathrm{mix}}$ variants indicate that the default OGBench actor goal mixing strategy is not always suitable for generative high-level planning. While random goal relabeling can benefit value-based policy extraction, excessive random goals may shift the learned conditional distribution toward broad connectivity rather than directional, goal-relevant subgoals. The structured goal sampling used by DSP better preserves the goal-directed signal amplified by CFG, highlighting the interaction between goal sampling and inference-time guidance.

\section{Data Coverage and Behavior Quality}
\label{app:data_coverage_quality}

\subsection{Reduced-Data Coverage}
\label{app:reduced_data_coverage}

Using a fixed subset seed, we construct nested datasets by randomly ordering
complete trajectories and retaining them until reaching each target data
fraction. This preserves trajectory boundaries and future-state relabeling,
and all methods use identical subsets.

OGBench evaluates AntMaze and HumanoidMaze success using XY position with a
tolerance of $0.5$. We therefore define $0.5$-unit bins as
$b_{0.5}(x,y)=(\lfloor x/0.5\rfloor,\lfloor y/0.5\rfloor)$ and measure

\begin{equation}
    C_{\mathrm{XY}}(\mathcal D_r)
    =
    \frac{
        \left|
        \{b_{0.5}(x_t,y_t):(x_t,y_t)\in\mathcal D_r\}
        \right|
    }{
        \left|
        \{b_{0.5}(x_t,y_t):(x_t,y_t)\in\mathcal D_{100}\}
        \right|
    }.
\end{equation}

$C_{\mathrm{XY}}$ measures relative spatial occupancy, not full-state
coverage. Data fraction additionally affects visitation density, local
transitions, and same-trajectory future-state pairs.

\begin{table}[h]
    \centering
    \caption{
        Success rates (\%) under nested complete-trajectory subsets.
        Results are mean $\pm$ standard deviation over five seeds.
        $C_{\mathrm{XY}}$ denotes relative XY-bin coverage.
    }
    \label{tab:reduced_data_coverage}
    \small
    \setlength{\tabcolsep}{5.5pt}
    \begin{tabular}{@{}lccccc@{}}
        \toprule
        Datasets & Fraction & $C_{\mathrm{XY}}$
        & HIQL & HIQL$^{\mathrm{w/o}}$ & DSP \\
        \midrule
        \multirow[c]{4}{*}{\texttt{AntMaze-Large}}
        & 1.00 & 1.00 & $87.8\pm1.5$ & $88.6\pm2.1$
        & $\mathbf{92.0}\pm4.1$ \\
        & 0.50 & 0.96 & $87.0\pm4.3$ & $81.4\pm3.2$
        & $\mathbf{90.2}\pm1.9$ \\
        & 0.25 & 0.91 & $79.8\pm2.7$ & $70.2\pm5.1$
        & $\mathbf{83.0}\pm2.1$ \\
        & 0.10 & 0.85 & $19.4\pm4.7$ & $39.6\pm5.6$
        & $\mathbf{52.8}\pm6.6$ \\
        \midrule
        \multirow[c]{3}{*}{\texttt{HumanoidMaze-Giant}}
        & 1.00 & 1.00 & $5.0\pm1.6$ & $0.0\pm0.0$
        & $\mathbf{66.0}\pm3.4$ \\
        & 0.10 & 0.94 & $1.0\pm0.7$ & $0.0\pm0.0$
        & $\mathbf{44.4}\pm4.2$ \\
        & 0.01 & 0.50 & $0.0\pm0.0$ & $0.0\pm0.0$
        & $\mathbf{5.6}\pm3.0$ \\
        \bottomrule
    \end{tabular}
\end{table}

DSP achieves the highest mean at every evaluated fraction. However, it drops
from $92.0$ to $52.8$ on AntMaze-Large and from $66.0$ to $5.6$ on
HumanoidMaze-Giant, confirming that severe loss of data support remains a
failure condition. These results support DSP's relative competitiveness under
the evaluated reduced-data settings, rather than coverage independence.

\subsection{Official Explore Datasets}
\label{app:explore_datasets}

The official OGBench Explore datasets contain non-goal-directed trajectories:
the locomotion direction is randomly resampled every 10 steps, while actions
are produced by a pretrained Ant controller and perturbed with action noise of
magnitude $1.0$ \citep{OGBench}. They therefore test poor global trajectory
quality while retaining locally executable transitions.

\begin{table}[h]
    \centering
    \caption{
        Success rates (\%) on the official AntMaze Explore datasets,
        reported as mean $\pm$ standard deviation over five seeds.
    }
    \label{tab:explore_results}
    \small
    \setlength{\tabcolsep}{9pt}
    \begin{tabular}{@{}lccc@{}}
        \toprule
        Datasets & HIQL & HIQL$^{\mathrm{w/o}}$ & DSP \\
        \midrule
        \texttt{antmaze-medium-explore-v0}
        & $32.4\pm10.0$ & $33.4\pm9.7$
        & $\mathbf{60.4}\pm9.3$ \\
        \texttt{antmaze-large-explore-v0}
        & $1.6\pm3.6$ & $6.0\pm6.7$
        & $\mathbf{16.4}\pm7.5$ \\
        \texttt{antmaze-teleport-explore-v0}
        & $26.8\pm16.6$ & $\mathbf{30.4}\pm7.3$
        & $21.4\pm4.6$ \\
        \bottomrule
    \end{tabular}
\end{table}

DSP achieves the highest mean on Medium and Large, but not on Teleport.
Teleport randomly sends the agent to one of multiple exits, including a dead
end; hence, a future state observed in the data is not necessarily an outcome
the policy can reliably select. Future-state relabeling may therefore include
favorable but uncontrollable outcomes, while replanning can only respond after
the realized exit is observed. The result identifies stochastic,
action-uncontrollable transitions as a limitation, rather than establishing
uniform robustness across all Explore settings.

\section{Evaluation with Visual Observations}
\label{app:visual_evaluation}

We evaluate DSP on four visual OGBench tasks using only
$64\times64\times3$ RGB observations. Following the OGBench visual protocol,
HIQL and DSP use the same IMPALA encoder architecture. DSP does not perform
diffusion directly in pixel space; instead, it generates subgoals in the
IMPALA latent space. HIQL additionally maps encoded state--subgoal pairs to
its fixed 10-dimensional subgoal representation.

\begin{table}[h]
    \centering
    \caption{
        Success rates (\%) on visual OGBench tasks, reported as
        mean $\pm$ standard deviation over five seeds.
        Bold indicates the highest mean in each row.
    }
    \label{tab:visual_results}
    \small
    \setlength{\tabcolsep}{10pt}
    \begin{tabular}{@{}lcc@{}}
        \toprule
        Datasets & HIQL & DSP \\
        \midrule
        \texttt{visual-antmaze-medium-navigate}
        & $92.0\pm2.1$
        & $\mathbf{96.2}\pm2.5$ \\

        \texttt{visual-antmaze-large-navigate}
        & $51.8\pm5.5$
        & $\mathbf{68.4}\pm6.2$ \\

        \texttt{visual-antmaze-giant-navigate}
        & $4.4\pm2.8$
        & $\mathbf{13.8}\pm4.4$ \\

        \texttt{visual-scene-play}
        & $43.8\pm3.9$
        & $\mathbf{48.6}\pm1.5$ \\
        \bottomrule
    \end{tabular}
\end{table}

As shown in \autoref{tab:visual_results}, DSP achieves higher mean success on
all four tasks, including improvements of $16.6$ and $9.4$ percentage points
on Visual AntMaze-Large and AntMaze-Giant, respectively. These results show
that DSP can be applied to visual observations through a shared latent encoder,
without requiring diffusion directly in pixel space.

\section{Visualization of Subgoal Trajectories in Maze Environments}
\label{apdx:visualization}
In this section, we present additional visualizations comparing the subgoal trajectories of HIQL$^{\mathrm{w/o}}$ and DSP across the maze environments. Specifically, we display the planning behaviors for all five evaluation tasks, followed by a detailed analysis.

\subsection{Medium Maze}
\begin{figure}[h]
    \centering
    \subfigure[Task 1]{
        \begin{minipage}[b]{0.182\linewidth}
            \centering
            \includegraphics[width=\linewidth]{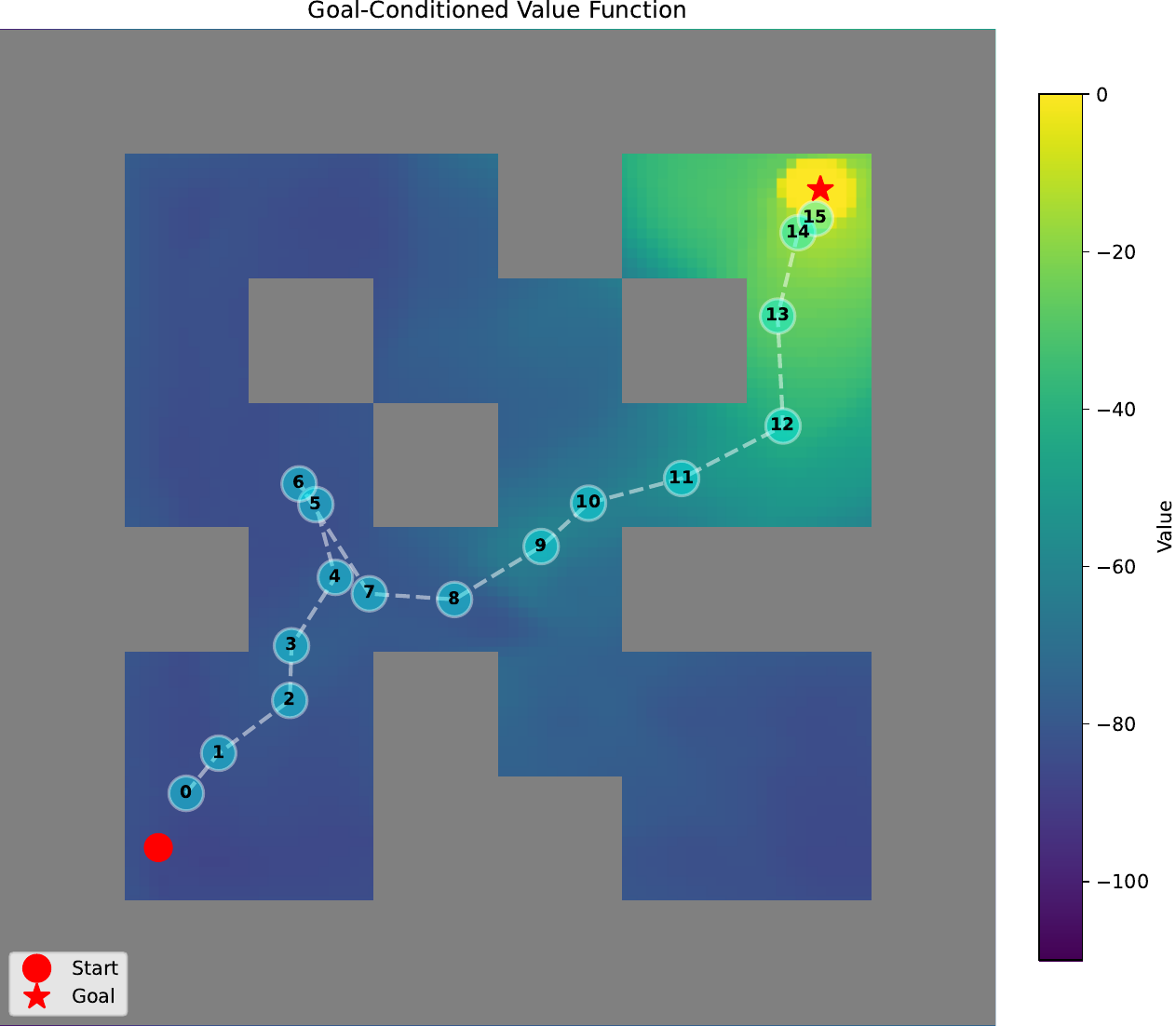} \\
            \includegraphics[width=\linewidth]{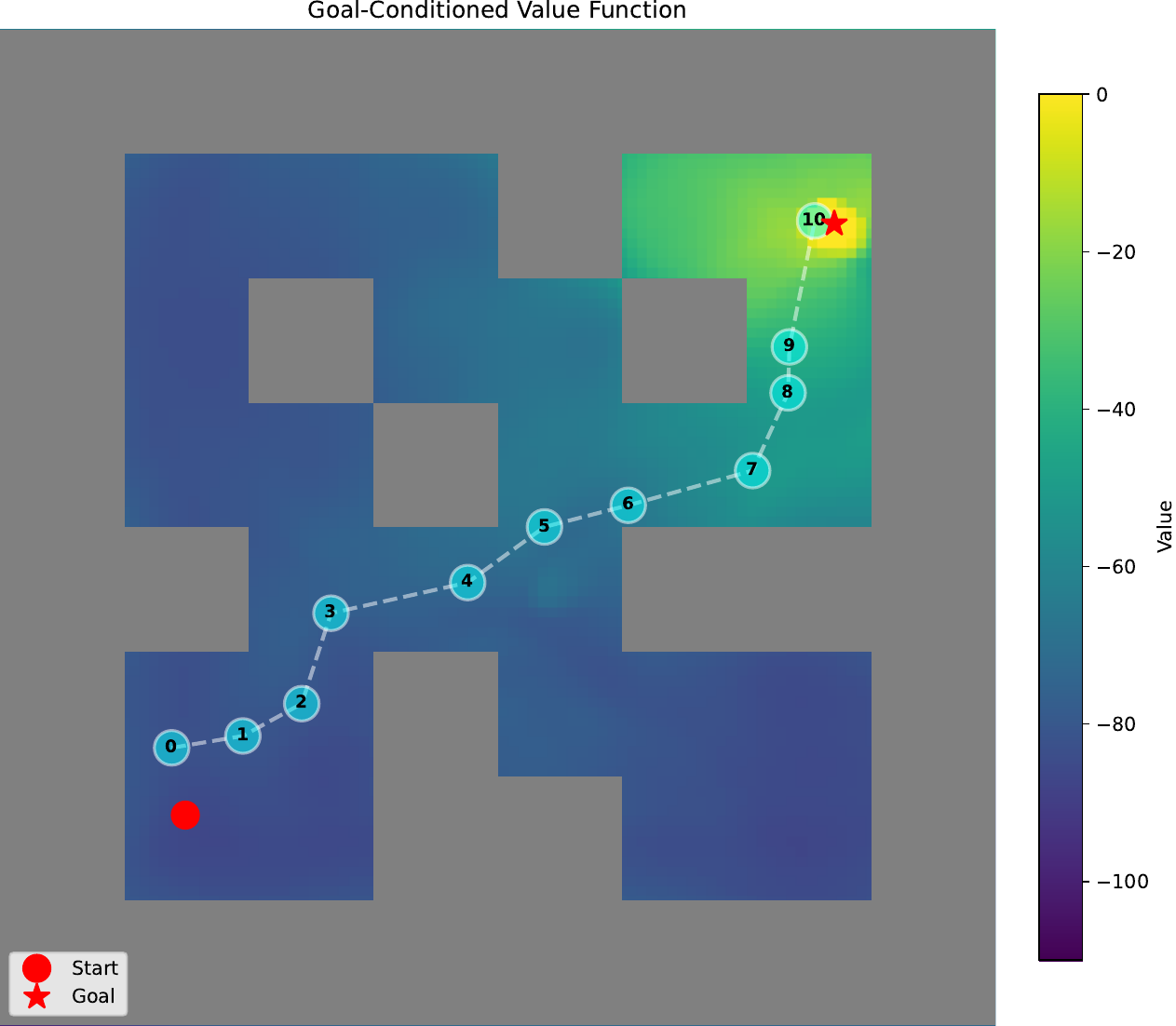}
        \end{minipage}
    }
    \hfill
    \subfigure[Task 2]{
        \begin{minipage}[b]{0.182\linewidth}
            \centering
            \includegraphics[width=\linewidth]{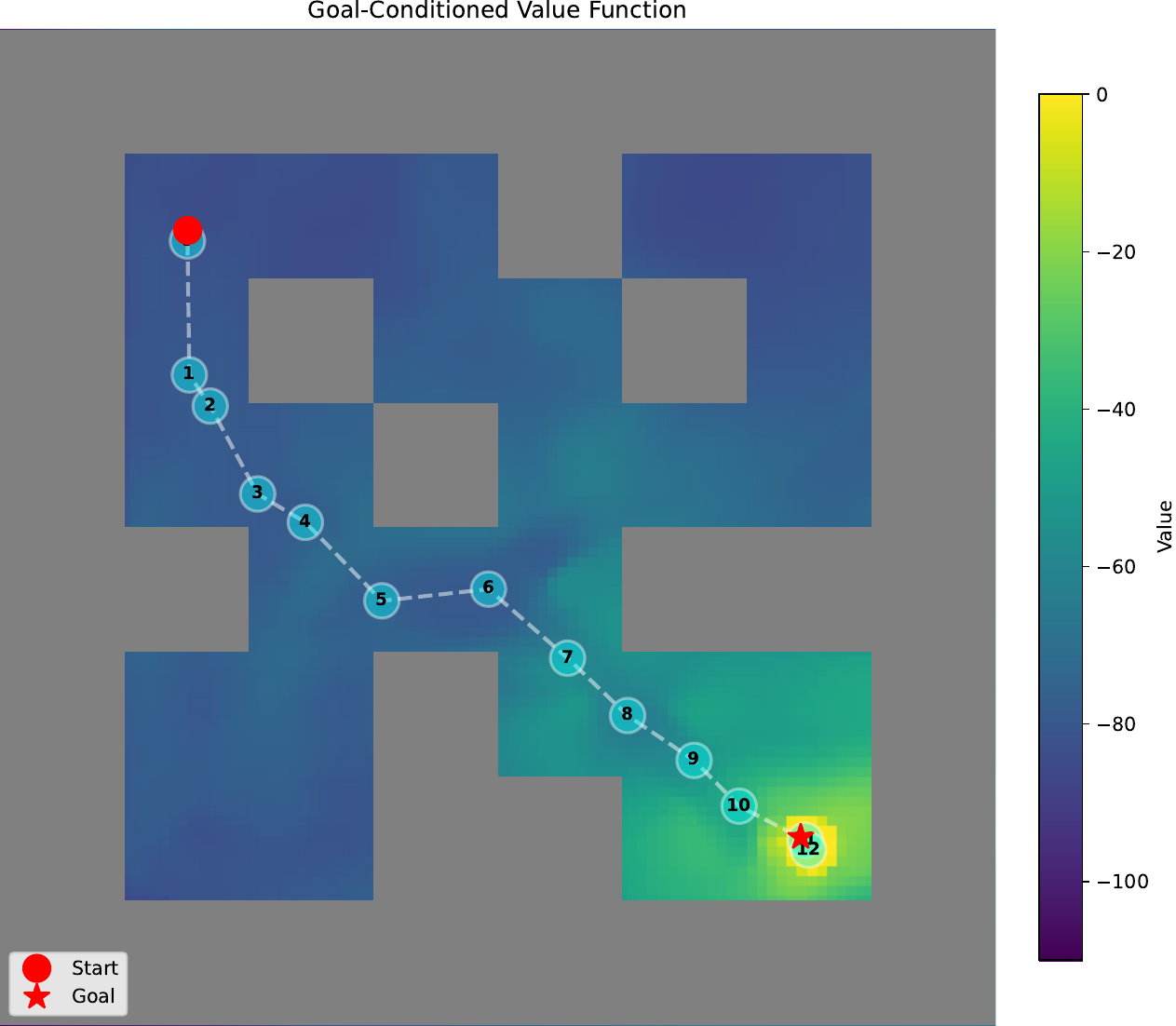} \\
            \includegraphics[width=\linewidth]{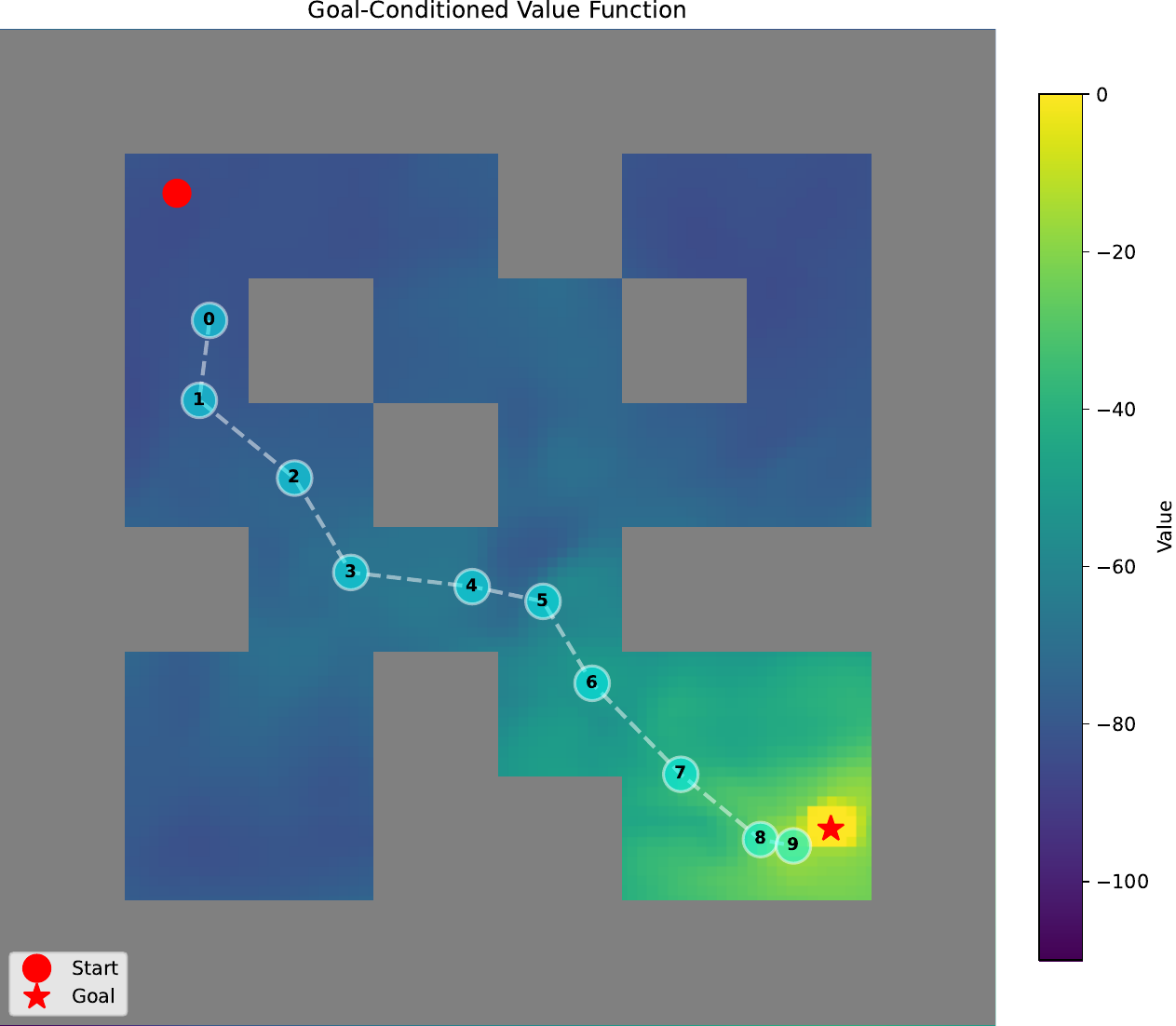}
        \end{minipage}
    }
    \hfill
    \subfigure[Task 3]{
        \begin{minipage}[b]{0.182\linewidth}
            \centering
            \includegraphics[width=\linewidth]{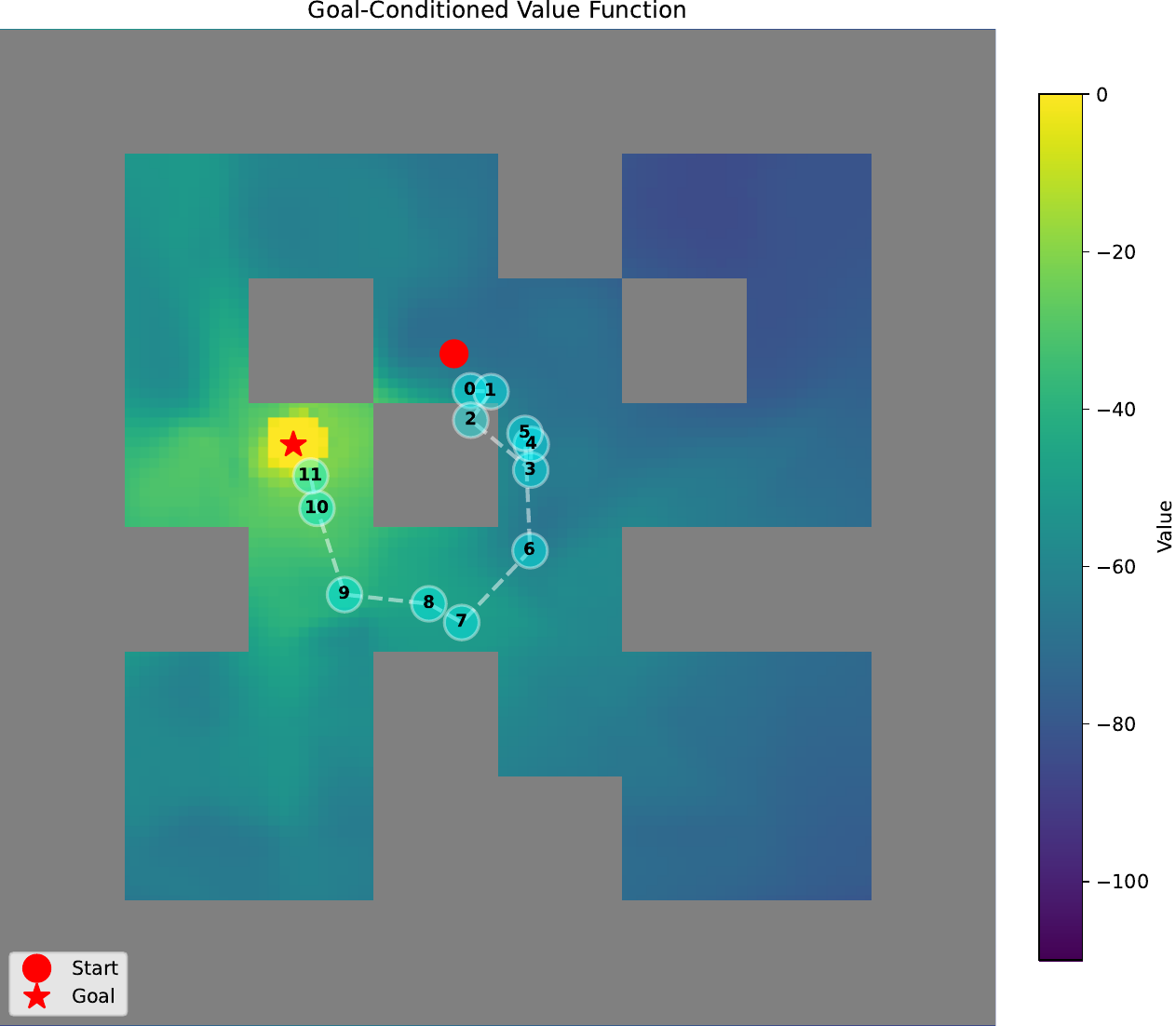} \\
            \includegraphics[width=\linewidth]{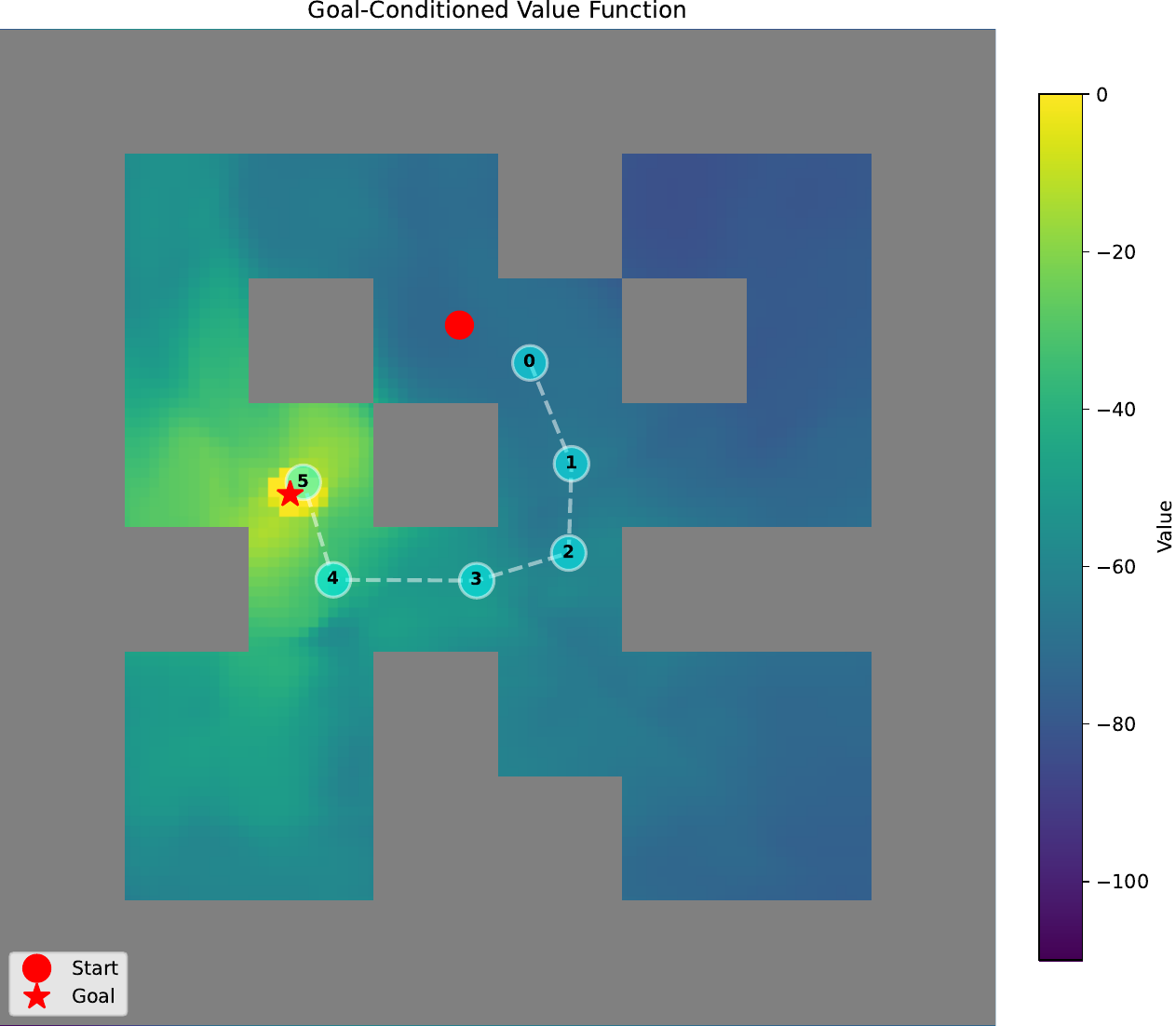}
        \end{minipage}
    }
    \hfill
    \subfigure[Task 4]{
        \begin{minipage}[b]{0.182\linewidth}
            \centering
            \includegraphics[width=\linewidth]{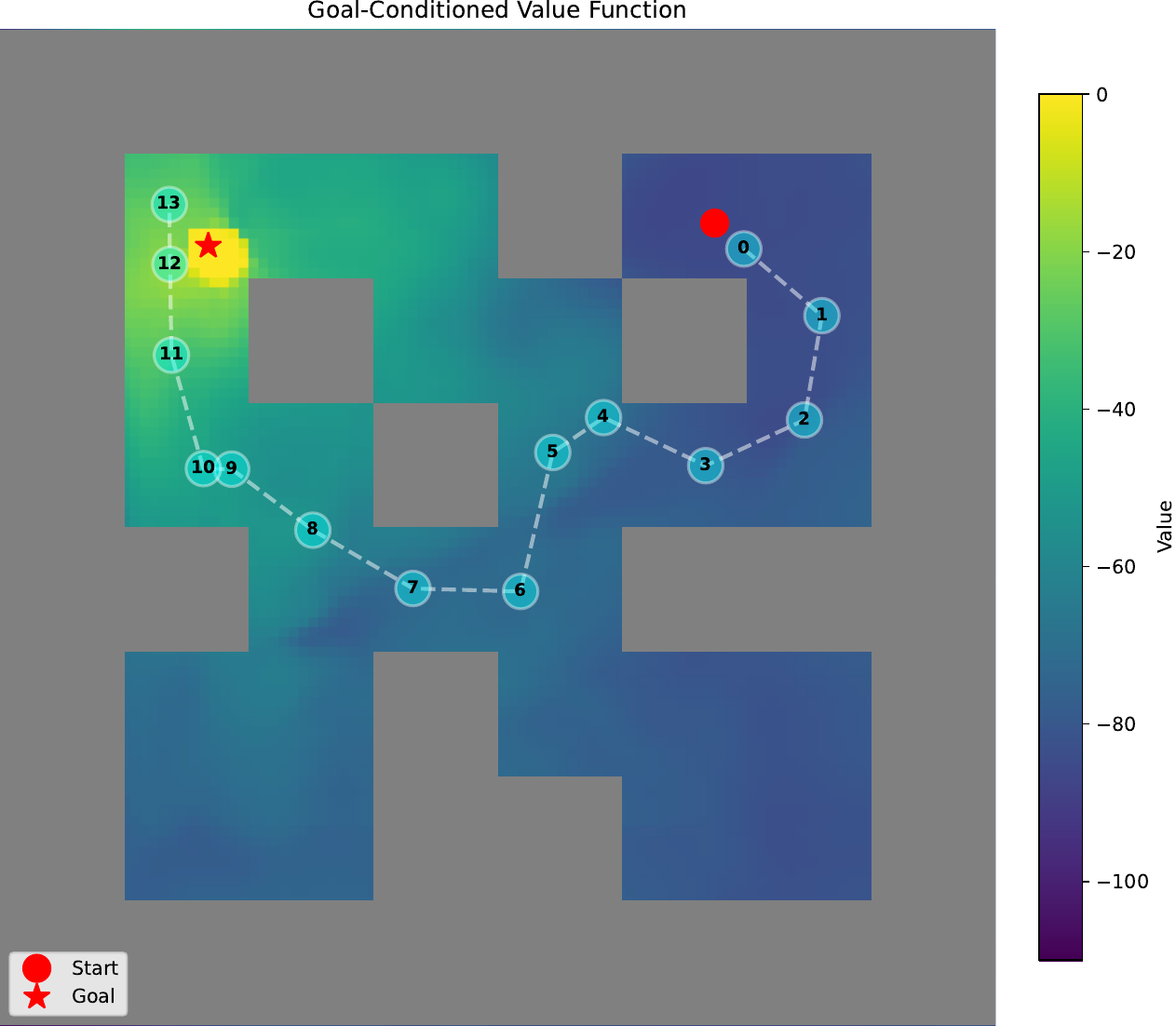} \\
            \includegraphics[width=\linewidth]{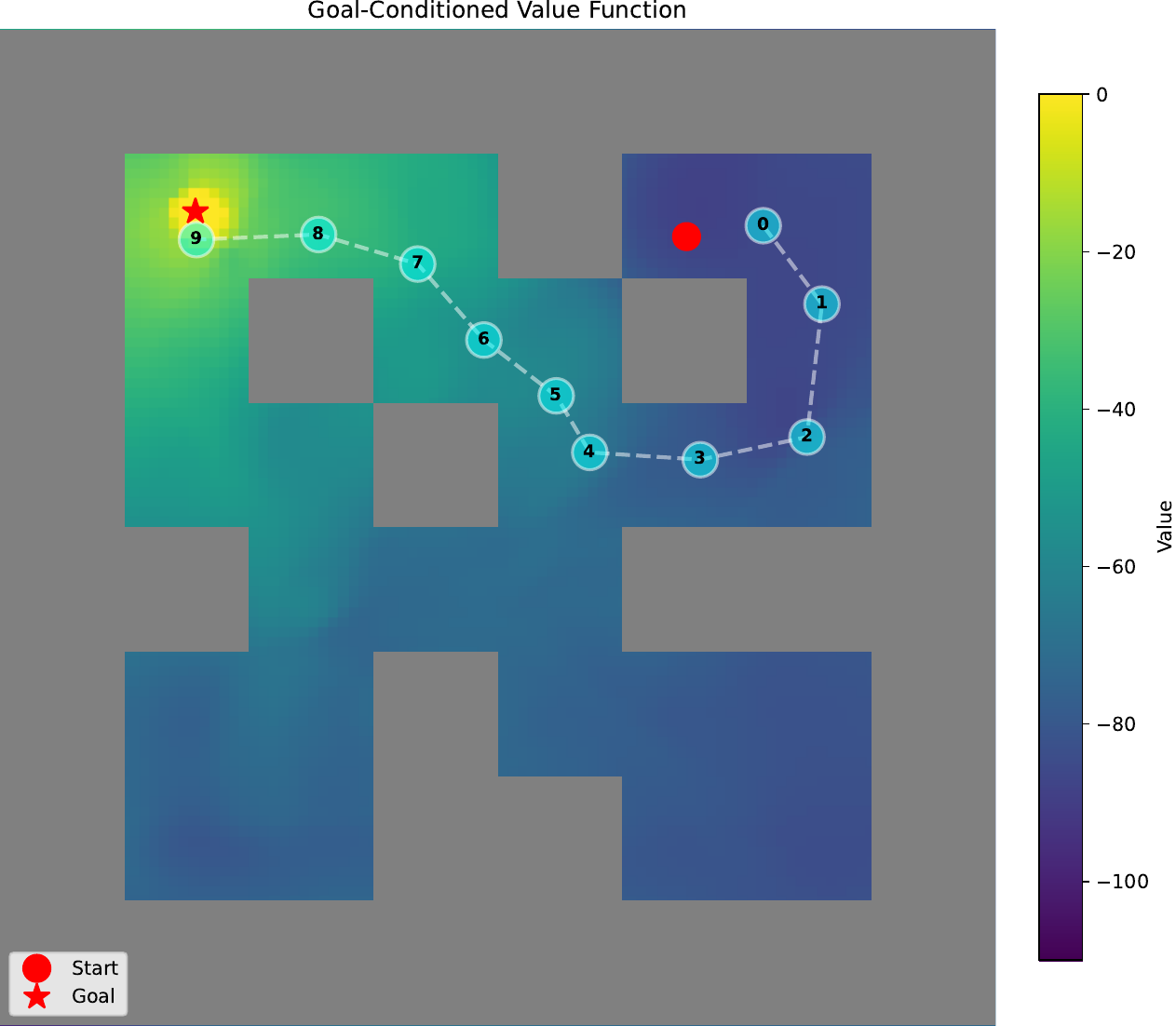}
        \end{minipage}
    }
    \hfill
    \subfigure[Task 5]{
        \begin{minipage}[b]{0.182\linewidth}
            \centering
            \includegraphics[width=\linewidth]{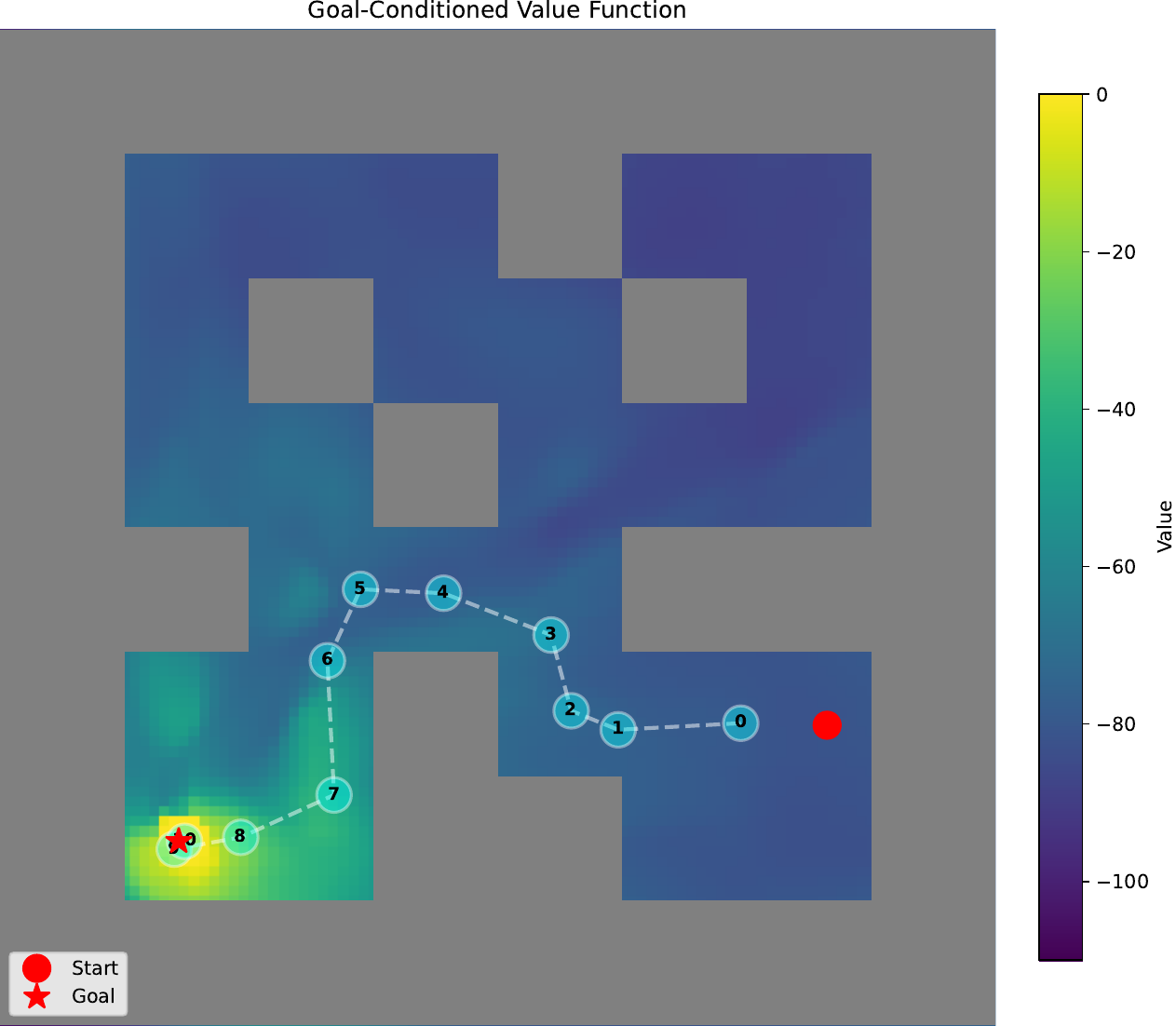} \\
            \includegraphics[width=\linewidth]{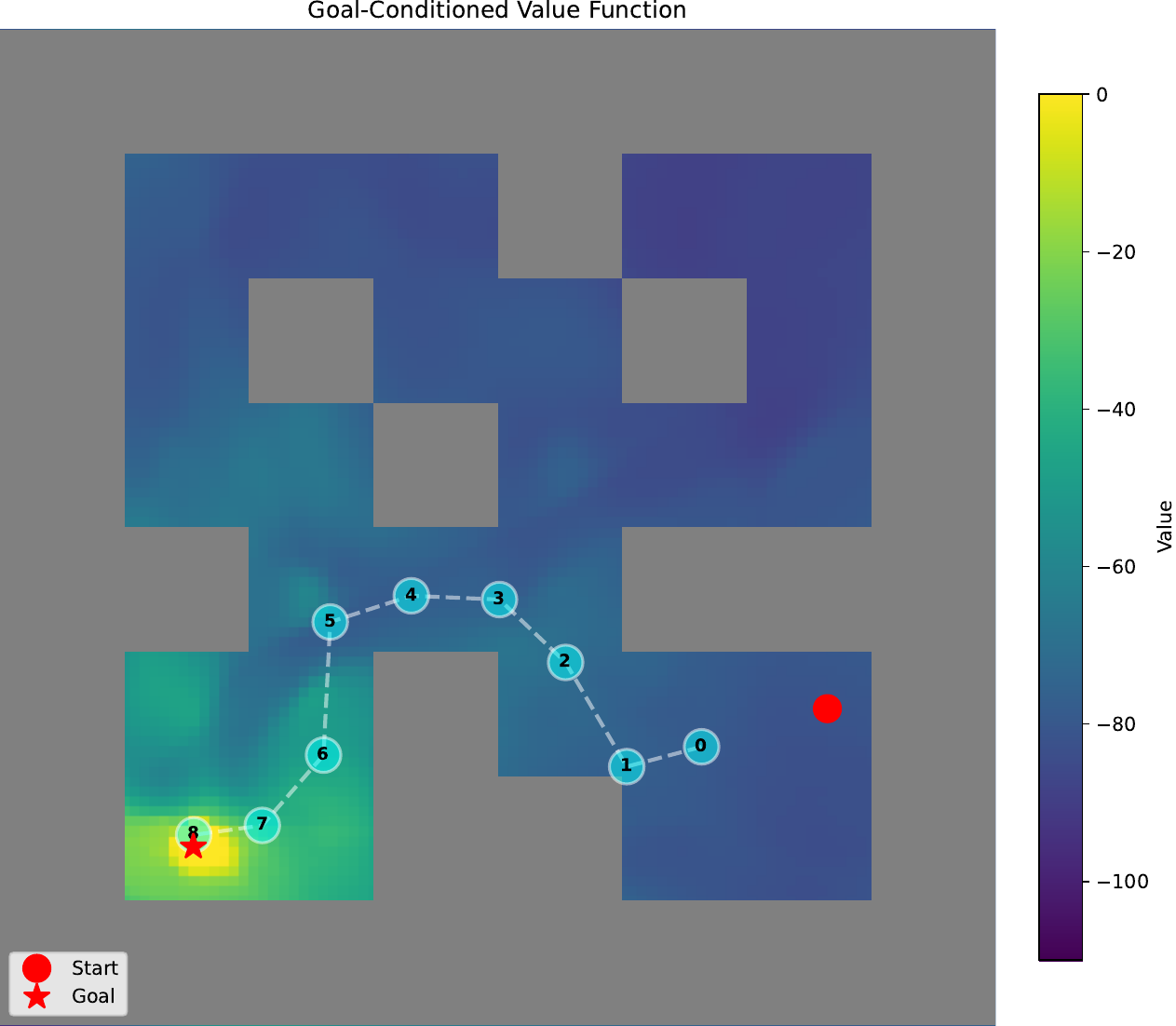}
        \end{minipage}
    }
    \caption{Visualization in the \texttt{medium} maze. Top: HIQL$^{\mathrm{w/o}}$, Bottom: DSP. DSP generates smoother trajectories that better follow the topology.} 
    \label{fig:medium_comparison}
\end{figure}
We first visualize the results in the \texttt{medium}-sized maze. As shown in Figure~\ref{fig:medium_comparison}, while both methods generally reach the goals, we observe distinct behavioral differences. HIQL$^{\mathrm{w/o}}$ occasionally exhibits artifacts characteristic of value-based planning, such as suboptimal detours or subgoals projected into unreachable areas (e.g., walls). In contrast, DSP produces trajectories that are geometrically consistent and better follow the maze topology, generating smooth paths even in this shorter-horizon setting.

\subsection{Large Maze}
Next, we examine the \texttt{large}-sized maze, where the planning horizon increases. As illustrated in Figure~\ref{fig:large_comparison}, the impact of the horizon becomes visible. HIQL$^{\mathrm{w/o}}$ begins to manifest noticeable detours and inefficient paths, showing less consistent planning as the scale increases. Although the value function eventually corrects the agent's direction, the guidance is noisy. DSP, however, maintains its planning stability, producing topologically coherent and smooth trajectories without the jaggedness observed in the baseline.
\begin{figure}[h]
    \centering
    \subfigure[Task 1]{
        \begin{minipage}[b]{0.182\linewidth}
            \centering
            \includegraphics[width=\linewidth]{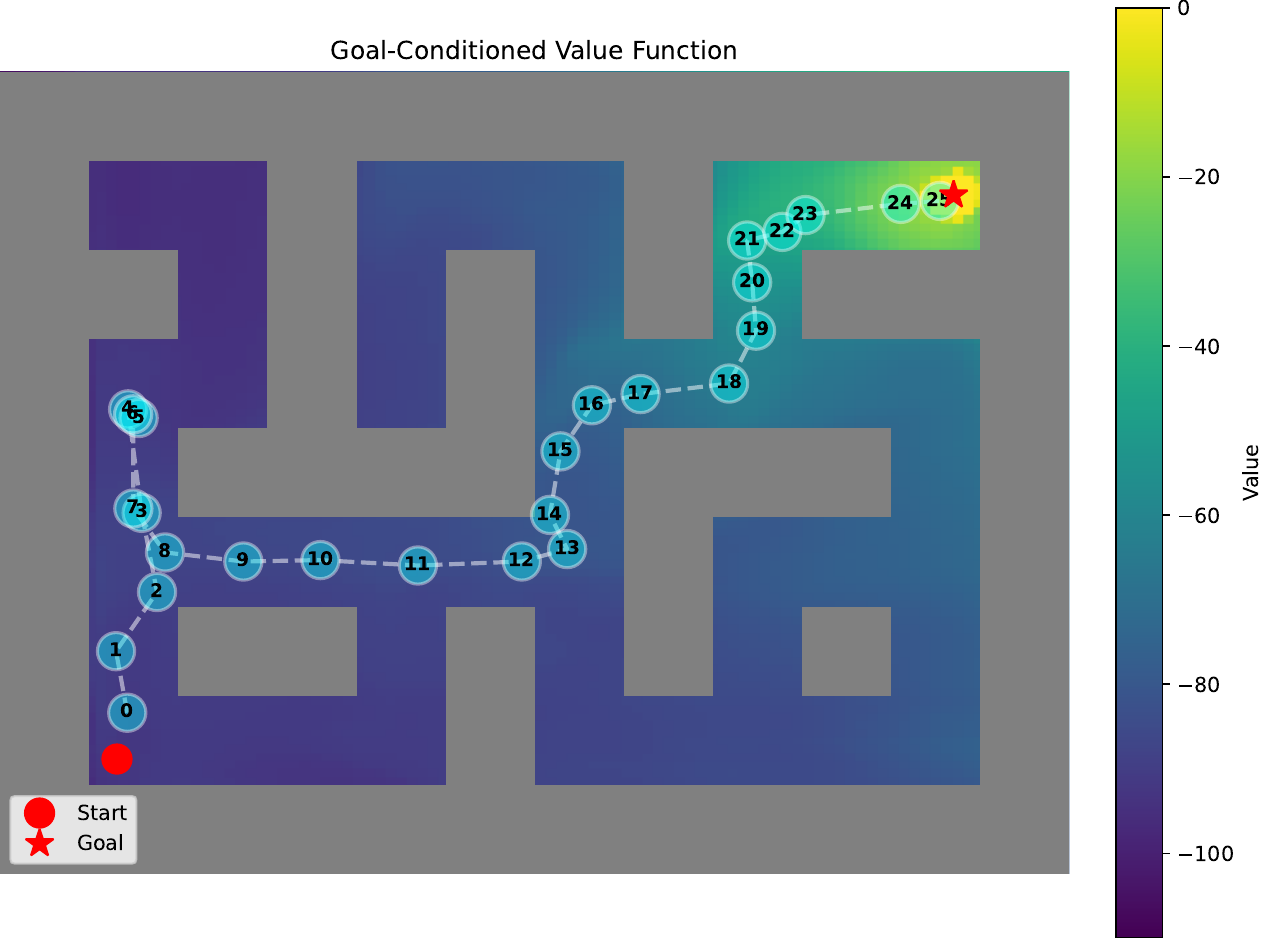} \\
            \includegraphics[width=\linewidth]{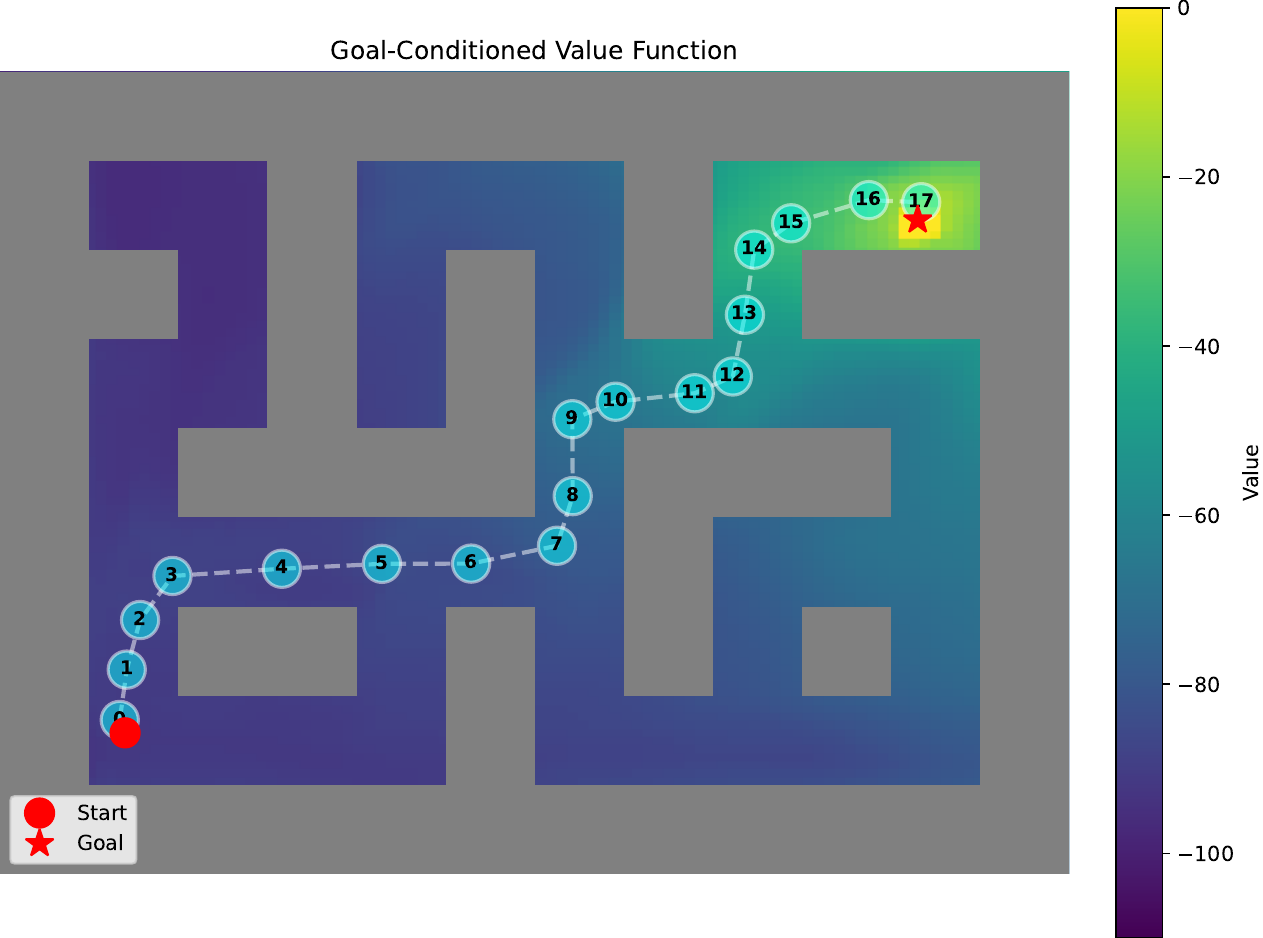}
        \end{minipage}
    }
    \hfill
    \subfigure[Task 2]{
        \begin{minipage}[b]{0.182\linewidth}
            \centering
            \includegraphics[width=\linewidth]{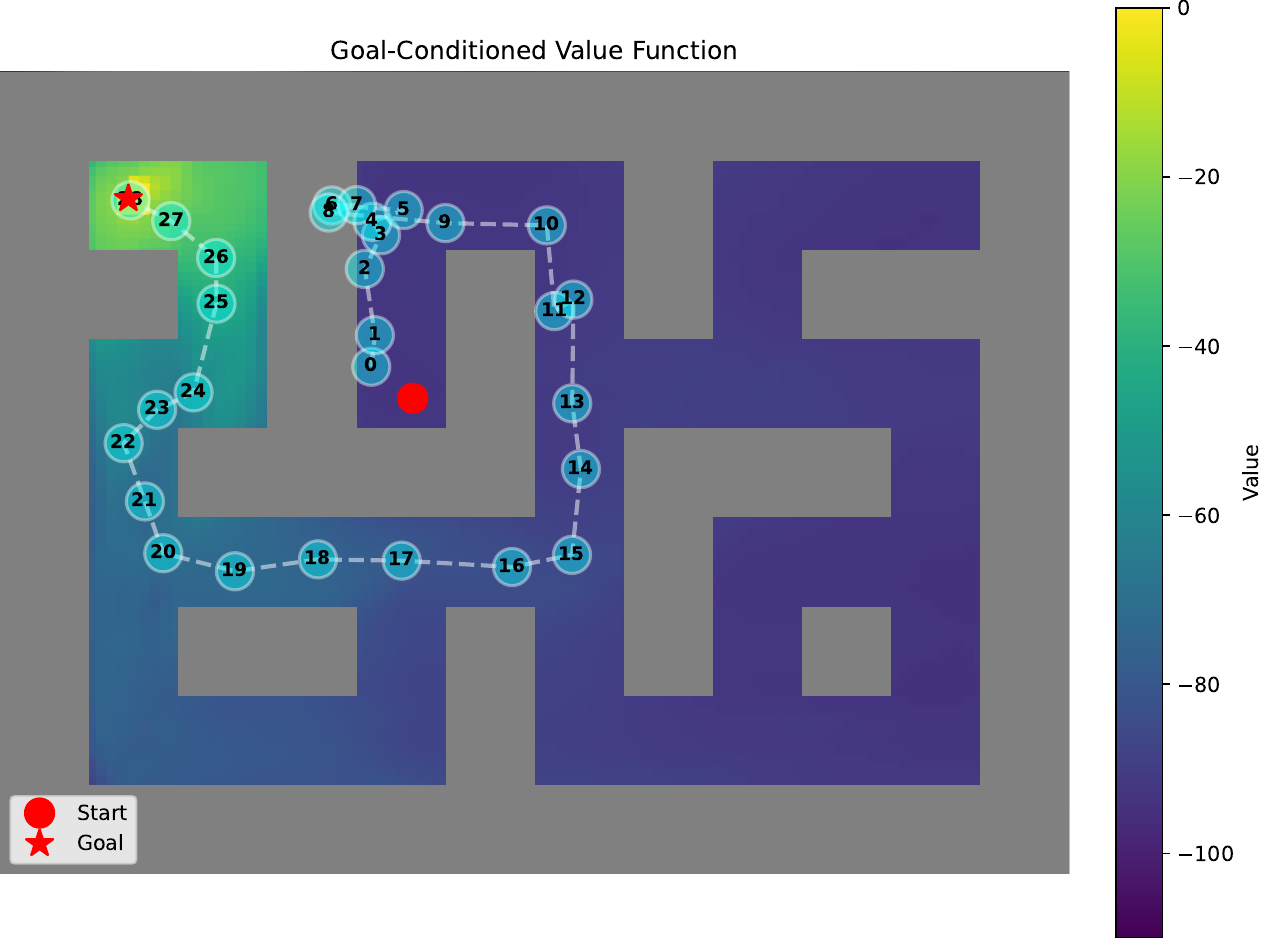} \\
            \includegraphics[width=\linewidth]{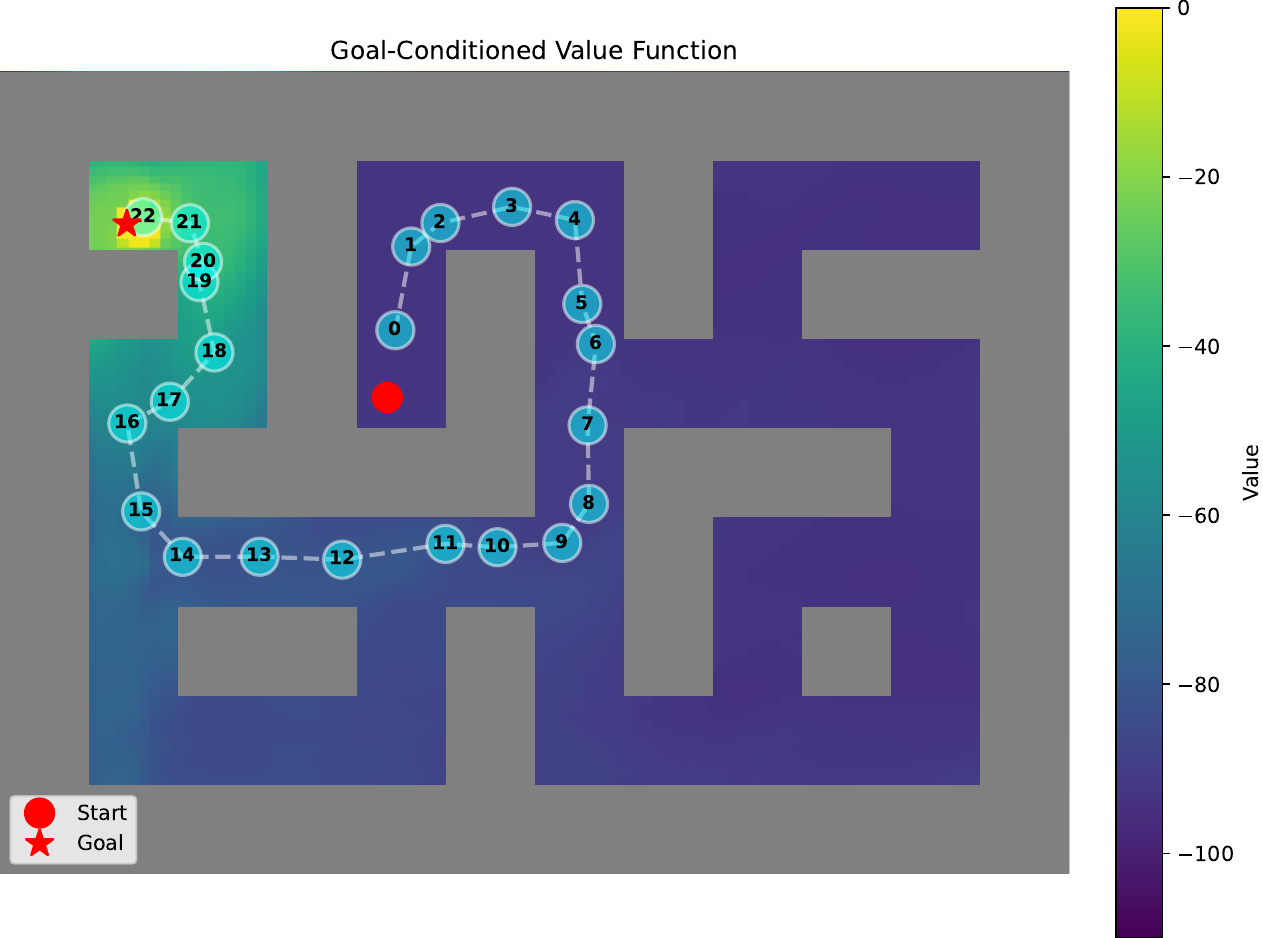}
        \end{minipage}
    }
    \hfill
    \subfigure[Task 3]{
        \begin{minipage}[b]{0.182\linewidth}
            \centering
            \includegraphics[width=\linewidth]{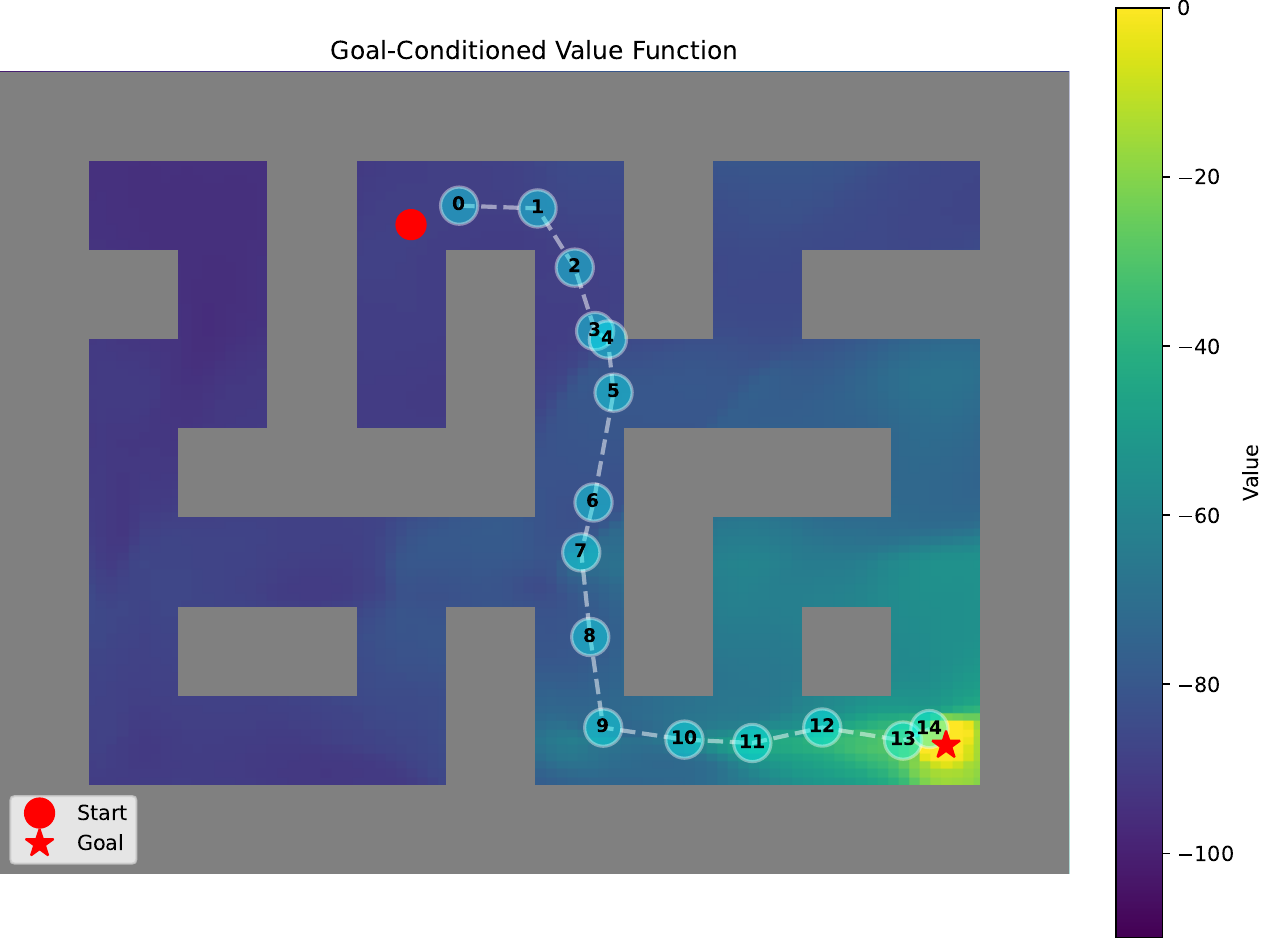} \\
            \includegraphics[width=\linewidth]{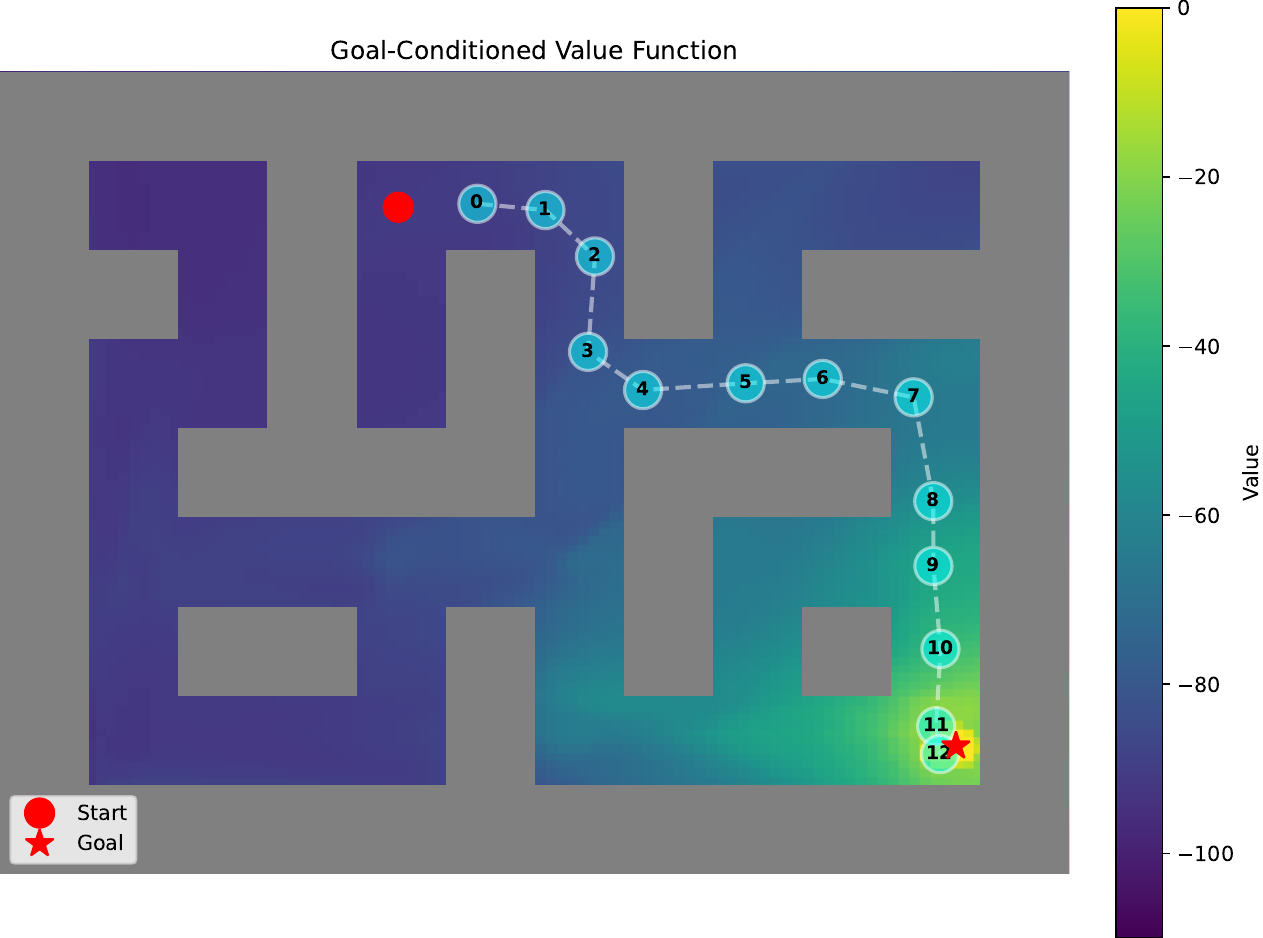}
        \end{minipage}
    }
    \hfill
    \subfigure[Task 4]{
        \begin{minipage}[b]{0.182\linewidth}
            \centering
            \includegraphics[width=\linewidth]{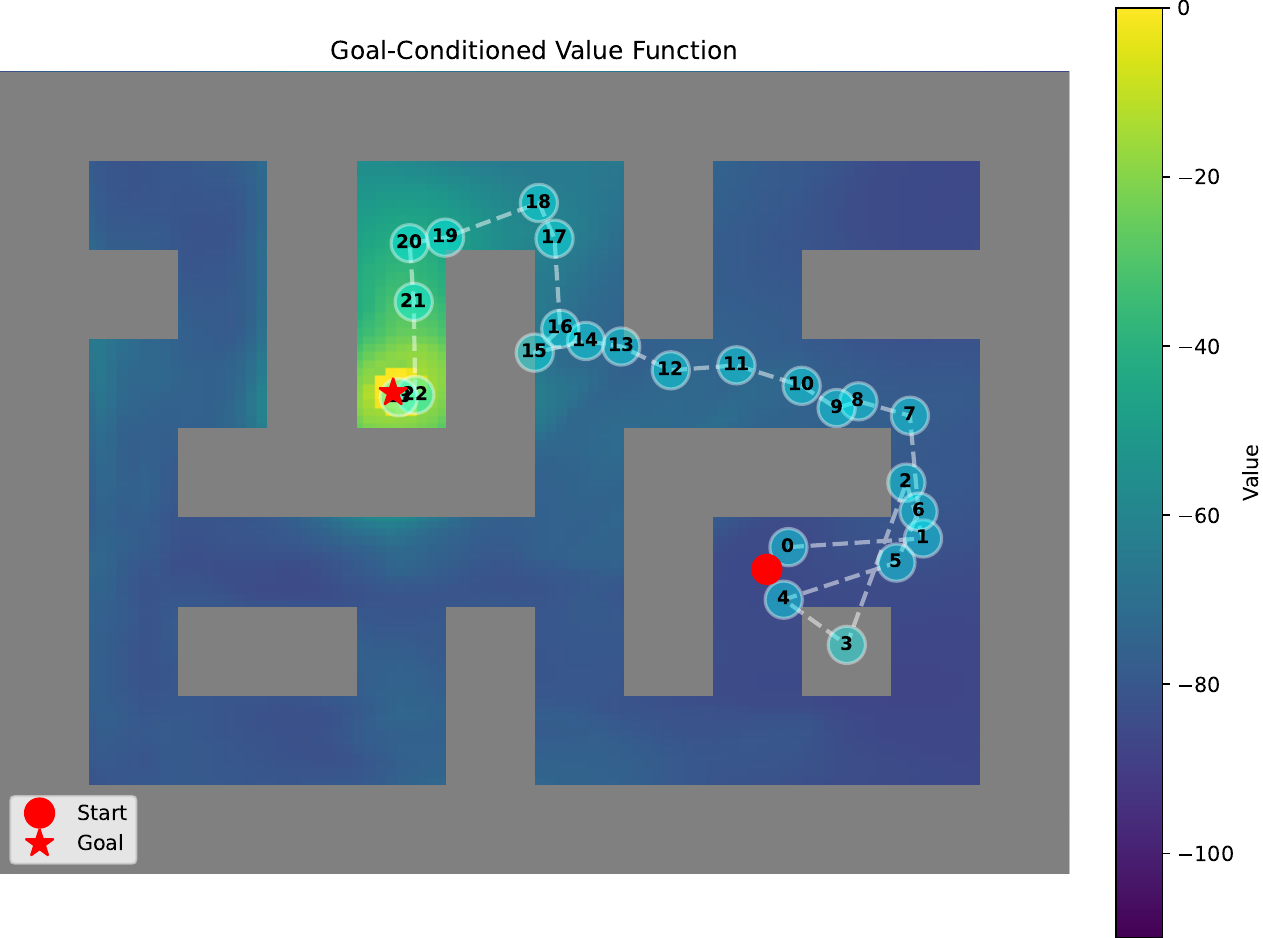} \\
            \includegraphics[width=\linewidth]{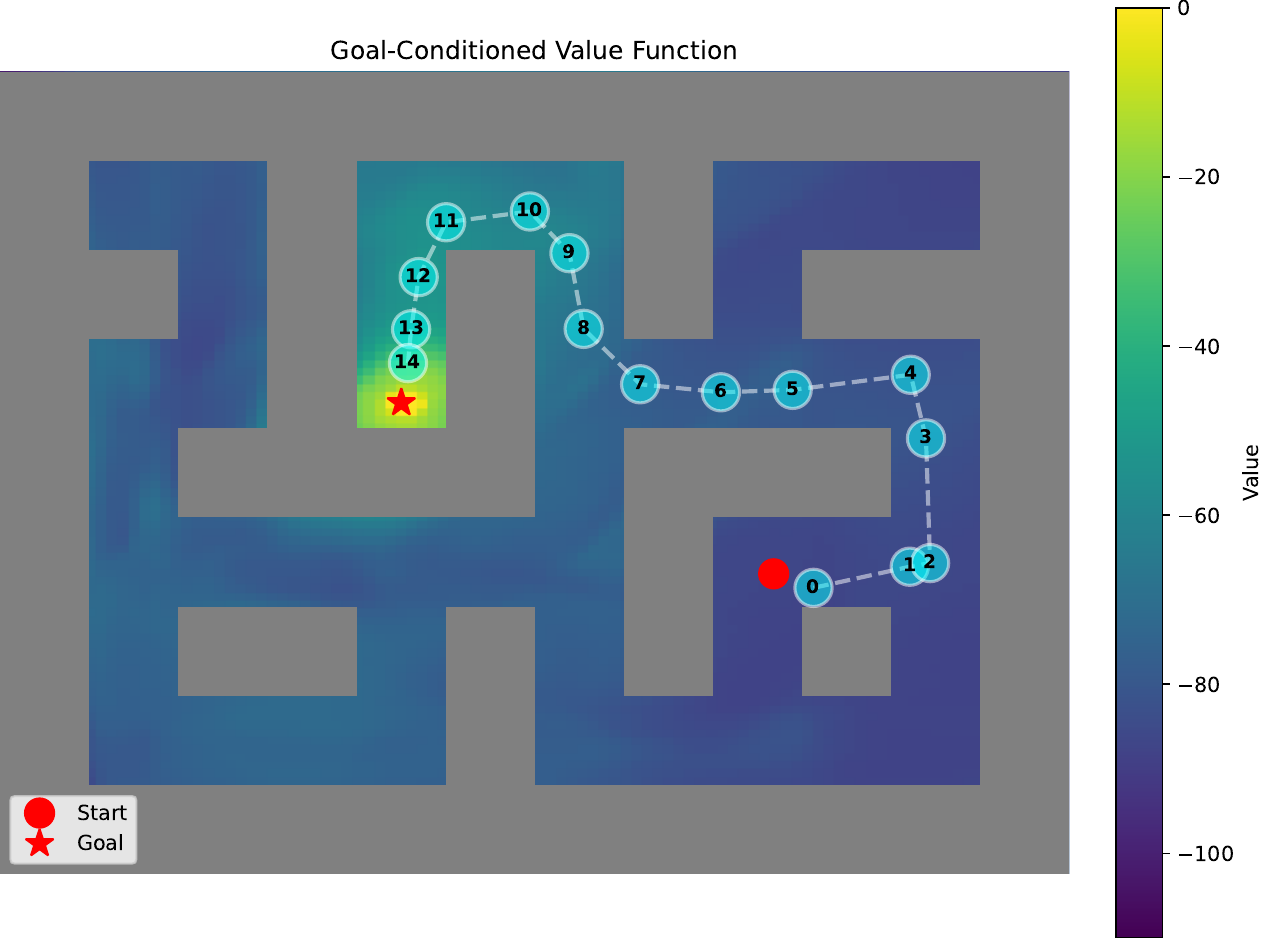}
        \end{minipage}
    }
    \hfill
    \subfigure[Task 5]{
        \begin{minipage}[b]{0.182\linewidth}
            \centering
            \includegraphics[width=\linewidth]{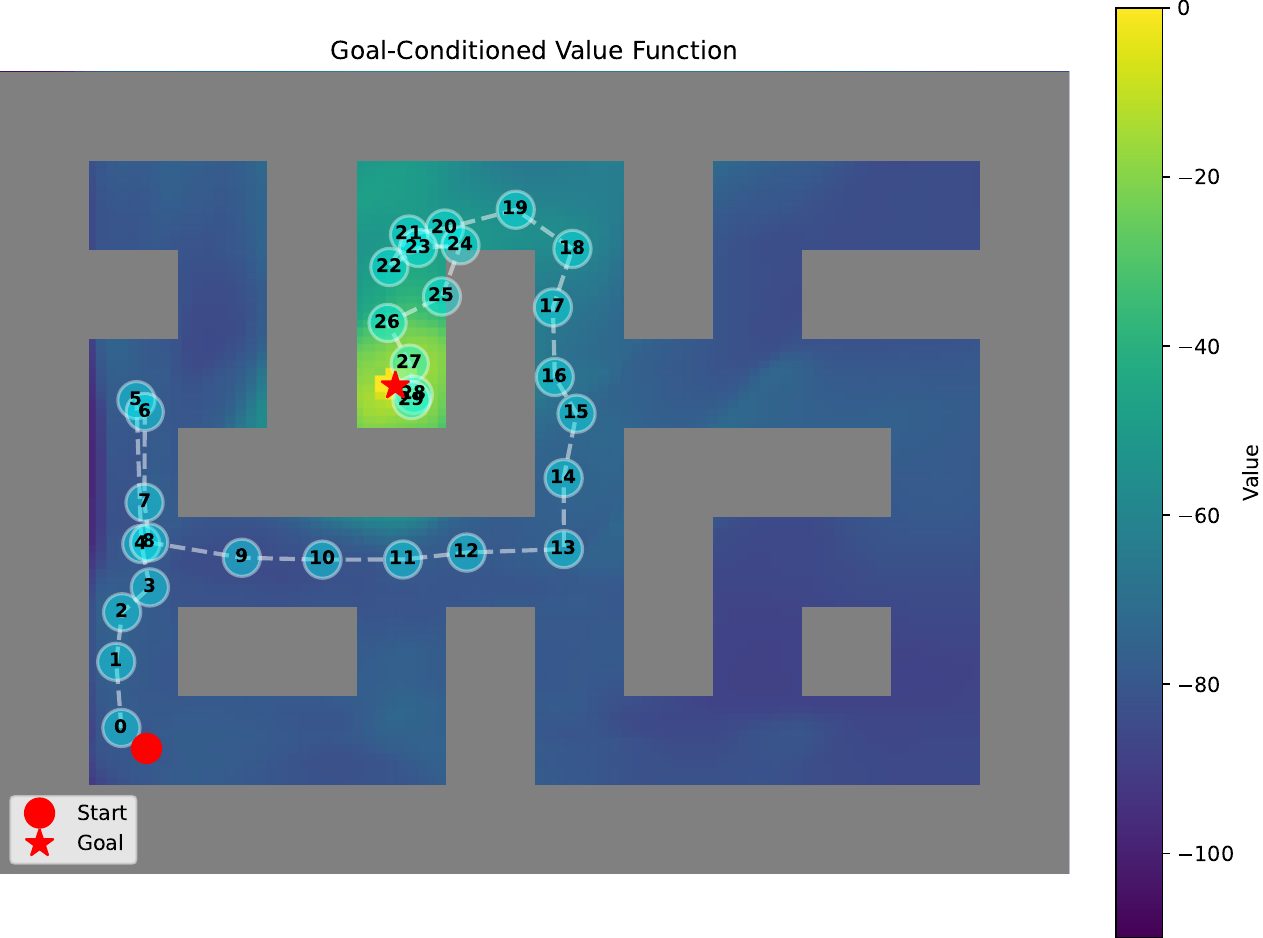} \\
            \includegraphics[width=\linewidth]{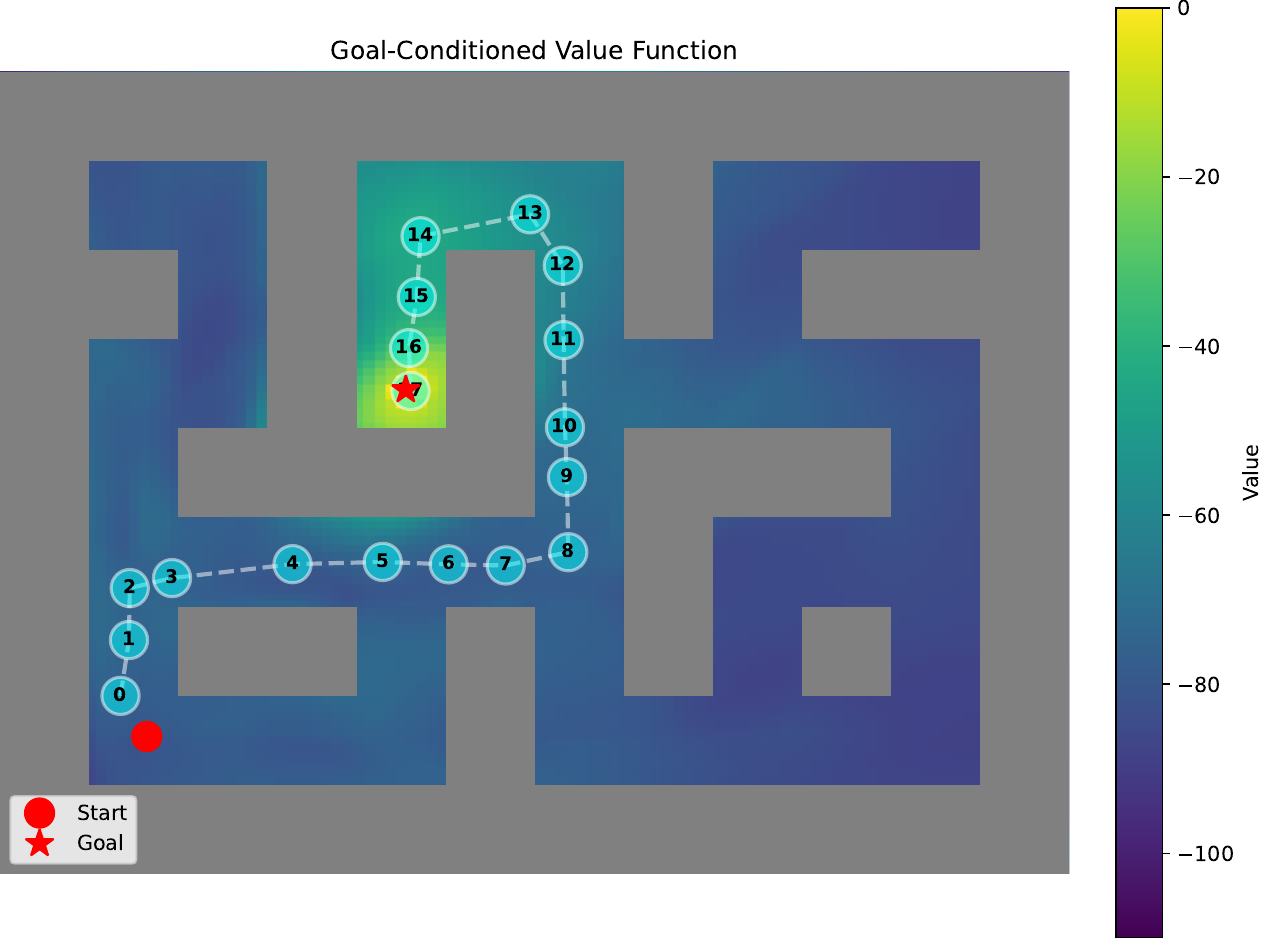}
        \end{minipage}
    }
    \caption{Visualization in the \texttt{large} maze. Top: HIQL$^{\mathrm{w/o}}$, Bottom: DSP. As the scale increases, HIQL begins to show significant detours, whereas DSP maintains efficient planning.}
    \label{fig:large_comparison}
\end{figure}

\subsection{Giant Maze}
\begin{figure}[h]
    \centering
    \subfigure[Task 1]{
        \begin{minipage}[b]{0.182\linewidth}
            \centering
            \includegraphics[width=\linewidth]{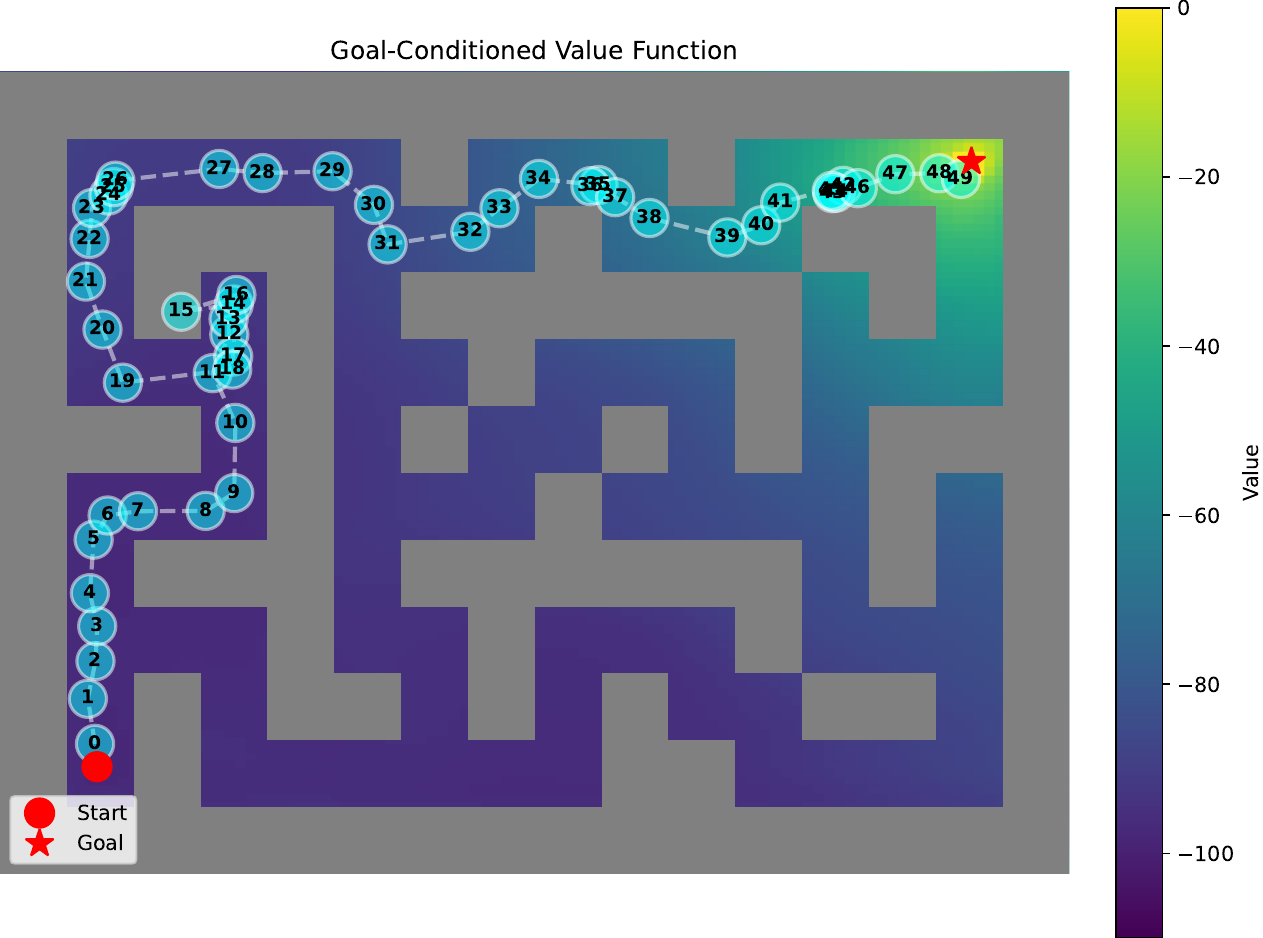} \\
            \includegraphics[width=\linewidth]{Figures/Visualization/giant/DSP_giant_value_function_task_1.pdf}
        \end{minipage}
    }
    \hfill
    \subfigure[Task 2]{
        \begin{minipage}[b]{0.182\linewidth}
            \centering
            \includegraphics[width=\linewidth]{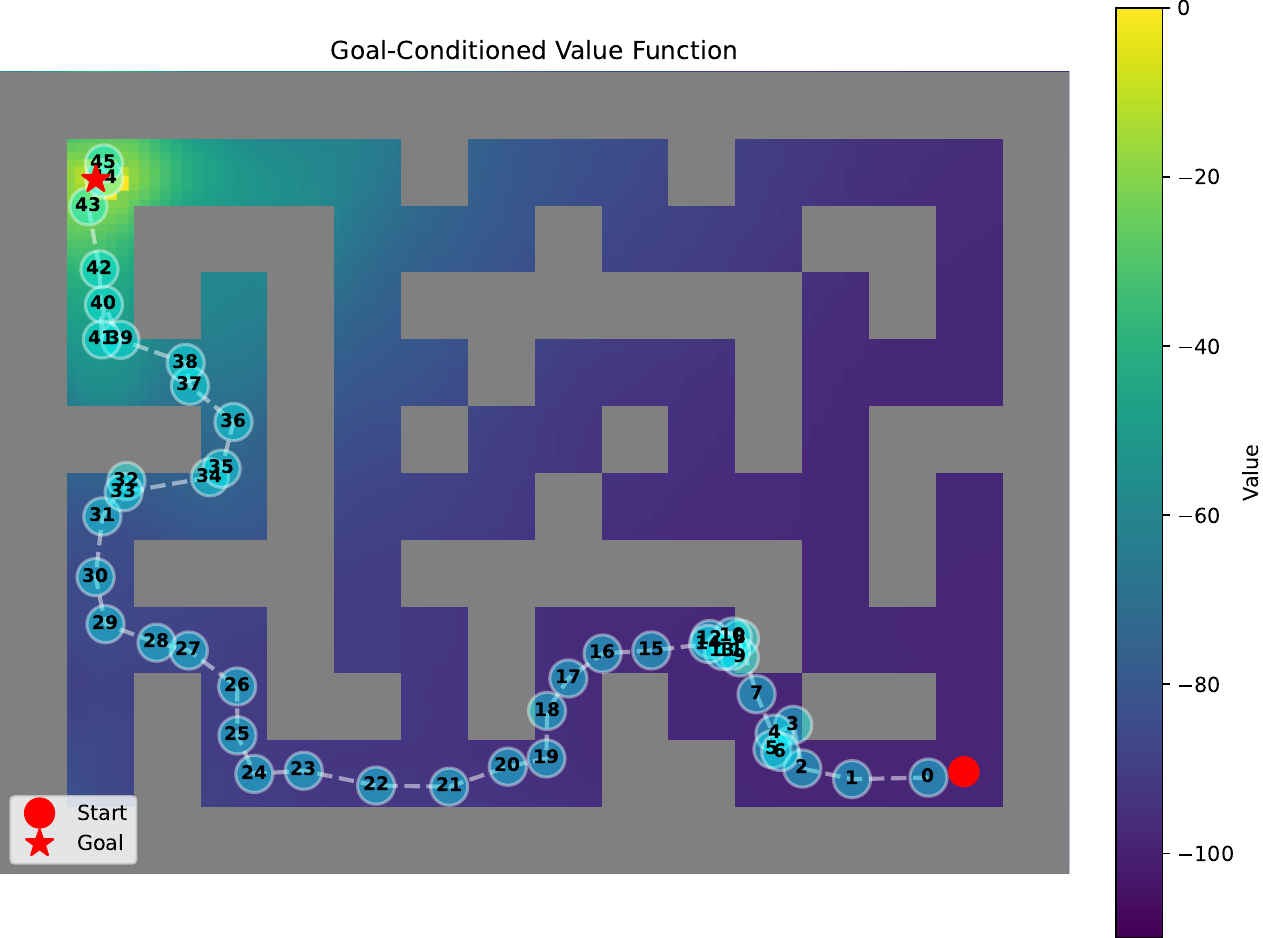} \\
            \includegraphics[width=\linewidth]{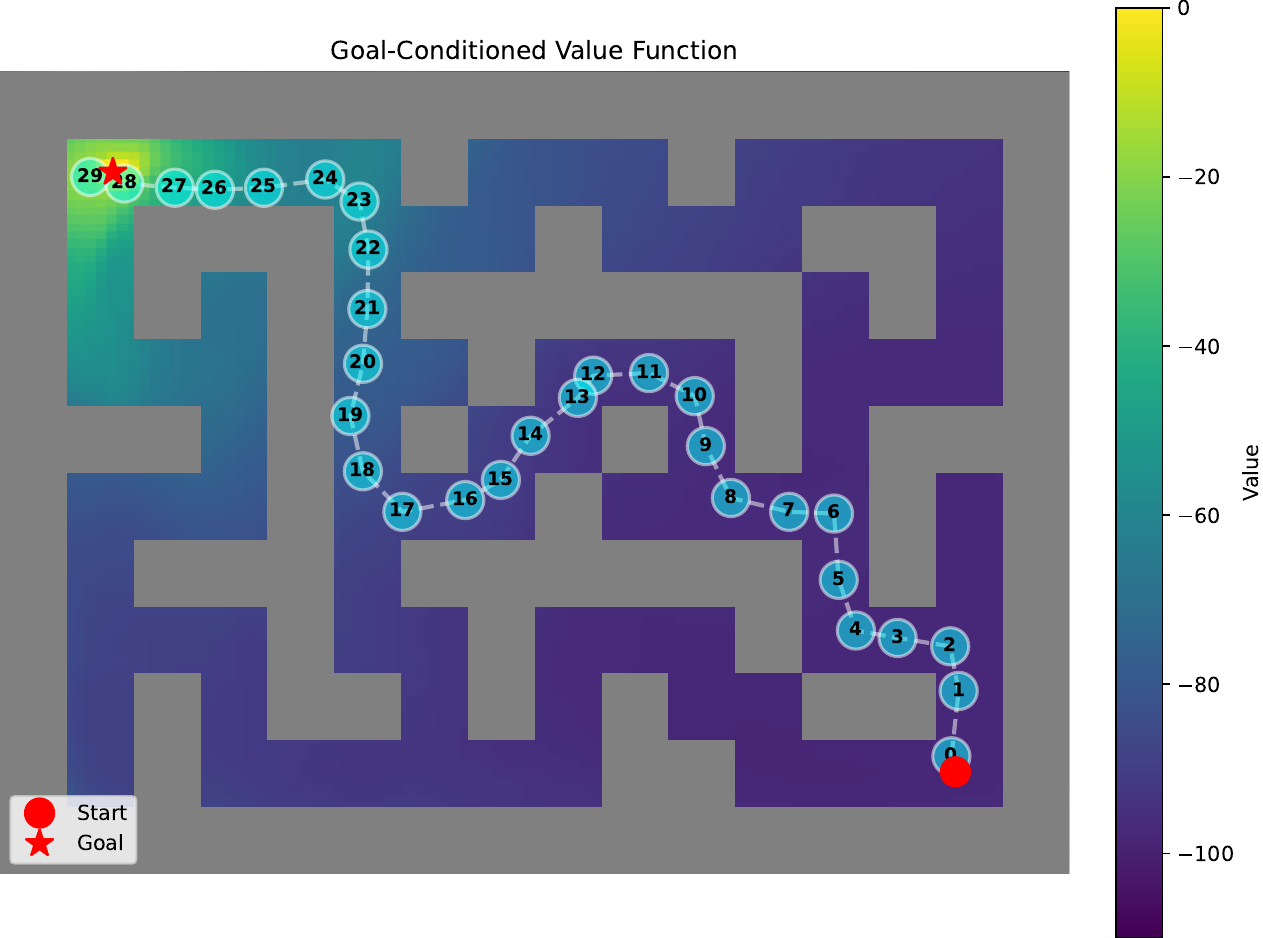}
        \end{minipage}
    }
    \hfill
    \subfigure[Task 3]{
        \begin{minipage}[b]{0.182\linewidth}
            \centering
            \includegraphics[width=\linewidth]{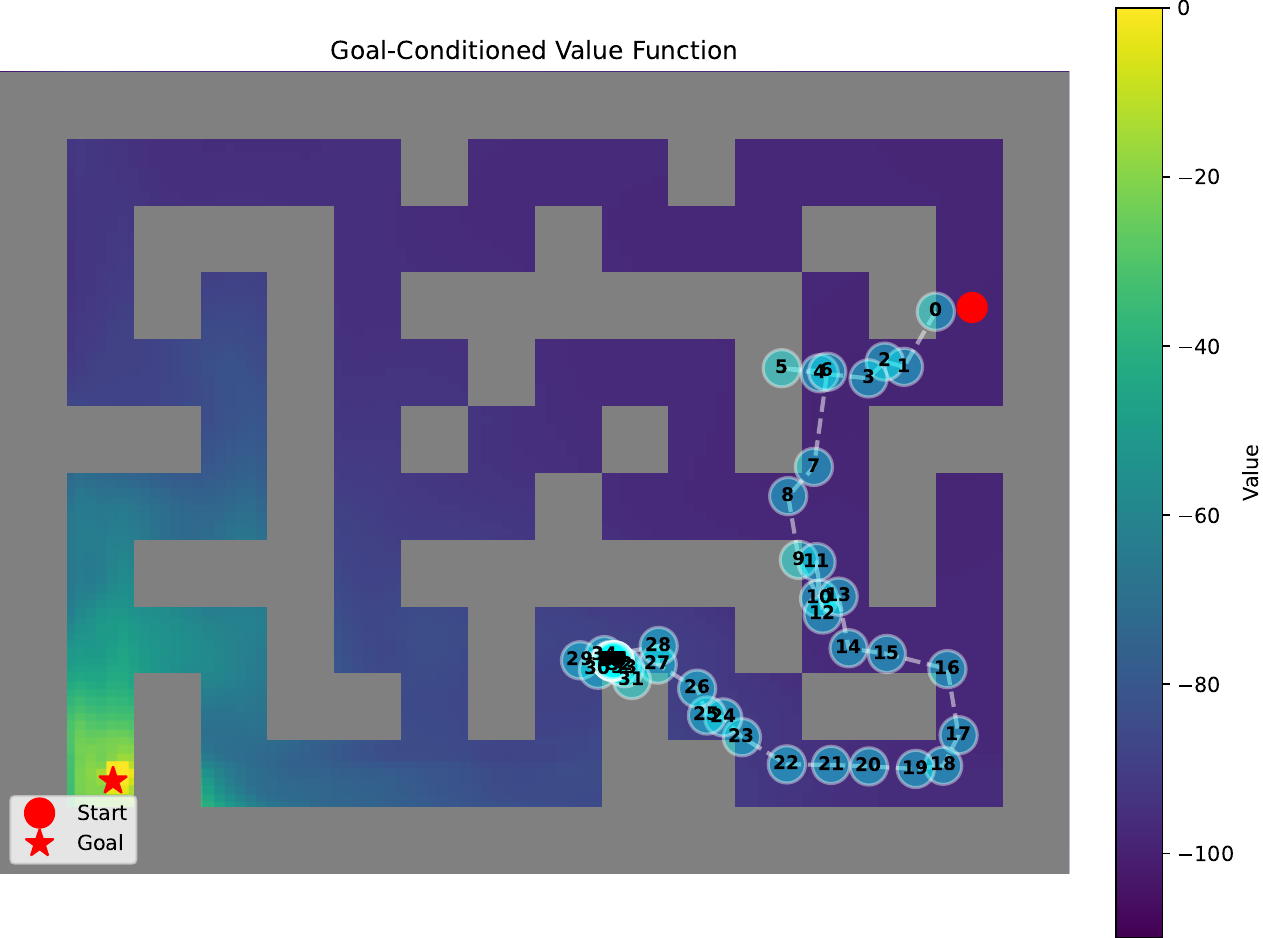} \\
            \includegraphics[width=\linewidth]{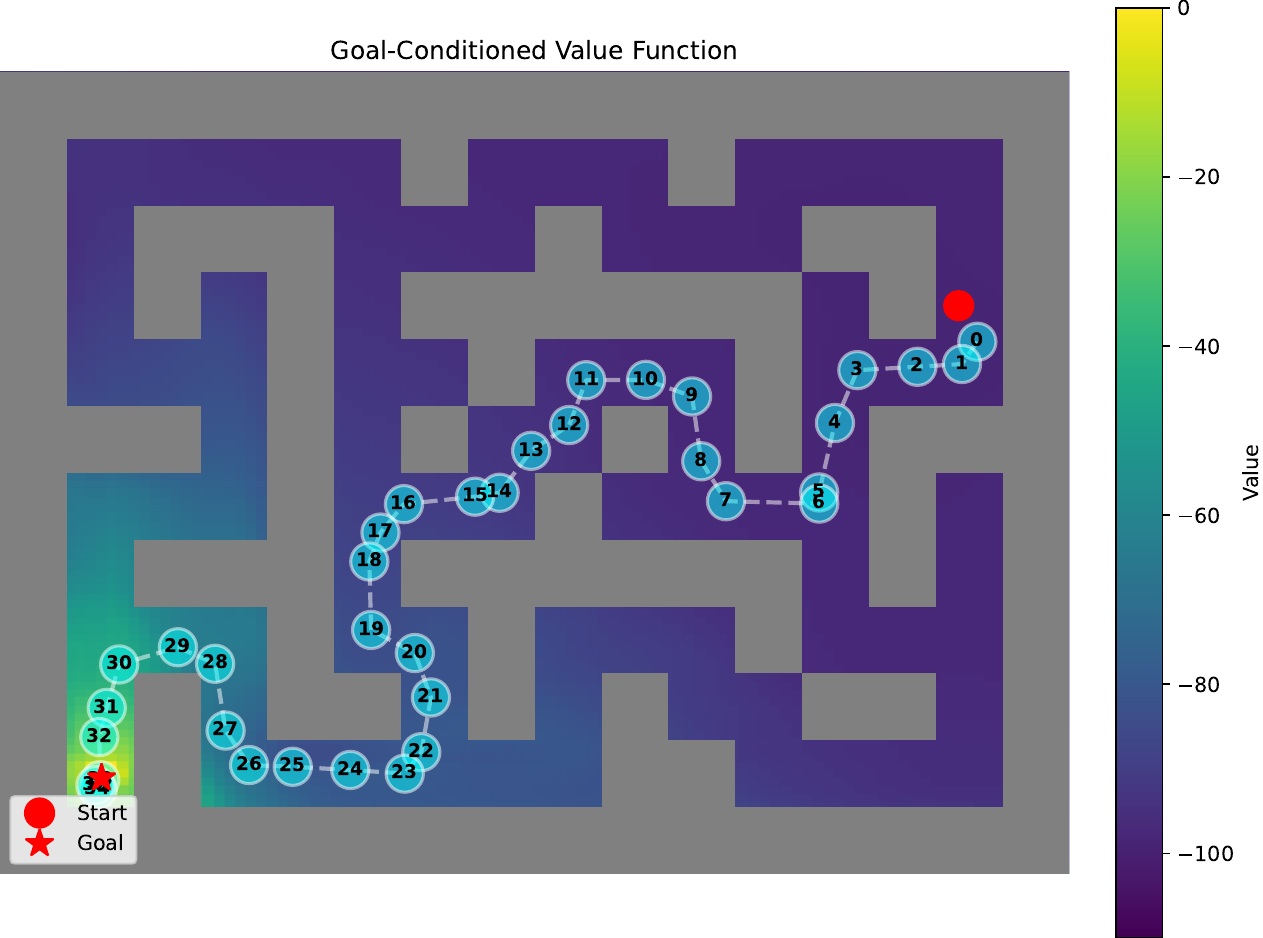}
        \end{minipage}
    }
    \hfill
    \subfigure[Task 4]{
        \begin{minipage}[b]{0.182\linewidth}
            \centering
            \includegraphics[width=\linewidth]{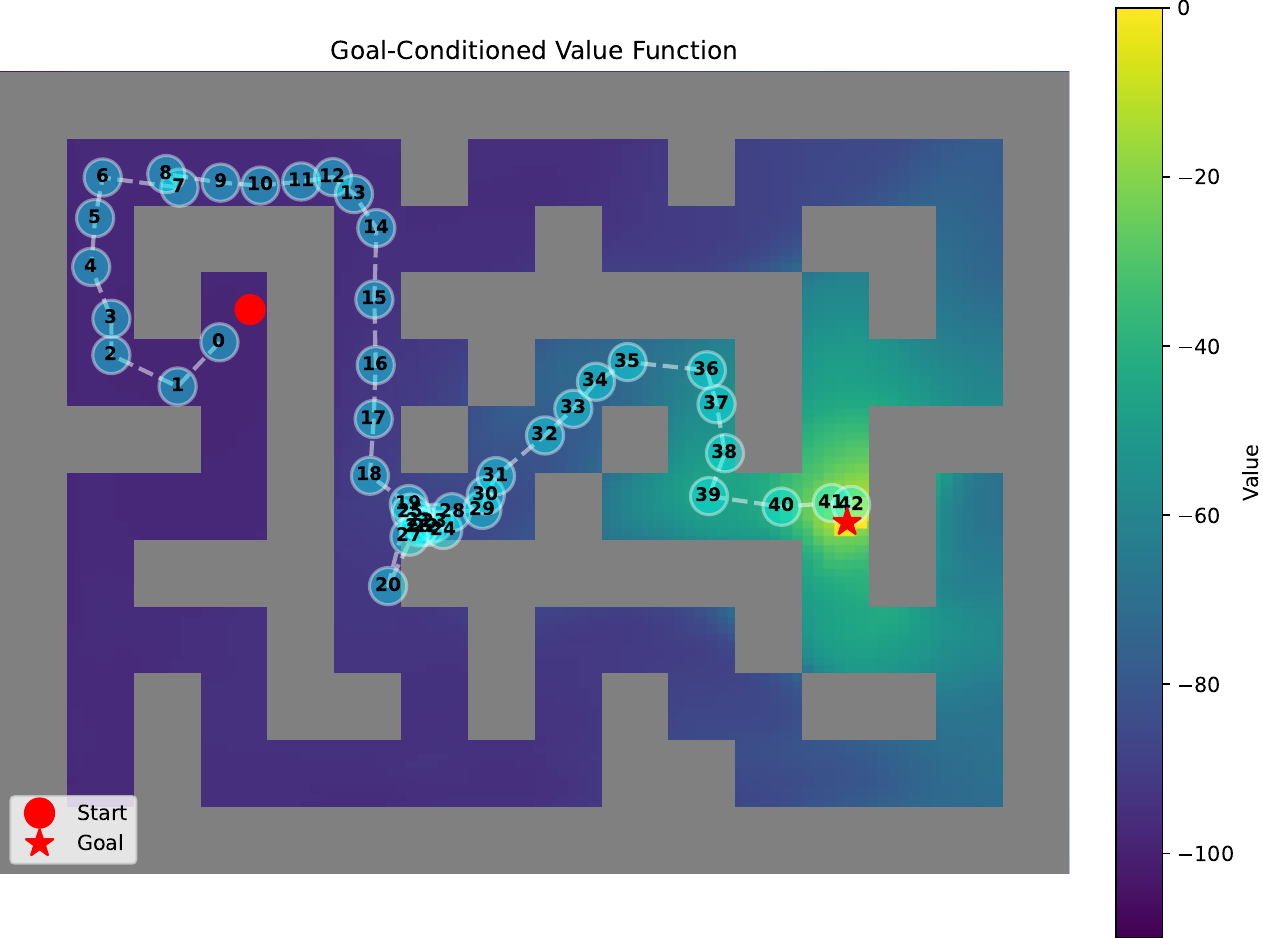} \\
            \includegraphics[width=\linewidth]{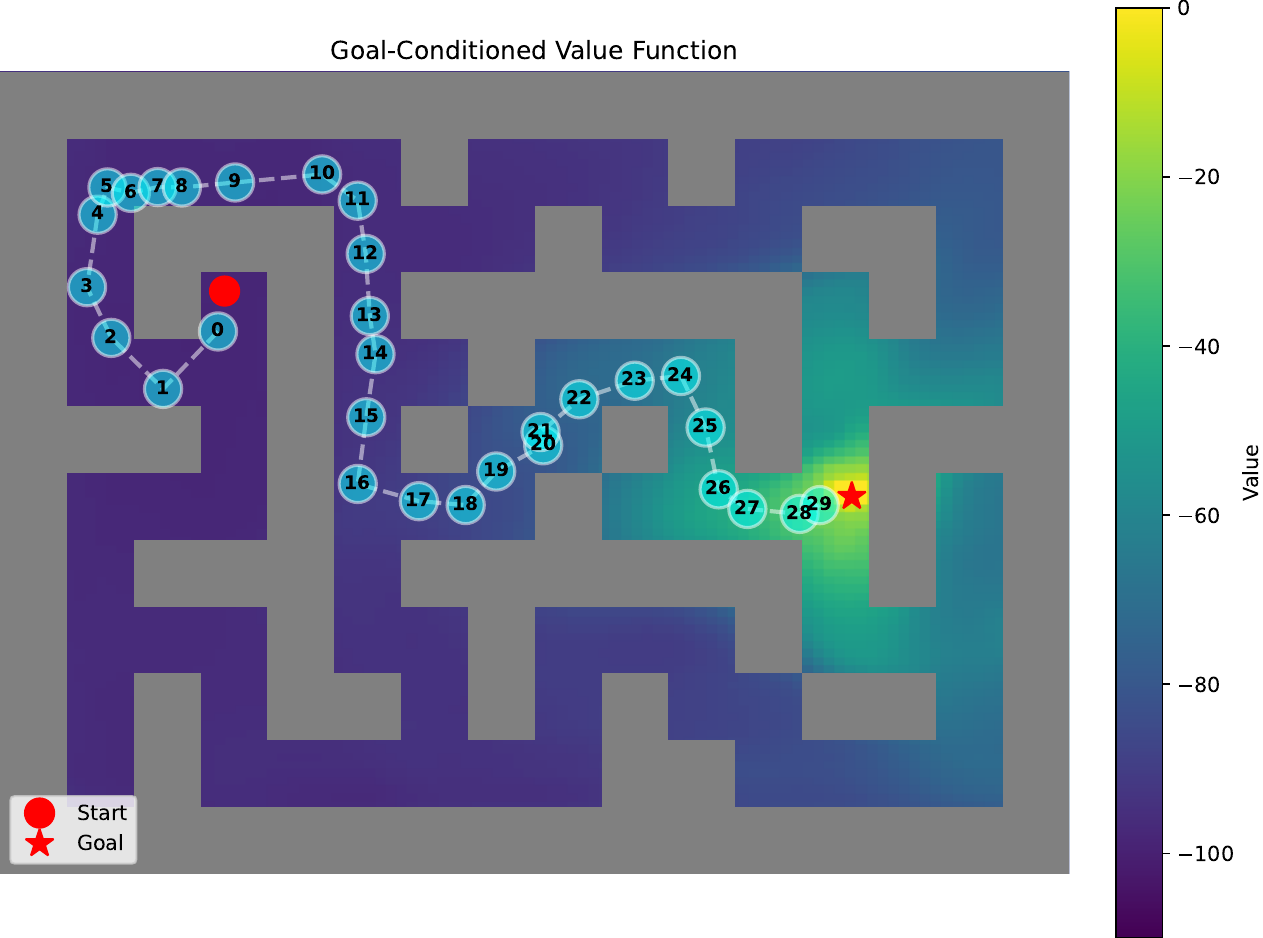}
        \end{minipage}
    }
    \hfill
    \subfigure[Task 5]{
        \begin{minipage}[b]{0.182\linewidth}
            \centering
            \includegraphics[width=\linewidth]{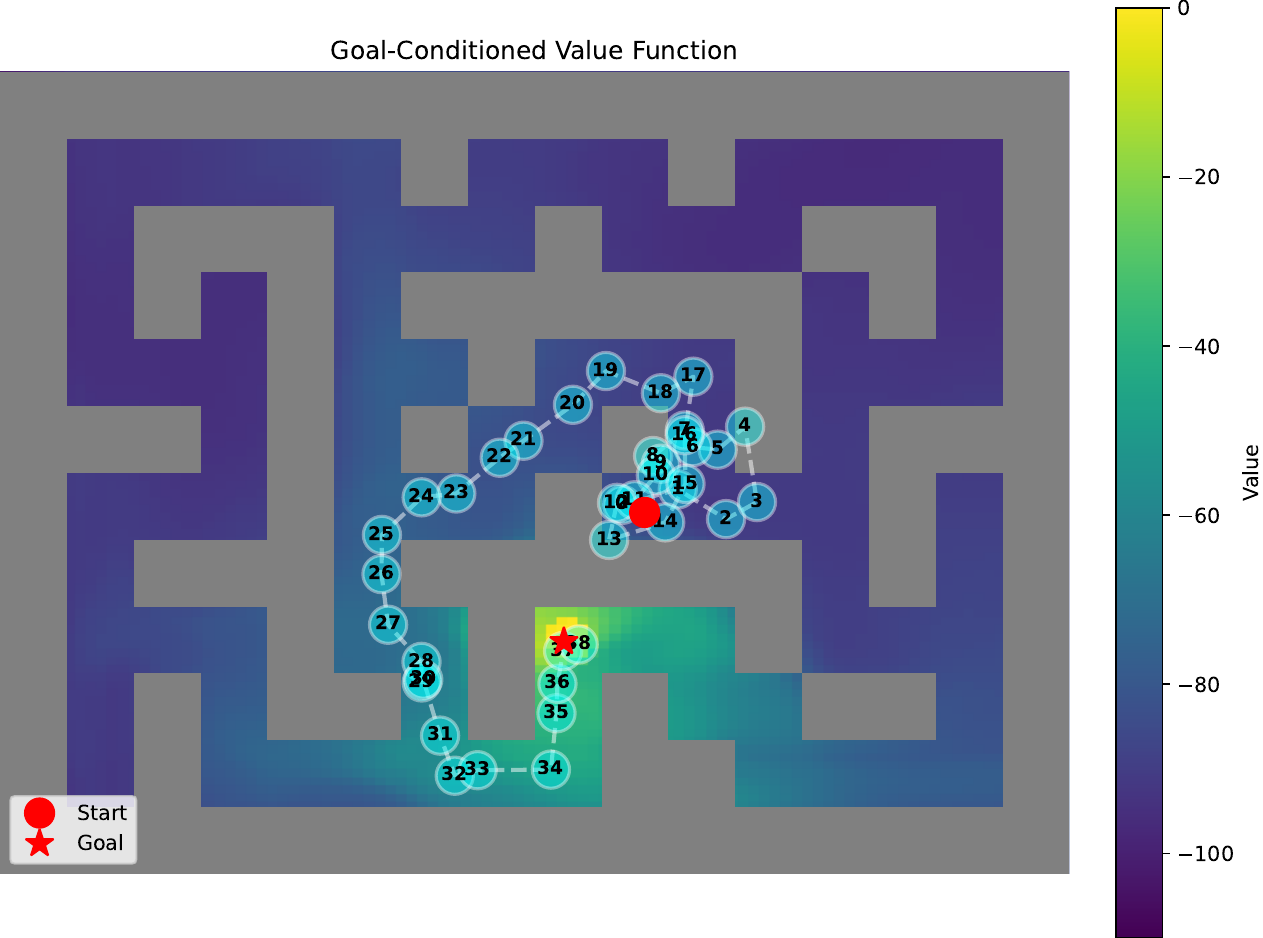} \\
            \includegraphics[width=\linewidth]{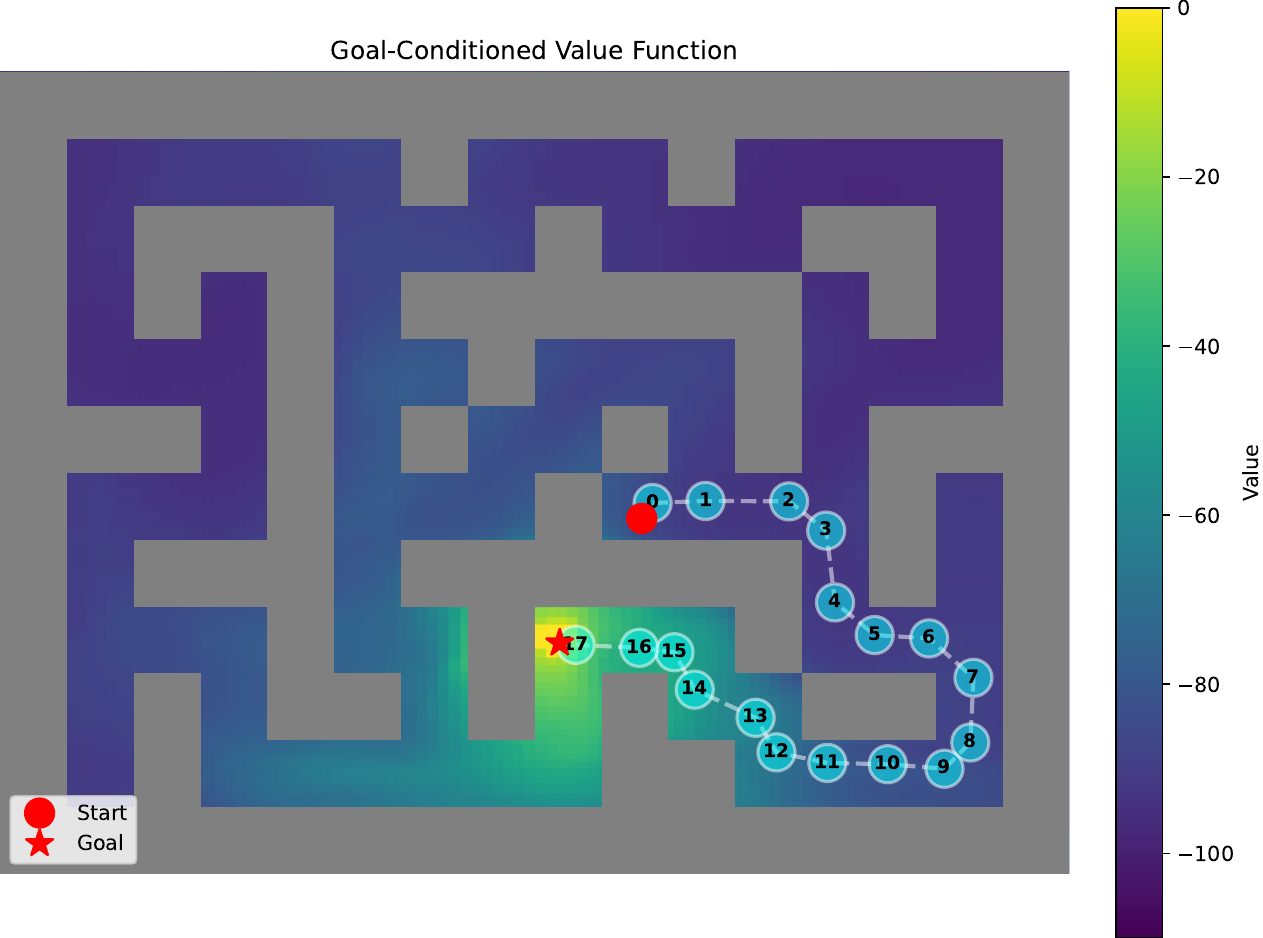}
        \end{minipage}
    }
    \caption{Visualization in the \texttt{giant} maze. Top: HIQL$^{\mathrm{w/o}}$, Bottom: DSP. In this extreme setting, HIQL often produces less reliable plans, while DSP more consistently reaches the target.}
    \label{fig:giant_comparison}
\end{figure}
Finally, we extend our visualization to the \texttt{giant}-sized maze, where the curse of horizon is most acute. Here, HIQL$^{\mathrm{w/o}}$ exhibits severe performance degradation. The high-level policy, struggling with the immense scale, often fails to generate valid subgoals, leading to task failure. Even in successful trials, the attenuation of the value signal results in subgoals that linger in local optima; although the agent may eventually escape, the traversal is far from time-optimal.

In sharp contrast, DSP shows more consistent navigation behavior. However, in this extreme setting, we observe a phenomenon of \textbf{subgoal clustering}, where consecutive high-level subgoals are repeatedly generated in a local region. Empirically, this behavior is not indicative of high-level planning failure, as the generated subgoals remain topologically valid and goal-directed. Instead, the clustering pattern demonstrates the planner's robustness to local execution stochasticity. Since the low-level policy is trained via AWR, it may exhibit minor deviations due to local value approximations. Crucially, unlike the larger global planning errors seen in baselines, these deviations are bounded within short horizons. As a result, the high-level planner successfully re-issues corrective subgoals in the same vicinity to compensate for these local execution errors, ensuring eventual task completion.

\section{Comparison Experiments on D4RL Antmaze}
\label{app:d4rl_antmaze}
To further position DSP with respect to conceptually related diffusion-based hierarchical planning methods, we additionally evaluate DSP on the D4RL AntMaze benchmark. Our main experiments are conducted on OGBench, which is specifically designed for offline goal-conditioned RL and provides broader coverage of long-horizon reasoning, stitching, and stochasticity. By contrast, D4RL AntMaze is more limited for evaluating offline GCRL since it mainly uses a single fixed goal. Nevertheless, D4RL AntMaze is the benchmark on which several prior diffusion-based hierarchical planning methods were originally reported, and therefore provides the most direct setting for comparison.

\begin{table}[h]
    \centering
    \caption{Comparison with diffusion-based hierarchical planning methods on D4RL AntMaze. We report mean $\pm$ standard deviation over 5 random seeds, with each seed evaluated on 100 episodes. Baseline numbers are taken from the corresponding original papers when reported under the same D4RL AntMaze setting. The best-performing entry in each row is highlighted in \textbf{bold}.}
    \label{tab:d4rl_antmaze}
    \resizebox{0.85\linewidth}{!}{
    \begin{tabular}{lccccc}
    \toprule
        \textbf{Datasets} & \textbf{HDMI} & \textbf{DTAMP} & \textbf{HD} & \textbf{SIHD} & \textbf{DSP} \\
        \midrule
        \texttt{antmaze-umaze-v2} & $86.1 \pm 2.4$ & $-$ & $-$ & $-$ & $\mathbf{98.6 \pm 2.1}$ \\
        \texttt{antmaze-umaze-diverse-v2} & $73.7 \pm 1.1$ & $-$ & $94.0 \pm 4.9$ & $96.5 \pm 2.8$ & $\mathbf{96.8 \pm 2.6}$ \\
        \texttt{antmaze-medium-play-v2} & $-$ & $\mathbf{89.3 \pm 3.9}$ & $-$ & $-$ & $87.0 \pm 3.3$ \\
        \texttt{antmaze-medium-diverse-v2} & $-$ & $76.7 \pm 4.5$ & $88.7 \pm 8.1$ & $92.2 \pm 5.0$ & $\mathbf{92.4 \pm 4.4}$ \\
        \texttt{antmaze-large-play-v2} & $-$ & $62.0 \pm 2.9$ & $-$ & $-$ & $\mathbf{88.8 \pm 4.1}$ \\
        \texttt{antmaze-large-diverse-v2} & $71.5 \pm 3.5$ & $53.3 \pm 9.7$ & $83.6 \pm 5.8$ & $89.4 \pm 4.2$ & $\mathbf{95.4 \pm 2.9}$ \\
        \bottomrule
    \end{tabular}}
\end{table}

Table~\ref{tab:d4rl_antmaze} compares DSP with representative diffusion-based hierarchical planning baselines, including HDMI~\citep{HDMI}, DTAMP~\citep{DTAMP}, HD~\citep{HD}, and SIHD~\citep{SIHD}. We report mean and standard deviation over 5 random seeds, with each seed evaluated on 100 episodes. Baseline numbers are taken from the corresponding original papers when reported on the same D4RL AntMaze setting. The best-performing entry in each row is highlighted in bold.

Overall, DSP is highly competitive and outperforms prior diffusion-based hierarchical planning methods on most reported tasks. The gains are particularly pronounced on larger and more challenging tasks such as \texttt{antmaze-large-play-v2} and \texttt{antmaze-large-diverse-v2}, where long-horizon decision errors are more likely to accumulate. On averages over overlapping tasks, DSP substantially outperforms DTAMP (90.9 vs.\ 70.3) and HDMI (96.9 vs.\ 77.1), and also improves over HD (94.9 vs.\ 88.8) and SIHD (94.9 vs.\ 92.7).

These results are consistent with the key design of DSP. Unlike prior diffusion-based hierarchical planning methods, DSP does not use diffusion as a trajectory- or sequence-level planner. Instead, it models high-level decision making as a goal-conditioned generative subgoal policy. This is consistent with the reduced high-level value-noise path analyzed in Proposition~\ref{prop:4.2}. At the same time, classifier-free guidance enables direct and controllable goal-directed subgoal selection at inference time, while Proposition~\ref{prop:5.1} explains the implicit advantage-weighted bias induced by this guidance.

\section{Dexterous Manipulation Evaluation}
\label{app:dexterous_manipulation}

We evaluate DSP and HIQL on two bimanual dexterous-manipulation tasks from Bi-DexHands~\citep{Bi-DexHands}: \texttt{ShadowHandOver} and \texttt{ShadowHandCatchUnderarm}. Using the official collection interface, we construct a fixed offline dataset of $10^6$ transitions per task with PPO. Both methods are trained on identical datasets under the same OGBench-style GCRL protocol. HIQL and DSP results are success rates (\%) reported as means $\pm$ standard deviations over five seeds. PPO is shown only as a data-collection reference, not as an offline or compute-matched baseline.

The tuples in \autoref{tab:bidexhands_results} report $(d_s,d_a,d_w,d_g)$. The 7D goal $g$ is derived directly from the task-specified target-object pose, while the original 7D waypoint $w$ is a future achieved object pose. To isolate the effect of waypoint dimensionality without changing the final task, we keep the 7D goal fixed and replace the task-space waypoint with a 55/67D task-relevant configuration-space waypoint containing bimanual configuration and object-pose information. Here, $d_w$ denotes the waypoint dimension before method-specific processing: HIQL retains its learned subgoal representation, whereas DSP directly generates in the selected waypoint space.

\begin{table}[ht]
    \centering
    \caption{
        Success rates (\%) on Bi-DexHands.
        PPO denotes the data collector.
        Dimensions are reported as $(d_s,d_a,d_w,d_g)$.
    }
    \label{tab:bidexhands_results}
    \small
    \setlength{\tabcolsep}{4.5pt}
    \begin{tabular}{@{}lcccc@{}}
        \toprule
        Datasets & $(d_s,d_a,d_w,d_g)$
        & PPO & HIQL & DSP \\
        \midrule
        \texttt{ShadowHandOver}
        & $(387,40,7,7)$
        & 46.5 & $56.6\pm4.2$ & $\mathbf{74.2}\pm2.9$ \\

        \texttt{ShadowHandCatchUnderarm}
        & $(411,52,7,7)$
        & 37.2 & $46.0\pm4.1$ & $\mathbf{54.6}\pm3.3$ \\

        \texttt{ShadowHandOver}
        & $(387,40,55,7)$
        & 46.5 & $54.2\pm5.9$ & $\mathbf{67.6}\pm2.4$ \\

        \texttt{ShadowHandCatchUnderarm}
        & $(411,52,67,7)$
        & 37.2 & $41.2\pm3.4$ & $\mathbf{46.4}\pm3.9$ \\
        \bottomrule
    \end{tabular}
\end{table}

With 7D task-space waypoints, DSP achieves higher mean success than HIQL by 17.6 and 8.6 percentage points on \texttt{ShadowHandOver} and \texttt{ShadowHandCatchUnderarm}, respectively. Moving to 55/67D configuration-space waypoints reduces DSP's mean success by 6.6/8.2 points, compared with 2.4/4.8 points for HIQL. Nevertheless, DSP retains higher mean success by 13.4/5.2 points in the two higher-dimensional waypoint settings. These results show that waypoint dimensionality is a genuine challenge for direct DSP generation, while providing initial task-specific evidence that its advantage is not solely attributable to the 7D waypoint representation.

The collected datasets contain executable local behaviors. Accordingly, these experiments compare how DSP and HIQL plan and execute bimanual behavior under identical offline data support, rather than testing whether either method can acquire skills absent from the dataset. The results provide initial evidence beyond OGBench's parallel-jaw manipulation setting, without constituting a comprehensive evaluation of dexterous manipulation.


\end{document}